\newcommand{\Term}{\ensuremath{T}}
\newcommand{\Exs}{\ensuremath{\mathbb{E}}}
\newcommand{\mprob}{\ensuremath{\mathbb{P}}}
\newcommand{\BallRad}{\ensuremath{B}}
\newcommand{\Event}{\ensuremath{\mathcal{E}}}
\newcommand{\real}{\ensuremath{\mathbb{R}}}
\newcommand{\what}{\ensuremath{\widehat{w}}}
\newcommand{\wstar}{\ensuremath{w^\star}}
\newcommand{\LeastSq}{\ensuremath{\mathcal{L}}}
\newcommand{\numobs}{\ensuremath{n}}
\newcommand{\order}{\ensuremath{\mathcal{O}}}
\newcommand{\ordertil}{\ensuremath{\widetilde{\order}}}
\newcommand{\smlinprod}[2]{\ensuremath{\big\langle #1 , \, #2 \big \rangle}}
\def\qlearning{\textscTemporaryFix{S\texorpdfstring{\textsuperscript{3}}{3}Q-learning}}
\def\sqlearning{\textscTemporaryFix{S\texorpdfstring{\textsuperscript{4}}{4}Q-learning}}
\DeclareMathOperator{\TDerror}{TD}
\def\bonus{b}
\def\ControllerCovariance{\Sigmavar}
\newcommand{\Covariance}[1]{\Sigmavar^{(#1)}}
\newcommand{\QExpectedCovariance}[2]{\overline \Sigmavar_{#1}(#2)}
\newcommand{\ExpectedCovariance}[1]{\overline \Sigmavar^{(#1)}}
\def\ControllerPolicy{\pi}
\def\InQfunction{Q}
\newcommand{\Qfunction}[1]{Q^{(#1)}}
\newcommand{\Vfunction}[1]{V^{(#1)}}
\newcommand{\Policy}[1]{\pi^{(#1)}}
\newcommand{\ControllerNSwithces}[2]{s^{(#1)}_{#2}}
\def\ControllerCounter{m}
\def\ControllerTrigger{T}
\def\Qpar{\widehat \theta}
\newcommand{\Qtarget}[1]{\widehat Q^{tar}_{#1}}
\newcommand{\Qbest}[1]{\widehat Q^{\star}_{#1}}
\newcommand{\Vbest}[1]{\widehat V^{\star}_{#1}}
\newcommand{\BestPredictor}[3]{\theta^{#1,#2,#3}}
\def\QparTarget{\widehat \theta^{tar}}
\def\QparBest{\widehat \theta^{\star}}
\def\PolicyCover{\Pi}
\newcommand{\sqlearningPolicyCover}[1]{\Pi^{(#1)}}
\def\ControllerTotalCounter{m_\text{tot}}
\def\nSamplesLevel{n^{\level}}
\def\nSamplesEpoch{n_{\epoch}}
\newcommand{\nSamples}[1]{n^{(#1)}}
\def\level{\ell}
\def\epoch{e}
\def\nEpochs{e_{tot}}
\def\phase{p}
\def\nPhases{p_{tot}}
\def\episode{k}
\def\episodetot{K}
\newcommand{\UncertaintyParam}[1]{\alpha^{(#1)}}
\newcommand{\QUncertaintyParam}[2]{\overline \alpha_{#1}(#2)}
\newcommand{\MyUncertainty}{\mathscr{U}}
\newcommand{\UncertaintyFunction}[1]{\MyUncertainty^{(#1)}}
\newcommand{\ExpectedUncertaintyParam}[1]{\alpha^{(#1)}}
\newcommand{\ExpectedUncertaintyParamNoPhase}[1]{\alpha(#1)}
\def\ControllerUncertaintyParam{\alpha}
\def\startdistribution{\rho}
\newcommand{\CovarianceExpectedIncrement}[1]{\overline M^{(#1)}}
\newcommand{\nSamplesRand}[1]{N^{(#1)}_Q}
\newcommand{\RealV}[1]{V^{\Policy{#1}}}
\newcommand{\EmpiricalLoss}[3]{\widehat {\mathcal L}_{#1}(#2 \mid \mid #3)}
\newcommand{\InControllerTheta}[1]{\theta^{(#1)}}
\newcommand{\bonusnew}[1]{\bonus^{(#1)}}
\newcommand{\ERMerror}[1]{\Uerr^{(#1)}}
\def\Estart{\E_{\MyState \sim \startdistribution}}
\def\MainTerm{\ensuremath{\Term_4} }
\def\noise{\eta}
\def\sump{\sum_{\phase=1}^{\nPhases}}
\def\InformationGain{\widetilde d}
\newcommand{\Regretphase}[1]{\Regret^{(#1)}}
\newcommand{\cn}[1]{\varphi_{n}\left(#1\right)}
\newcommand{\csn}[1]{\varphi_{\sqrt{n}}\left(#1\right)}
\def\deltamaster{\delta_{\text{master}}}
\def\deltaPhase{\delta_{\text{phase}}}
\def\StationaryController{\pi}
\def\Uerr{\MyUncertainty}
\newcommand{\Aerr}{\Delta}
\newcommand{\Comparator}[2]{\Aerr^{#1}_{#2}}
\newcommand{\ControllerTriggeringValue}[1]{L_{\text{Trigger}}(#1)}
\def\TriggeringValue{L_{\text{trig}}}
\def\sumepisode{\sum_{\episode = 1}^\episodetot}
\def\PiIBE{\Pi^{(\text{lin})}}
\def\PiLR{\Pi^{(\text{exploration})}}
\def\QIBE{\Q^{(\text{lin})}}
\def\QLR{\Q^{(\text{all})}}
\def\prm{policy replay memory}
\def\MagicTrigger{\ControllerTrigger_{\text{Trig}}}
\def\cbonus{c}
\def\cboundedopt{c_o}
\def\MyState{S}
\def\MyAction{A}
\def\stateprime{\state'}
\def\actionprime{\action'}
\begin{document}

\begin{center}
  {\bf{\LARGE{
        Stabilizing $Q$-learning with Linear Architectures for Provably
Efficient Learning
}}}

  \vspace*{0.5in}

\begin{tabular}{lcl}
  Andrea Zanette$^\dagger$ && Martin J. Wainwright$^{\dagger, \star}$ \\
  \texttt{zanette@berkeley.edu} && \texttt{wainwrig@berkeley.edu}
\end{tabular}

\vspace*{0.10in}

\begin{tabular}{c}
  Department of Electrical Engineering and Computer
  Sciences$^{\dagger}$ \\
  Department of Statistics$^\star$ \\
  UC Berkeley, Berkeley, CA
\end{tabular}

  \vspace*{0.5in}

  \begin{abstract}
    The $Q$-learning algorithm is a simple and widely-used stochastic
approximation scheme for reinforcement learning, but the basic
protocol can exhibit instability in conjunction with function
approximation.  Such instability can be observed even with linear
function approximation. In practice, tools such as target networks and
experience replay appear to be essential, but the individual
contribution of each of these mechanisms is not well understood
theoretically.  This work proposes an exploration variant of the basic
$Q$-learning protocol with linear function approximation.  Our modular
analysis illustrates the role played by each algorithmic tool that we
adopt: a second order update rule, a set of target networks, and a
mechanism akin to experience replay.  Together, they enable state of
the art regret bounds on linear MDPs while preserving the most
prominent feature of the algorithm, namely a space complexity
independent of the number of step elapsed.  We show that the
performance of the algorithm degrades very gracefully under a novel
and more permissive notion of approximation error.  The algorithm also
exhibits a form of instance-dependence, in that its performance
depends on the ``effective'' feature dimension.

  \end{abstract}  
\end{center}


\section{Introduction}

The $Q$-learning algorithm~\cite{watkins1989learning} is a classical
and widely-used method for estimating optimal $Q$-value functions.  As
a stochastic approximation procedure for solving the Bellman fixed
point equation, it comes with strong convergence guarantees when
applied to tabular Markov decision processes
(e.g.,~\cite{tsitsiklis1994asynchronous,kearns1999finite,even2003learning,wainwright2019stochastic,li2021q}).
When combined with function approximation, however, the basic
$Q$-learning algorithm need not converge, and can exhibit instability.
This challenge has motivated various proposals for stabilizing the
updates.  Among other modifications, experience replay is one
ingredient that seems essential to state-of-the-art performance.  From
a theoretical point of view, however, these mechanisms are not well
understood.  This state of affairs leaves us with the following open
question: is it possible to derive a stable $Q$-learning procedure
with rigorous guarantees for a broad class of problem instances?

On one hand, recent work has unveiled information-theoretic barriers
applicable to any
algorithm~\cite{weisz2020exponential,zanette2020exponential,wang2020statistical,wang2021exponential,weisz2021tensorplan,foster2021offline}.
On the other hand, there exist several MDP models for which
sample-efficient RL is possible.  In particular, a recent line of
papers~\cite{krishnamurthy2016pac,jiang17contextual,sun2018model,zanette2020learning,jin2021bellman,du2021bilinear}
provide analyses of RL procedures for certain MDP classes, and provide
procedures that have polynomial sample complexity, albeit with
non-polynomial computational complexity.

The starting point of this paper is to study $Q$-learning in some
settings in which model-free algorithms\footnote{Sample efficient
learning algorithms have also been obtained for other settings, see
the papers~\cite{ayoub2020model,modi2020sample,modi2021model}.}  admit
polynomial-time implementation.  Examples include the class of
low-rank
MDPs~\cite{jin2020provably,zanette2020frequentist,agarwal2020flambe,agarwal2020pc,zanette2021cautiously},
and various generalizations
thereof~\cite{wang2019optimism,wang2020provably}.  Although the
underlying algorithms are polynomial-time, they can still require
prohibitive amounts of computation and storage in practical settings.
For instance, the memory requirement scales linearly with the amount
of experience collected, which limits its practical applicability.

The $Q$-learning algorithm is popular in applications precisely
because of its low computational complexity, as well as memory
requirements that do not scale with the iteration count.  Thus, we are
led to ask whether it is possible to devise a version of $Q$-learning
that is provably efficient when applied to low-rank MDPs.  We address
this question in the general exploration setting, so that a number of
challenges come into play, including credit assignment, moving
targets, and distribution shift.


\subsection{Our contributions}

The main contribution of this paper is to design and analyze a variant
of the $Q$-learning algorithm that is guaranteed to minimize regret
over the class of low-rank MDPs.  Three main ingredients are key in
our analysis: (1) a second-order update rule for improved statistical
efficiency; a set of target networks~\cite{mnih2015human} to stabilize
the updates, and most importantly, a replay mechanism called
\emph{policy replay}.  This mechanism is similar to experience replay
used in the deep RL literature (e.g.,~\cite{mnih2013playing}).  While
the second-order scheme and the target networks have been used in the
optimization and the RL literature before, the policy replay mechanism
is one key reinforcement learning contribution made in this paper.  It
stabilizes the learning process by eliminating the distribution shift
problem that naturally arises when converging to an optimal
controller.

Taken together, these algorithmic tools yield state-of-the-art regret
bounds on $\horizon$-horizon low-rank MDPs with $\dim$-dimensional
feature representations.  At the same time, they preserve one of the
most important features of $Q$-learning, namely a memory requirement
that---thanks to the policy replay mechanism---grows only
logarithmically with sample size.

We now provide an informal preview of our main result. We consider an
MDP with a finite action space of cardinality $|\ActionSpace|$, and
take (rescaling as needed) the optimal value function to be bounded in
$[0,1]$.  Letting $\nEpisodes$ the number of episodes elapsed, we have
the following:
\begin{theorem*}[Informal statement]
\label{thm:Intro}
There is a $Q$-learning algorithm that achieves the regret upper bound
$\ordertil(\horizon^2 \dim^{3/2} \sqrt{\nEpisodes})$ while using
$\ordertil(\dim^3\horizon^2)$ storage and per-step 
computational complexity $\order(\dim^2|\ActionSpace|)$.
\end{theorem*}
To our knowledge, this is the first regret bound for $Q$-learning with
any function approximator, which makes it the first algorithm with
bounded memory complexity for the considered setting.  The regret
bound is competitive with the state-of-the-art
results~\cite{jin2020provably}, in particular sub-optimal by a factor
of $\horizon$ in the regret bound.

\medskip

In this work, we also introduce a new notion of model
misspecification, one especially well-suited to the analysis of
temporal difference RL algorithms.  It is a much weaker requirement
than the $\ell_\infty$-norm bounds on mis-specification adoped in
prior analyses; instead, it involves the expected off-policy
prediction error.  To the best of our knowledge, this leads to the
mildest form of approximation error control for regret-minimizing
algorithms using temporal differences and function approximation.

Our results are also partially instance-dependent, in the sense that
we obtain faster rates for ``easier problems''.  In particular, we
show that the dimension $\dim$ in \cref{thm:Intro} can (mostly)
replaced by the \emph{effective dimension}, a quantity that can be
much smaller.  We are not aware of instance-dependent results of this
type when the algorithm is \emph{not} provided with side knowledge of
the problem structure.


\subsection{Relation to past work}

There is a long line of past work on $Q$-learning for tabular
problems, with results in both the asymptotic
settings~(e.g.,~\cite{watkins1992q,tsitsiklis1994asynchronous,jaakkola1994convergence,szepesvari1998asymptotic}),
as well as the non-asymptotic setting
(e.g.,~\cite{kearns1999finite,even2003learning,wainwright2019stochastic,li2021q}).
Other work on $Q$-learning in tabular problems has derived regret
bounds that are also near-optimal~\cite{jin2018q,zhang2020almost}.

It is well known that once function approximation is introduced, then
the $Q$-learning algorithm may diverge~\cite{baird1995residual}.  Such
divergence does not occur in certain special cases, including when the
dynamics are restricted to induce similar directions in feature
space~\cite{melo2008analysis}, or the function approximators are
$\ell_\infty$-contractive in an appropriate sense
(e.g.~\cite{gordon1995stable}).  Related results are presented in the
papers~\cite{tsitsiklis1997analysis,perkins2002existence,mehta2009q,liu2020finite,bhandari2018finite,lakshminarayanan2018linear}.
In contrast, our analysis does not impose such conditions.  We also
note that there is some recent analysis of $Q$-learning with deep
neural networks~\cite{fan2020theoretical} that leverages connectionms
to neural fitted
$Q$-iteration~\cite{riedmiller2005neural,munos2008finite}; see also
the papers~\cite{cai2019neural,carvalho2020new}.

Some of the algorithmic techniques used in this work---specifically,
the use of target networks and experience replay---are believed to be
essential to recent empirical successes in reinforcement learning.
Experience replay was introduced by Lin~\cite{lin1992self}, and
popularized more widely by the influential
paper~\cite{mnih2013playing}.  To be clear, our replay mechanism
differs in that it does not store past rewards and transitions; this
fact is essential to maintaining low memory complexity.  Our replay
mechanism is related to the policy cover
mechanism~\cite{agarwal2020pc,zanette2021cautiously}, but differs in
that it needs to store high performance policies, and it is not used
as starting distribution for policies roll-outs.  As for target
networks, they have also been a core component of past empirical
successes~\cite{mnih2015human}.

We note that recent work by Agarwal et al.~\cite{agarwal2021online}
also shows the importance of forms of experience replay, in
establishing a result related to our \cref{thm:Q-learning}.  Our work
shows that experience replay is not needed when the controller is
stationary.  Indeed, our primary contribution is in the exploration
setting (cf. \cref{thm:Sequoia}), whose literature we discuss next.  A
related and concurrent work in the exploration setting is
\cite{liu2022provably}.

To the best of our knowledge, this paper constitutes the first
analysis of an exploratory form of $Q$-learning combined with function
approximation.  It can be compared with the work of Jin et
al.~\cite{jin2020provably}, who proved guarantees for exploration
based on a form of least-squares value iteration (LSVI) with optimism
for the class of low-rank MDPs.  However, their algorithm has a space
complexity that grows linearly with time, and the approximation error
requirements are expressed via sup-norm ($\ell_\infty$) bounds.
Better approximation error requirements with respect to a fixed
comparator are given by policy gradient
methods~\cite{agarwal2020pc,zanette2021cautiously}, whose memory
complexity still grows with the required accuracy\footnote{In the
paper~\cite{agarwal2020pc}, the policy cover grows linearly with the
iteration count while the method~\cite{zanette2021cautiously} needs to
store past trajectories to perform data reuse.}.  Our work shows that
attractive approximation error guarantees are not unique to policy
gradient algorithms: temporal difference methods also inherit
favorable---albeit different---guarantees.  While this has recently
been noted in the offline setting, such guarantees were enabled by a
dataset generated from a stationary
distribution~\cite{xie2021bellman}, as opposed to a reactive
controller~\cite{zanette2021provable}, which is the standard case in
the exploration setting.

Finally, to our knowledge none of algorithms discussed so far inherit
instance-dependent regret bounds while being agnostic to the setting.
The bulk of past instance-dependent results correspond to tabular
problems
(e.g.,~\cite{zanette2019tighter,zanette2019b,simchowitz2019non,yin2021towards,tirinzoni2021fully,al2021adaptive,xu2021fine,wagenmaker2021beyond,yang2021q,
  KhaXiaWaiJor21, XiaKhaWaiJor22}; a few exceptions include the
logarithmic regret bounds given in the paper~\cite{he2021logarithmic}
and the recent paper~\cite{wagenmaker2021firstorder}, as well as some
partially instance-dependent results on kernel
LSTD~\cite{DuaWaiWan21}.  Other studies related to $Q$-learning
include the
papers~\cite{li2022note,yan2022efficacy,shi2022pessimistic,santos2021understanding,xu2021constraints}.


\section{Background and problem formulation}

We begin by providing background and describing some structural
assumptions related to our analysis.

\subsection{Finite-horizon Markov decision proceses}

In this paper, we focus on finite-horizon Markov decision processes;
see the standard
references~\cite{puterman1994markov,bertsekas1996neuro} for more
background and detail. A finite-horizon MDP is specified by a positive
integer $\horizon$, and events take place over a sequence of stages
indexed by the time step $\hstep \in [\horizon] \defeq \{1, \ldots,
\horizon \}$.  The underlying dynamics involve a state space
$\StateSpace$, and are controlled by actions that take values in some
action set $\ActionSpace$.  In the analysis of this paper, the state
space is allowed to be arbitrary (discrete or continuous), but we
restrict to a finite action space.

For each time step $\hstep \in [\horizon]$, there is a reward function
$r_\hstep: \StateSpace \times \ActionSpace \rightarrow \R$, and for
every time step $\hstep$ and state-action pair $(\state, \action)$,
there is a probability transition function $\Pro_\hstep(\cdot \mid
\state, \action)$.  When at horizon $\hstep$, if the agent takes
action $\action$ in state $\state$, it receives a random reward drawn
from a distribution $R_\hstep(\state, \action)$ with mean
$r_\hstep(\state, \action)$, and it then transitions randomly to a
next state $\state'$ drawn from the transition function
$\Pro_\hstep(\cdot \mid \state, \action)$.

A policy $\pi_\hstep$ at stage $\hstep$ is a mapping from the state
space $\StateSpace$ to the action space $\ActionSpace$.  Given a full
policy $\pi = (\pi_1, \ldots, \pi_\horizon)$, the state-action value
function at time step $\hstep$ is given by
\begin{align}
Q^{\pi}_\hstep(\state, \action) & = r_\hstep(\state, \action) +
\E_{\MyState_\ell \sim \pi \mid (\state, \action)} \sum_{\ell = \hstep
  + 1}^{\horizon} r_{\ell}(\MyState_{\ell},
\pi_{\ell}(\MyState_\ell)),
\end{align}
where the expectation is over the trajectories induced by $\pi$ upon
starting from the pair $(\state, \action)$. When we omit the starting
state-action pair $(\state, \action)$, the expectation is intended to
start from a fixed state denoted by $\state_1$.  Any policy is
associated with a value function $V^{\pi}_\hstep(\state) =
Q^{\pi}_\hstep(\state, \pi_\hstep(\state))$, along with a Bellman
evaluation operator
\begin{align*}
\T^{\pi}_\hstep(Q_{h+1})(\state, \action) = r_\hstep(\state, \action)
+ \E_{\MyState' \sim \Pro_\hstep(\state, \action)} \E_{\MyAction' \sim
  \pi} Q_{h+1}(\MyState',\MyAction').
\end{align*}
	
Under some regularity
conditions~\cite{puterman1994markov,shreve1978alternative}, there
always exists an optimal policy $\pi^\star$ whose value and
action-value functions achieve the suprema
\begin{align*}
\Vstar_\hstep(\state) = V^{\pi^\star}_\hstep(\state) = \sup_{\pi}
V^{\pi}_\hstep(\state), \quad \mbox{and} \quad \Qstar_\hstep(\state,
\action ) = Q^{\pi^\star}_\hstep(\state, \action) = \sup_{\pi}
Q^{\pi}_\hstep(\state, \action).
\end{align*}
uniformly over all states and actions.  We use $\E_{\pi}[\phi_\hstep]
\defeq \E_{(\MyState_\hstep, \MyAction_\hstep) \sim \pi}
       [\phi_\hstep(\MyState_\hstep, \MyAction_\hstep)]$ to denote the
       expected feature vector at timestep $\hstep$. \\

We analyze algorithms that produce sequences of policies $\{ \pi^1,
\ldots, \pi^\nEpisodes \}$, and for any such sequence, we define the
regret
\begin{align}
\label{eqn:Sequoia}
\Regret(\nEpisodes) & \defeq \sumepisode
\E_{\state_1\sim\startdistribution}\left( \VstarFH{1} -
\VpiagentFH{1}{\episode} \right)(\state_{1}).
\end{align}
Whenever we have a sequence $n^1,\dots,n^k$ of values we denote with
$n^{1:k} = \sum_{i=1}^k n^i$ their sum.

\subsection{Structural conditions}

Let now us lay out some assumptions on the MDPs and function
approximation schemes.

\subsubsection{Linear function approximations}

For each $\hstep \in [\horizon]$, let $\phi_\hstep: \StateSpace
\times \ActionSpace \mapsto \R^\dim$ be a given feature map.
Throughout this paper, we assume the uniform boundedness condition
\begin{align}
\label{EqnBoundedFeatures}
\sup_{\state, \action} \|\phi_\hstep(\state, \action)\|_2 \leq 1
\qquad \mbox{for all $\hstep \in [\horizon]$.}
\end{align}
For a given parameter vector $\theta_\hstep \in \R^\dim$, define the
function $f_{\hstep,\theta}(\state, \action) \defeq
\smlinprod{\phi_\hstep(\state, \action)}{\theta_\hstep}$.  With a
slight abuse of notation, given a partitioned vector $\theta =
(\theta_1, \ldots, \theta_\horizon) \in (\R^\dim)^\horizon$, we use
the shorthand $f_\theta = (f_{1, \theta_1}, \ldots, f_{\horizon,
  \theta_\horizon})$ for the associated collection of functions.

In this paper, we study algorithms that produce linear functions in
the class
\begin{subequations}
\begin{align}
\label{eqn:FunctionalSpace}
\QIBE & \defeq \Big \{ f_\theta \mid \|\theta_\hstep\|_2 \leq 1 \quad
\mbox{for all $\hstep \in [\horizon]$} \Big \}.
\end{align}
Note that the bounded feature map
condition~\eqref{EqnBoundedFeatures}, in conjunction with the
Cauchy-Schwarz inequality, implies that
\begin{align}
\|f_{\hstep, \theta_\hstep}\|_\infty & = \sup_{\state, \action}
|f_{\hstep, \theta_\hstep}(\state, \action)| \leq 1 \qquad \mbox{for
any $f_\theta \in \QIBE$.}
\end{align}
Consequently, the function class is contained with the larger class of
action-value functions \mbox{$(\state, \action) \mapsto
  Q_\hstep(\state, \action)$} that are uniformly bounded in
sup-norm---more precisely, the class
\begin{align}
\QLR \defeq \{ (Q_1, \ldots Q_\horizon) \mid
\|Q_\hstep\|_\infty \leq 1 \quad \mbox{for all $\hstep \in
[\horizon]$} \}.
\end{align}
\end{subequations}
The definitions above can be specialized for a specific timestep
$\hstep$ in a natural way, in which case we denote the corresponding
function spaces by $\QIBE_\hstep$ and $\QLR_\hstep$.

\subsubsection{Bellman conditions}

Our work covers both the settings with low inherent Bellman error
(e.g., \cite{munos2008finite,zanette2020learning}) as well as low-rank
MDPs (e.g., \cite{yang2020reinforcement,jin2020provably}), which we
introduce next. In both cases we assume that $\QIBE_{\horizon+1} =
\QLR_{\horizon+1} = \{ 0\}$.
\begin{assumption}[Bellman closure]
\label{asm:InherentBellman}
We say that an MDP and a feature representation $\phi$ have \emph{zero
inherent Bellman error} if for each $\hstep \in [\horizon]$ and any
$Q_{\hstep+1} \in \QIBE_{\hstep+1}$, there exists $Q_\hstep \in
\QIBE_{\hstep}$ such that $Q_\hstep = \T_\hstep Q_{\hstep+1}$.
\end{assumption}

\begin{assumption}[Low-Rank]
\label{asm:LowRank}
An MDP is low rank with respect to the feature representation $\phi$
if for each $\hstep \in [\horizon]$, the following holds:
\begin{align*}
\mbox{$\forall Q_{\hstep+1}\in \QLR_{\hstep+1}$, there exists
$Q_\hstep \in \QIBE_\hstep$ s.t.  $Q_\hstep = \T_\hstep
Q_{\hstep+1}$.}
\end{align*}
\end{assumption}
\noindent It can be shown that the class of low-rank MDP models is
strictly contained within the class of MDPs with zero inherent Bellman
error; see the paper~\cite{zanette2020learning} for further details.

\medskip

\myparagraph{Model misspecification} When the representation
conditions do not exactly hold, we need to measure model
misspecification. With this aim, we introduce two definitions of model
misspecification that are appropriate for RL with temporal difference
methods.  The first one measures the violation of
\cref{asm:InherentBellman} with respect to a stationary external
controller, while the second one measures the violation with respect
to \cref{asm:LowRank} when a single stationary controller is not
available.

Before stating the definitions, let us introduce some more notation
and terminology along with their motivation.  Let $\pi$ be a policy
that generates a dataset used to fit a predictor.  Using the data
generated by $\pi$, we will make predictions about a target policy $
\pibar$ which could be arbitrary.  The predictor that we seek should
fit $\T Q'$ where $Q'\in\QIBE$ or $Q'\in\QLR$ depending on whether we
seek to quantify the violation of \cref{asm:InherentBellman} or
\cref{asm:LowRank}, respectively.  Accordingly, define the population
minimizer $\BestPredictor{\pi}{Q'}{\hstep}$ along $\pi$ with $Q'$ as
next state value function as
\begin{align}
\label{eqn:BestPredictor}
\BestPredictor{\pi}{Q'}{\hstep} \defeq \argmin_{\theta \in \B} 
\E_{(\MyState_\hstep,\MyAction_\hstep) \sim \pi}
\Big\{ \innerprod{ \phi_\hstep(\MyState_\hstep,\MyAction_\hstep)}{\theta}
- (\T_\hstep Q')(\MyState_\hstep,\MyAction_\hstep)) \Big\}^2.
\end{align}

Let us now state a definition of model misspecification that measures
the violation with respect to \fullref{asm:InherentBellman} whenever
there exists an external stationary controller $\pi$.  This definition
involves a non-negative error term $\terr \geq 0$ referred to as
\emph{transfer error}.

\begin{definition}[Model Misspecification w.r.t. Bellman Closure]
\label{def:ModelMisspecificationIBE}
An MDP and a feature map $\phi$ are $ \terr$-misspecified with
respect to the Bellman closure condition and the stationary policy $\pi$ if
for any policy\footnote{
When we measure the errors with respect to \cref{asm:InherentBellman},
it would be enough to consider policies $(\pi, \pibar)$ in the class
\begin{align*}
\PiIBE \defeq \Big \{ \pi \mid \state \mapsto \arg \max_\action
\smlinprod{\phi(\state,\action)}{\theta} \mid \|\theta \|_2 \leq 1
\Big \} \cup \{ \pistar \}
\end{align*}
}
 $\pibar$ and action-value function $Q' \in
\QIBE$ the best on policy fit 
$Q_\hstep: (\state,\action) \mapsto 
\innerprod{\phi_\hstep(\state,\action)}{\BestPredictor{\pi}{Q'}{\hstep}} $ 
along $\pi$ satisfies the bound
\begin{align}
\label{eqn:TransferError}
\Big|\sumh
\E_{(\MyState_\hstep, \MyAction_\hstep) \sim \pibar} \Big [
Q_\hstep(\MyState_\hstep, \MyAction_\hstep) - (\T_\hstep
Q'_{\hstep+1})(\MyState_\hstep, \MyAction_\hstep) \Big] \Big| \leq \terr.
\end{align}
\end{definition}
In summary, \cref{def:ModelMisspecificationIBE} measures the average Bellman error
that arises when evaluating the predictor fit on the controller's distribution
along other distributions.
This is a significantly more generous requirement than $\ell_\infty$ model
misspecification, and is algorithm-independent.
Notice that the expectation is
inside the absolute value. 
We conclude by presenting an extension of
\cref{def:ModelMisspecificationIBE}, one that applies to the
exploration setting where there is no single stationary controller
that generates the dataset.
\begin{definition}[Model Misspecification w.r.t. Low Rank]
\label{def:ModelMisspecificationLR}
An MDP and a feature map $\phi$ are $\terr$-misspecified with
respect to the low rank condition if for any two policies\footnote{
When we refer to \cref{asm:LowRank}, it is enough to consider policies in the 
class 
\begin{align*}
\PiLR \defeq \Big \{ \pi \mid \state \mapsto \argmax_\action \left
\{ \smlinprod{\phi(\state,\action)}{\theta} + \alpha \|
\phi(\state,\action) \|_M \right \} \mid \| \theta \|_2 \leq 1,\; 0
\preccurlyeq M \in \R^{\dim\times\dim}, 0 \leq \alpha \in \R
\Big \} \cup \{ \pistar \}.
\end{align*}
} $\pi,\pibar$ and action value function $Q'\in\QLR$, the best on policy fit 
$Q_\hstep: (\state,\action) \mapsto 
\innerprod{\phi_\hstep(\state,\action)}{\BestPredictor{\pi}{Q'}{\hstep}} $ 
satisfies the
bound~\eqref{eqn:TransferError}.
\end{definition}
\noindent The primary distinction
between~\cref{def:ModelMisspecificationIBE}
and~\cref{def:ModelMisspecificationLR} is that the latter needs to
hold when $Q'\in \QLR$ instead of just $Q'\in \QIBE$.


\section{Algorithms}
\label{sec:Qlearning}

This section is devoted to a description of the $Q$-learning
procedures analyzed in this paper.  We begin by providing some
intuition for our algorithms in Section~\ref{SecIntuition}.
Section~\ref{SecS3} is devoted to the description of \emph{Stabilized,
Second-Order, Streaming $Q$-learning} algorithm, or \qlearning{} for
short.  It corresponds to a stabilized and streaming form of
$Q$-learning that estimates the optimal policy based on data drawn
from some fixed (stationary) controller policy.  We use this algorithm
as a building block for the more sophisticated algorithm described in
Section~\ref{sec:Sequoia}, which allows for the data-generating policy
to also change, essential to obtaining an overall scheme with low
regret.  We refer to this procedure as \emph{Sequentially Stabilized
Second-order Streaming $Q$-learning}, or \sqlearning{} for short.

\subsection{Some intuition}
\label{SecIntuition}

Let us begin by providing some intuition for the algorithms that are
proposed and analyzed in this paper.  When the basic form of
$Q$-learning is implemented with linear function approximation, the
updates are performed directly on the parameter $\theta$ associated
with the linear representation.  Upon observing the tuple
$(\state_\hstep, \action_\hstep, \reward_\hstep, \stateprime_\hstep)$,
representing the experienced state, action, reward and successor state
at level $\hstep$, the update rule for a user defined learning rate
$\alpha \in\R$ takes the familiar form
\begin{align}
\label{eqn:BasicQupdate}
\theta_\hstep \leftarrow \theta_\hstep - \alpha \Big[
  \underbrace{\smlinprod{\phi_\hstep(\state_\hstep,
      \action_\hstep)}{\theta_\hstep} - \reward_\hstep -
    \max_{\actionprime} \smlinprod{\phi_{\hstep+1}(\stateprime_\hstep,
      \actionprime)}{\theta_{\hstep+1}}}_ {\text{TD error}} \Big]
\phi_\hstep(\state_\hstep,\action_\hstep).
\end{align}
Although $Q$-learning is a form of stochastic approximation, as are
stochastic gradient methods, the above update is not equivalent to
stochastic gradient.  However, for the purposes of analysis, it is
useful to consider some restrictions under which it can be related to
a stochastic gradient method.

For a moment, let us additionally assume that (a) the next timestep
parameter $\theta_{\hstep+1}$ is never updated, and (b) the tuple
$(\state_\hstep,\action_\hstep,\reward_\hstep,\stateprime_\hstep)$ is
drawn from a stationary distribution.  When these conditions are met,
the update~\eqref{eqn:BasicQupdate} corresponds to a stochastic
gradient update as applied to the squared loss
\begin{align}
\label{eqn:Qloss}
\theta_\hstep \mapsto \E_{(\MyState_\hstep, \MyAction_\hstep,
  \Reward_\hstep, \MyState'_\hstep)} \Big[\underbrace{
    \vphantom{\max_{\actionprime}
      \smlinprod{\phi_{\hstep+1}(\MyState'_\hstep,\actionprime)}{\theta_{\hstep+1}}}
    \smlinprod{\phi_\hstep(\MyState_\hstep,\MyAction_\hstep)}{\theta_\hstep}}_{\text{predictor}}
  - \; \underbrace{\big( \Reward_\hstep + \max_{\actionprime}
    \smlinprod{\phi_{\hstep+1}(\MyState'_\hstep,
      \actionprime)}{\theta_{\hstep+1}} \big)}_{\text{fixed target
      function}} \Big]^2,
\end{align}
where the expectation is over the stationary distribution that
generates the data.  This is an algorithm that we know how to analyze.

With this perspective in place, our high-level idea is to enforce
these two conditions---namely, a fixed target and a stationary
distribution for drawing samples.  However, so as to be able to
estimate an optimal policy while incurring low regret, the next
state-value function and the sampling distribution cannot be ``locked
in'' forever, but instead need to evolve with time.  The core
algorithmic contribution of this paper is the design of a device to
periodically update the next-state value function and the sampling
distribution so as to allow convergence to an optimal controller in a
stable way.  In addition, we use a second-order update in place of the
first-order scheme~\eqref{eqn:BasicQupdate} so as to achieve improved
statistical efficiency.

We first describe an algorithm for the controlled setting, in which
stream of states, actions, rewards and transitions are generated by a
stationary controller.  In this case, only the next-state action value
function needs to be updated periodically, because the distribution
that generates the experience is fixed.  Next, we design a
meta-algorithm that performs exploration while additionally ensuring
that the $Q$-learning update rule is fed with data from a stationary
controller.


\subsection{\qlearning{}}
\label{SecS3}

In this section, we introduce the \emph{Stabilized, Second-Order,
Streaming $Q$-learning} algorithm, or \qlearning{} for short. The
algorithm takes as input a stationary controller policy $\pi$ that
generates a stream of states, actions, rewards and transitions. Target
networks are used to stabilize the value function updates, which are
performed via a second-order update rule for improved statistical
efficiency. This is a streaming algorithm, meaning that each sample is
immediately processed and then discarded.


\medskip

\myparagraph{Learning mechanics} The \qlearning{} algorithm is
detailed in \cref{alg:S3Qlearning}. The algorithm proceeds over a
sequence of epochs, denoted by $\epoch$ in the algorithm.  Each epoch
is handled by the outermost ``while'' loop.  Within each epoch, the
algorithm sequentially updates the target networks\footnote{
In this work we refer to the next-timestep linear approximator as to
`target network' for consistency with some of the $Q$-learning literature.
However, notice that our `networks' are linear.} $\Qtarget{\hstep}$
at each level $\level$ proceeding backward from $\level=\horizon$ to
$\level=1$.  This order of updates ensures that the next-timestep
($\level+1$) target network is always up to date to compute the
bootstrapped $Q$ values needed at level $\level$ to compute the
temporal difference (TD) error (see \cref{line:TDError}
and~\cref{eqn:TDerror}).  When the update has completed in every level
$\level\in [\horizon]$, the target networks are stored in the
predictor $\Qbest{}$, which is the one that the algorithm considers to
be the ``best'' estimate of $\Qstar$. At this point, a new epoch
begins.

Let us now describe the update rule in
\cref{line:TDError,line:UpdateRule}.  At each timestep $\hstep$ the
algorithm observes a tuple of state, action, reward and transition
$(\state_\hstep,\action_\hstep,\reward_\hstep,\state_{\hstep+1})$ and
uses them to update $\Qpar_\hstep$. To be clear, $\Qpar_\hstep$ is
associated to a network different from the target network
$\Qtarget{\hstep}$ for that timestep. To perform the update, the
algorithm first computes the temporal difference error
\begin{align}
\label{eqn:TDerror}
\TDerror_h \defeq \reward_h + \max_{\actionprime}
\Qtarget{\hstep+1}(\state_{\hstep+1},\actionprime) -
\innerprod{\phi_\hstep(\state_h,\action_h))}{\Qpar_h}
\end{align} 
in \cref{line:TDError} and then updates the network parameter
$\Qpar_\hstep$ using a second order update rule (in place of
\cref{eqn:BasicQupdate}) together with the empirical covariance
$\Sigma^{-1}_h$, see \cref{line:UpdateRule}.  Such update rule
effectively minimizes the least-squares criterion~\eqref{eqn:Qloss},
as it coincides with the Sherman-Morrison rank one update.

The stopping condition in line \ref{line:stoppingcond} can be any
arbitrary stopping time; without it, the algorithm will simply keep
running indefinitely.

\begin{algorithm*}[htb]
\caption{\textscTemporaryFix{\qlearning}}
\label{alg:S3Qlearning}
\begin{algorithmic}[1]
\State \textbf{Input:} Controller $\pi$, (optional) stopping condition, (optional) bonus function $\bonus$ 
\State$\Qtarget{\level}(\cdot,\cdot) = 0, \; \forall \level \in [\horizon+1]; \; \epoch = 0 $ \Comment{Initialize target network and epoch counter}   
\While{True}
\State $\epoch = \epoch+1$ \Comment{New epoch begins}
\For{level $\level = H,H-1,\dots,2,1$}
\State $\Qpar_\level = 0; \; \Sigma_\level = \lambdareg\Identity; $ \Comment{Initialize network and covariance}
\For{$\nSamplesLevel = 1,\dots, 2^\epoch$}
\IfThen{\textscTemporaryFix{Stopping Condition}}{\textbf{return} $\Qbest{}$} \label{line:stoppingcond}
\State $\state_1 \sim \startdistribution$ \Comment{Get start state}
\For{timestep $\hstep = 1,2,\dots,\horizon$}
\State  Play $\action_\hstep =\pi_\hstep(\state_h)$ and get $(\reward_h,\state_{h+1}); \; \phi_h \defeq \phi_\hstep(\state_\hstep,\action_\hstep)$\Comment{Play and advance}
\State  $\TDerror_h = \reward_h + \max_{\actionprime}  \Qtarget{\hstep+1}(\state_{\hstep+1},\actionprime) -  \innerprod{\phi_\hstep}{\Qpar_h}$,  \label{line:TDError} \Comment{Compute TD error}
\State $\Qpar_\hstep \leftarrow \Qpar_\hstep + \frac{\Sigma^{-1}_\hstep \phi_h \TDerror_h  }{1 + \| \phi_h \|^2_{\Sigma_\hstep^{-1}}}; \qquad \Sigma^{-1}_h \leftarrow \Sigma^{-1}_h - \frac{\Sigma_{\hstep}^{-1}\phi_h\phi_h^\top\Sigma^{-1}_\hstep}{1+\| \phi_h\|^2_{\Sigma^{-1}_\hstep}}$  \label{line:UpdateRule} \Comment{Update network and covariance}
\EndFor
\EndFor
\State $\QparTarget_\level = \min_{\theta \in \B} \| \theta - \widehat \theta_\level \|^2_{\Sigma_\level}$ \Comment{Project parameter} 
\State $\Qtarget{\level}(\cdot,\cdot)\leftarrow \innerprod{\phi_\level(\cdot,\cdot)}{\QparTarget_\level} $ or $\Qtarget{\level}(\cdot,\cdot)\leftarrow \minbetween{1}{\innerprod{\phi_\level(\cdot,\cdot)}{\QparTarget_\level} + \bonus_\level(\cdot,\cdot)}$\label{line:TargetUpdate} 
\Comment{Update target network}
\EndFor
\State $\Qbest{} \leftarrow \Qtarget{}$ \Comment{Save best approximator}
\label{line:Qbest}
\EndWhile
\end{algorithmic}
\end{algorithm*}

%


\subsection{\texorpdfstring{\sqlearning{}}{Sequoia}}
\label{sec:Sequoia}

When the stream of data is generated by a controller that is
converging to an optimal one---a necessary condition to obtain low
regret---the experience it generates is no longer stationary.  We will
now introduce a simple device, the \emph{policy replay} mechanism,
that allows the controller to evolve with time while ensuring that
there is no distribution shift during the $Q$-learning updates.  It
leads to an algorithm that converges to an optimal controller under
some assumptions.  This is achieved by \emph{sequentially} invoking
\qlearning{} using stationary controllers that are increasingly more
optimal; the resulting algorithm is called \emph{Sequentially
Stabilized Second-order Streaming $Q$-learning}, or \sqlearning{} for
short, and is detailed in \cref{alg:Sequoia}.


\medskip

\myparagraph{Policy replay for experience replay} The policy replay
mechanism generates new experience using past policies.  The past
policies are stored in the \emph{\prm{}} $\PolicyCover \defeq
\{(\pi_\iindex,n_\iindex)\}_{\iindex=1}^{\phase}$, which contains a
set of policies $\pi_\iindex$ associated to a number of samples
$n_\iindex$.  The policy replay mechanism extracts a stationary
mixture policy from $\PolicyCover$, defined as the controller that
plays each policy $\pi_\iindex$ with probability proportional to
$n_\iindex$ for the full episode.  Such mixture policy is taken as
stationary controller to invoke \qlearning{}, along with a suitable
exploration bonus to produce optimistic $Q$-values that guide the
exploration.  The stopping condition to be used in
\cref{line:stoppingcond} in \cref{alg:S3Qlearning} is the number of
trajectories $c\horizon\ControllerTotalCounter$ for an appropriate
constant $c$, see line \cref{line:SequoiaCallsQlearning} in
\cref{alg:Sequoia}.

The policy replay mechanism is similar in purpose to experience replay
where the experience $\{(\state_\iindex,\action_\iindex,
\reward_\iindex,\stateprime_\iindex)\}$ generated so far is stored and
used to retrain the network.  However, unlike experience replay, the
\prm{} does not store the full dataset and instead just contains a
`recipe' for generating a statistically similar dataset by re-playing
past policies.  In this way, the memory complexity does not grow with
the number of iterations beyond a mild logarithmic term, making our
algorithm truly streaming.  And while the policy replay mechanism
requires additional samples, the regret remains well controlled
because the policies in $\PolicyCover$ are progressively more and more
near-optimal: in the limit, the policies in the \prm{} generate
samples with vanishing regret.

Having described the policy replay mechanism, we can now illustrate
how \sqlearning{} conducts exploration using such device.


\medskip

\myparagraph{Learning mechanics} The~\sqlearning{} algorithm proceeds
in phases which are indexed by $\phase$ inside the algorithm. At the
beginning of each phase, the algorithm invokes the~\qlearning{}
subroutine with a controller $\pi_{\text{Control}}$ that is a mixture
policy among those in the current \prm{} $\PolicyCover$, as described
in the prior paragraph.  The \qlearning{} procedure then returns an
optimistic action-value function estimate $\InQfunction{}$ from which
the greedy (optimistic) policy $\ControllerPolicy$ is extracted in
\cref{line:GreedyPolicy} of \cref{alg:Sequoia}.

The \sqlearning{} algorithm then proceeds to collect trajectories from
the greedy policy $\pi$ until ``sufficient progress'' is made. In
order to measure the progress, the agent maintains the accumulator
$\ControllerTrigger$ and updates it in
\cref{line:AccomulatorIncrement}. Once $\ControllerTrigger$ is larger
than a certain value (see \cref{line:NewPolicyTrigger}), the procedure
has made sufficient progress on the current data, so that
$\InQfunction{}$ network should be updated with fresh data. The
triggering condition in~\cref{line:NewPolicyTrigger} is essentially
equivalent to checking that the determinant of the cumulative
covariance has doubled with respect to that of the prior epoch (cf. a
similar condition for linear bandits~\cite{Abbasi11}).  When the
determinant doubles, the agent has acquired sufficient information and
a new policy may be computed.  An important difference here is that
the determinant should refer to the expected cumulative covariance,
which is unknown, and such determinant ratio must thus be estimated
from data; our accumulator $\ControllerTrigger$ performs such task.

In order to update the $\InQfunction{}$ network, \sqlearning{}
constructs a new bonus function and adds the current greedy policy
$\ControllerPolicy$ to the \prm{} (together with the number of
trajectories that should be generated from such policy). A new phase
can now begin with a call to \qlearning{} using a newly constructed,
more optimal controller and smaller bonus function.

\begin{algorithm*}[htb]
\caption{\sqlearning}
\label{alg:Sequoia}
\begin{algorithmic}[1]
\State \textbf{Input:} Bonus function $\bonus$, update trigger
$\MagicTrigger = \Theta(\log \frac{\phase \nEpisodes}{\delta})$
\State
$\PolicyCover_\hstep = \emptyset; \; \forall \hstep \in [\horizon]$
\Comment{Initialize \prm{} }  \For{phase $\phase
  =1,2,\dots$} \State $\ControllerTotalCounter = \sum_{\jindex =1}^{\phase-1}
\ControllerCounter_\jindex$ for $(\ControllerPolicy_\jindex,
\ControllerCounter_\jindex) \in \PolicyCover$ \Comment{Get total \#
  trajectories to simulate} \State $\pi_{\text{Control}} \defeq$ at
the start of the episode play $\pi_\jindex$ with probability $
\ControllerCounter_\jindex/\ControllerTotalCounter, \; \forall \jindex
\in [\phase-1]$ \Comment{Define controller} \State $(\InQfunction,
\ControllerCovariance^{ref} )
\leftarrow$\qlearning$(\pi_{\text{Control}},c\horizon\ControllerTotalCounter,\bonus), \; c \in \R$
\label{line:SequoiaCallsQlearning}
\Comment{Get optimistic $Q$ values} \State$\ControllerPolicy_\hstep
(\cdot)= \argmax_{\action} \InQfunction_\hstep(\cdot,\action), \;
\forall \hstep \in [\horizon]$ \label{line:GreedyPolicy}
\Comment{Extract greedy policy} 
\State $\ControllerCovariance = \ControllerCovariance^{ref}, \; \ControllerCounter = 0,
\;\ControllerTrigger_\hstep=0,\; \forall \hstep \in [\horizon]$
\Comment{Initialize \# trajectories and trigger value}
\Repeat \label{line:NewPolicyRepeat} \State $\state_1 \sim
\startdistribution$\Comment{Get start state} \For{$\hstep =
  1,2,\dots,\horizon$} 
  \State Play $\action =
\ControllerPolicy_\hstep(\state); \; \ControllerCounter = \ControllerCounter + 1$ \Comment{Play and increment counter} \State
$\ControllerTrigger_\hstep \leftarrow
\ControllerTrigger_\hstep + \| \phi_\hstep(\state,\action)
\|^2_{(\ControllerCovariance^{ref}_{\hstep})^{-1}}; \ControllerCovariance_{\hstep} \leftarrow
\ControllerCovariance_{\hstep} + \phi_\hstep(\state,\action)
\phi_\hstep(\state,\action)^\top
$ \label{line:AccomulatorIncrement} \Comment{Increment accumulator and covariance}
  \State
   Get next state $\state^+$; $\state
\leftarrow \state^+$ \Comment{Advance}
\EndFor \Until $\exists \hstep \in [\horizon]$ such
that $\ControllerTrigger_\hstep \geq
\MagicTrigger$ \label{line:NewPolicyTrigger} \State $\PolicyCover
\leftarrow \PolicyCover\cup\{ (\ControllerPolicy,\ControllerCounter)
\}; \; \bonus_\hstep(\cdot,\cdot) = \ControllerUncertaintyParam_\hstep
\| \phi(\cdot,\cdot)
\|_{\ControllerCovariance_\hstep^{-1}}$ \label{line:bonusS4}
\Comment{Update \prm{} and bonus} \EndFor
\end{algorithmic}
\end{algorithm*}


\section{Main results}
\label{SecMain}

We now turn to the statement of our main results, along with
discussion of some of their consequences.  We begin in
Section~\ref{SecBonus} by describing the form of the bonus function
used in our algorithms, along with the effective dimension that
appears in our bounds.  Section~\ref{SecS4Guarantees} is devoted to
our main result---namely, a performance guarantee
for the \sqlearning{} algorithm.  In Section~\ref{SecS3Guarantees},
we elaborate upon the guarantees for the \qlearning{} algorithm
that underlie our main result.


\subsection{Bonus function and effective dimension}
\label{SecBonus}

We first describe the bonus function involved in the algorithm, along
with a notion of effective dimension that arises in the analysis.

\paragraph{Bonus function:}

The bonus function used in phase $\phase$ of the updates takes the
form
\begin{align}
\label{eqn:Bonus}
\bonus_\hstep(\cdot,\cdot) \defeq \ControllerUncertaintyParam_\hstep
\| \phi_\hstep(\cdot,\cdot) \|_{\ControllerCovariance_\hstep^{-1}},
\qquad \mbox{where} \qquad \ControllerUncertaintyParam_\hstep \defeq
\cbonus\Bigg\{ \sqrt{ \dim \log \big(\tfrac{\dim \phase
    \nSamples{1:\phase}}{\delta}\big)} + \sqrt{\lambdareg} \Bigg\}.
\end{align}a
Here $\cbonus > 0$ is a universal constant; we use
$\nSamples{1:\phase} = \sum_{\iindex=1}^{\phase} \nSamples{\iindex}$
to denote the samples used in phases $1$ through $p$; and
$\ControllerCovariance_\hstep$ is a cumulative covariance matrix that
is constructed by the algorithm.

The empirically estimated cumulative covariance
$\ControllerCovariance$ is constructed as follows.  Let
$\pi_{\text{Control}}$ denote the controller used to call \qlearning{}
in phase $\phase-1$, and let $\ControllerPolicy$ denote the greedy
policy used by \sqlearning{} in the same phase.  The cumulative
covariance takes the form
\begin{align}
\ControllerCovariance_\hstep = \underbrace{\vphantom{
    \sum_{\jindex=1}^{\ControllerCounter_\phase}}\sum_{\iindex=1}^{\ControllerTotalCounter}
  \phi_{\iindex\hstep} \phi_{\iindex\hstep}^\top \; + \;
  \lambdareg\Identity}_{\substack{\text{\qlearning}
    \\ \text{covariance} \\ \text{returned in phase $\phase-1$}}} \; +
\; \underbrace{\sum_{\jindex=1}^{\ControllerCounter_\phase}
  \phi_{\jindex\hstep}\phi_{\jindex\hstep}^\top}_{\substack{\text{\sqlearning{}}
    \\ \text{covariance} \\ \text{added in phase $\phase-1$}}}, \qquad
\qquad \text{where } \phi_{\iindex\hstep} \sim \pi_{\text{Control}},
\; \phi_{\jindex\hstep} \sim \ControllerPolicy.
\end{align}
Note that $\ControllerCovariance_\hstep$ is formed by the sum of of
the cumulative covariance matrix returned by \qlearning{} along with
additional terms computed by \sqlearning{} between
\cref{line:NewPolicyRepeat} to \cref{line:NewPolicyTrigger} in phase
$\phase-1$.

Notice that the empirical covariance $\ControllerCovariance_\hstep$ is
estimated de novo within each phase $\phase$, and due to statistical
fluctuations, it does not grow monotonically across phases.
Nonetheless, the covariance and the bonus are two devices to measure
the progress of the algorithm.

\paragraph{Information gain and effective dimension:}
We now define the \emph{effective dimension} $\InformationGain_\hstep$
at time step $\hstep$. It is a scalar quantity that governs the
complexity of the exploration problem, defined as
\begin{align}
\InformationGain_\hstep = \max_{\pi} \log \Big(\det \big(\Identity +
\tfrac{\numobs}{\lambdareg} \, \E_{\phi_\hstep \sim \pi}
     [\phi_\hstep\phi_\hstep^\top] \big) \Big).
\end{align}
We note that this notion has been exploited in past
work~\cite{srinivas2009gaussian,yang2020reinforcement,
  agarwal2020pc,du2021bilinear}.  The information gain can be much
smaller than the dimensionality $\dim$ of the feature vectors $\phi$
---that is, we can have $\InformationGain_\hstep \ll \dim_\hstep$.
This scaling holds, for instance, when the feature moment matrix
$\E_{\phi_\hstep\sim \pi} [\phi_\hstep\phi_\hstep^\top]$ is mostly
concentrated along few directions; see \cref{LemInfoGain} in
Appendix~\ref{SecBoundsInfoGain} for details.


\subsection{Guarantees for \sqlearning{}}
\label{SecS4Guarantees}

We are now ready to present our main result, namely a bound on the
regret incurred by all the policies that generate rollouts, including
those played by the \qlearning{} subroutine when it is called by
\sqlearning{}.  We assume that the bonus function is defined according
to \cref{eqn:Bonus} with an appropriately chosen universal constant.

\begin{theorem}[Performance Bound of \sqlearning{}]
\label{thm:Sequoia}
Consider an MDP that is $\terr$-misspecified w.r.t. Bellman closure
(cf. \cref{def:ModelMisspecificationLR}).  Then for any number of
episodes, there exists an event $\Event_\nEpisodes$ that holds with
probability at least $1-\delta$, and under this event, we have the
following guarantees:
\begin{enumerate}
\item[(a)] The average regret of \sqlearning{} is upper bounded as
\begin{align}
\label{EqnRegretBound}  
\AveRegret(\nEpisodes) & \leq c L \Bigg \{
\frac{\horizon}{\sqrt{\nEpisodes}} \; \Bigg[ \big( \sumh
  \InformationGain_\hstep \big) \times \big( \sumh \big( \dim_\hstep +
  1 \big) \InformationGain_\hstep \big) \Bigg]^{1/2} + \terr \Bigg \}
\qquad \mbox{where $L \defeq \log
  \big(\frac{\dim\nPhases\nEpisodes}{\delta} \big)$.}
\end{align}
\item[(b)] The memory complexity is bounded as
  $\order(L\dim^2\horizon\sum_\hstep \InformationGain_\hstep) =
  \order(L\dim^3\horizon^2)$ while the per-step computational
  complexity is $\order(|\ActionSpace|\dim^2)$.
\end{enumerate}
\end{theorem}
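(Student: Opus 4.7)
The plan is to decompose the total regret of \sqlearning{} into two streams: (i) the trajectories generated inside each \qlearning{} call on \cref{line:SequoiaCallsQlearning} by the mixture controller $\pi_{\text{Control}}$, and (ii) the trajectories generated by the greedy policy $\ControllerPolicy$ in the outer loop between~\cref{line:NewPolicyRepeat} and~\cref{line:NewPolicyTrigger}. The first step is to union-bound the \qlearning{} guarantee established in \cref{SecS3Guarantees} across all $\nPhases$ phases to form a single event $\Event_\nEpisodes$ on which, in every phase $\phase$, the returned action-value function is optimistic up to an $\order(\horizon\terr)$ misspecification slack, i.e., $\E_{\state_1 \sim \startdistribution} \InQfunction_1(\state_1, \ControllerPolicy_1(\state_1)) \geq \E_{\state_1 \sim \startdistribution} V^\star_1(\state_1) - \order(\horizon\terr)$.

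Given optimism, for stream (ii) I would invoke a standard telescoping value-difference argument exploiting the bonus form $\bonus_\hstep = \ControllerUncertaintyParam_\hstep\|\phi_\hstep\|_{\ControllerCovariance_\hstep^{-1}}$, concluding that the per-episode regret of the greedy policy in phase $\phase$ is at most $2\sum_\hstep \E_{\ControllerPolicy}[\bonus^{(\phase)}_\hstep] + \order(\horizon\terr)$. Handling stream (i) is the main subtlety: $\pi_{\text{Control}}$ is a trajectory-level mixture of the past greedy policies in $\PolicyCover$ weighted by counts $\ControllerCounter_j$, so its expected regret equals the count-weighted mean of past greedy regrets, and since the subroutine length is only $c\horizon\ControllerTotalCounter$, a constant multiple of the total prior greedy count, a simple recursion bounds stream (i) by a constant times stream (ii). Hence it suffices to bound stream (ii).

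The triggering condition on~\cref{line:NewPolicyTrigger} then drives both the number of phases and the per-phase aggregate. Via standard martingale concentration, the choice $\MagicTrigger = \Theta(\log(\phase\nEpisodes/\delta))$ guarantees that the empirical accumulator $\ControllerTrigger_\hstep$ tracks its expectation, and by the matrix-determinant/Sherman--Morrison identity this forces the determinant of $\ControllerCovariance_\hstep$ to (approximately) double across consecutive phases, so $\nPhases = \order(L\sum_\hstep \InformationGain_\hstep)$ by the standard information-gain bound (cf.~\cref{LemInfoGain}). Aggregating stream (ii) over all phases and episodes, a Cauchy--Schwarz split of $\sum_{\phase,\episode,\hstep} \ControllerUncertaintyParam^{(\phase)}_\hstep \|\phi_\hstep\|_{(\ControllerCovariance^{(\phase)}_\hstep)^{-1}}$ yields one factor of order $\sum_\hstep (\dim_\hstep+1)\InformationGain_\hstep$ (from $(\ControllerUncertaintyParam^{(\phase)}_\hstep)^2 = \widetilde{\order}(\dim_\hstep)$ multiplied by the total trajectory count) and one factor of order $\sum_\hstep \InformationGain_\hstep$ (from a second application of the elliptical-potential lemma), producing the claimed $\horizon\sqrt{(\sum_\hstep \InformationGain_\hstep)(\sum_\hstep (\dim_\hstep+1)\InformationGain_\hstep)}$ rate; dividing by $\nEpisodes$ yields~\eqref{EqnRegretBound} in part (a).

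Part (b) follows by counting: the \prm{} stores one greedy policy per phase (an $\R^\dim$ parameter plus an $\order(\dim^2)$ covariance at each level $\hstep$), and $\nPhases = \order(L\dim\horizon)$, giving $\order(L\dim^3\horizon^2)$ memory; each step involves an $\order(|\ActionSpace|\dim)$ greedy maximization and an $\order(\dim^2)$ Sherman--Morrison update. The principal anticipated obstacle is propagating the event $\Event_\nEpisodes$ across phases while simultaneously showing that the data-adaptive empirical $\ControllerTrigger_\hstep$ is a faithful proxy for the unknown expected determinant ratio relative to the reference covariance $\ControllerCovariance^{ref}_\hstep$ returned by \qlearning{}---this uniform deviation control is where the specific form of $\MagicTrigger$ and the careful design of $\ControllerCovariance^{ref}_\hstep$ do the essential work.
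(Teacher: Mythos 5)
Your overall architecture matches the paper's proof (decomposition into the replay stream and the greedy stream, optimism from the \qlearning{} guarantee union-bounded over phases, a triggering/determinant-doubling argument, an elliptic-potential bound, and a bound on the number of phases), but there is a genuine gap in how you dispose of stream (i), and it is not cosmetic. You claim that because the subroutine length is $c\horizon\ControllerTotalCounter$, ``a simple recursion bounds stream (i) by a constant times stream (ii).'' This is false: $\ControllerTotalCounter=\sum_{\jindex<\phase}\nSamples{\jindex}$ is the \emph{cumulative} count of all past greedy episodes, so in \emph{every} phase $\phase$ the replay re-incurs roughly $c\horizon$ times the count-weighted regret of \emph{all} previous greedy policies. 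Summed over phases, stream (i) is of order $\horizon\sum_{\phase}\sum_{\jindex<\phase}\nSamples{\jindex}\,\Estart(\Vstar_1-\RealV{\jindex}_1)$, which can exceed stream (ii) by a factor as large as $\horizon\nPhases$, not a constant. This is exactly what the paper's phased-regret lemma (together with a Bernstein martingale bound for the random policy draws from the mixture, a step you also elide) and the subsequent Cauchy--Schwarz step track: the factor $\horizon\nPhases$ enters the squared term, and with $\nPhases\lesssim\sum_\hstep\InformationGain_\hstep$ it is precisely the origin of the $\big(\sum_\hstep\InformationGain_\hstep\big)$ factor and one power of $\horizon$ in the stated rate.

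The consequence is that your final assembly is internally inconsistent. Having (incorrectly) reduced everything to stream (ii), the correct continuation of your own argument would give a bound of order $\sqrt{\horizon\nEpisodes\sum_\hstep(\dim_\hstep+1)\InformationGain_\hstep}$ (up to logs), i.e.\ a strictly stronger statement than the theorem, with no legitimate place for the product structure in~\eqref{EqnRegretBound}. You then reintroduce the missing $\sum_\hstep\InformationGain_\hstep$ factor by attributing it to ``a second application of the elliptical-potential lemma,'' but that step has no valid counterpart: in the paper that factor comes from the bound on the number of phases, which only matters because the replay stream multiplies the per-phase regret by the cumulative past counts. To repair the proposal you need the paper's accounting: keep stream (i) explicitly, write $\nEpisodes = c\horizon\sum_\phase\sum_{\jindex<\phase}\nSamples{\jindex}+\sum_\phase\nSamples{\phase}$, apply Cauchy--Schwarz to the full sum so the $\horizon\nPhases$ factor lands inside the squared value gaps, and only then invoke optimism, the bonus/uncertainty comparison, the elliptic potential, and the phase-count bound. (Two smaller imprecisions: the per-step cost of the greedy maximization is $\order(|\ActionSpace|\dim^2)$ because the bonus $\|\phi_\hstep(\state,\action)\|_{\ControllerCovariance_\hstep^{-1}}$ must be evaluated per action, and your $\nPhases=\order(L\sum_\hstep\InformationGain_\hstep)$ is looser than the paper's $\nPhases\lesssim\sum_\hstep\InformationGain_\hstep/\log(1+\TriggeringValue/8)$, costing extra log factors.)
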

\noindent See Appendix~\ref{SecProofSequoia} for the proof of this
claim.

\medskip

\noindent In absence of model misspecification ($\terr = 0$) and the
special case $\dim_\hstep = \InformationGain_\hstep$ for all $\hstep$,
the worst-case regret bound becomes $\ordertil \big( \horizon^{2}
\sqrt{\dim^3 \nEpisodes} \big)$. Here $\ordertil$ includes constant
and logarithmic factors.


\medskip
\myparagraph{Some comments} When the value function is rescaled to be
in $[0, \horizon]$ instead of the unit interval $[0, 1]$, the regret
bound of \sqlearning{} becomes
$\ordertil(\horizon^{3}\sqrt{\dim^3\nEpisodes})$, which is larger by
only a factor of $\horizon$ relative to the state-of-the-art
$\ordertil(\horizon^{2}\sqrt{\dim^3\nEpisodes})$ bound available for
computationally tractable algorithms~\cite{jin2020provably}.

This slightly sub-optimal sample complexity is counter-balanced by a
number of advantages of \sqlearning{}.  One major benefit is the low
memory footprint, which depends only on $\nEpisodes$ via a logarithmic
factor.  To the best of our knowledge, all previous methods that apply
in this setting need to store the full experience, leading to a memory
requirement scaling at least linearly in the number of episodes
$\nEpisodes$.  For problems with a large number of interactions,
this linear scaling can be prohibitive.

Additionally if the horizon is not very large, the \sqlearning{} bound
might be substantially tighter than the guarantees in Jin et
al.~\cite{jin2020provably}, since it primarily scales with the
effective dimension $\InformationGain$.  To the best of our knowledge,
our result gives the first adaptive and tractable algorithm for this
setting with regret bounds depending on the effective dimension
$\InformationGain$.  In contrast, existing algorithms with regret
bounds in terms of the effective dimension need to know its value to
inherit an improved regret
bound~\cite{agarwal2020pc,yang2020reinforcement}; in this sense, they
are non-adaptive guarantees.

Finally, our approximation error guarantees are new for value-based
exploration algorithms and close some of the gaps with respect to
policy gradient methods.  The approximation error of \sqlearning{}
scales with the worst-case off-policy expected prediction error.  To
the best of our knowledge, temporal difference methods that perform
exploration have only been analyzed with $\ell_\infty$-approximation
guarantees.  Instead, in policy gradient
algorithms~\cite{agarwal2020pc,zanette2021cautiously}, the
approximation error is measured in expectation with respect to an
arbitrary comparator.  Our approximation error depends on the
worst-case (with respect to the policies) approximation error, but the
error is still measured in expectation, and furthermore, the
expectation is inside the absolute value.


\subsection{Guarantees for \qlearning{}}
\label{SecS3Guarantees}

When a stationary controller is available, the \qlearning{} algorithm
can be used to maintain a running estimate of the optimal action value
function $\Qstar$ with a low memory footprint.  This guarantee is of
independent interest: the basic protocol illustrated in
\cref{alg:S3Qlearning} serves as a building block for sophisticated
algorithms, with the \sqlearning{} procedure analyzed in this paper
being one example.  Of course, in the controlled setting the quality
of the value function estimate will also depend on how exploratory the
controller policy is; some notation will be introduced shortly to
quantify this.

Let us define the \emph{expected cumulative covariance matrix} after
$\numobs$ samples from the (controller) policy $\pi$ as
\begin{align}
\label{eqn:Qcov}
\QExpectedCovariance{\hstep}{\numobs} = \numobs \; \Big \{
\E_{\phi_\hstep \sim \pi} \big[ \phi_\hstep \phi_\hstep^\top \big] +
\lambdareg \Identity \Big \},
\end{align}
where $\lambdareg > 0$ is a fixed positive regularization
parameter. Given a stationary controller $\pi$ and a bonus function
$\bonus$, the \qlearning{} algorithm returns a sequence of estimated
$Q$-functions $\Qbest{} = (\Qbest{1}, \ldots, \Qbest{\horizon}) \in
\QIBE$.  In this section, we state some bounds on the value function
error $\Qbest{} - \Qstar$ when the bonus function $\bonus = 0$ and the
violation of \fullref{asm:InherentBellman} is measured according to
\cref{def:ModelMisspecificationIBE}. (We consider the extension to the
setting $\bonus \geq 0$ in~\cref{sec:Qexp}).

Our analysis involves the Bellman error associated with $\Qbest{}$,
defined for each $\hstep \in [\horizon]$ and $(\state, \action)$ as
the discrepancy of $\Qbest{}$ in satisfying the Bellman equations with
$\Qbest{\hstep+1}$ as the next state value function:
\begin{align}
\label{EqnDefnErr}
\err_{\hstep}(\state,\action) & \defeq \Qbest{\hstep}(\state,\action)
- \big( \T_\hstep \Qbest{\hstep+1} \big)(\state,\action).
\end{align}
Our analysis shows how this Bellman error can be bounded in terms of
an uncertainty function and an approximation error.  For a given
integer sample size $\numobs > 1$ and user-defined error probability
$\delta \in (0, 1)$, define the scalar quantity
\begin{align}
\label{EqnDefnAlpha}
\QUncertaintyParam{\hstep}{\numobs, \delta} = c \left \{ \sqrt{ \dim
  \log \left( \tfrac{\dim \numobs \epoch \horizon}{\delta} \right)} +
\sqrt{\lambdareg} \right \},
\end{align}
where $c > 0$ is a universal constant, whose specific value can be
determined via the proof.

Suppose that the algorithm terminates after $\nEpochs$ epochs, and let
$\nEpisodes$ be the total number of trajectories.  For a given
tolerance probability $\deltamaster \in (0,1)$, we define the
uncertainty function
\begin{subequations}
\begin{align}
\label{EqnDefnUncertainty}    
\Uerr_\hstep(\state, \action) & \defeq
\QUncertaintyParam{\hstep}{\numobs^*, \delta^*} \|
\phi_\hstep(\state,\action)\|_{\left(\QExpectedCovariance{\hstep}{\numobs^*}
  \right)^{-1}} \qquad \mbox{where $\numobs^* \defeq
  \frac{\nEpisodes}{4\horizon}$, and $\delta^* \defeq
  \frac{\deltamaster}{2\horizon\nEpochs^2 \dim}$.}
\end{align}
We define the comparator error
\begin{align}
\label{EqnDefnComparator}  
\Comparator{\pi}{\hstep}(\state, \action) & \defeq
(\T_\hstep\Qtarget{\hstep + 1})(\state, \action)-
\smlinprod{\phi_\hstep(\state,
  \action)}{\BestPredictor{\pi}{\Qtarget{\hstep + 1}}{\hstep}}.
\end{align}
\end{subequations}

In order to state the theorem, let $\E_{\pi}$ denote the expectation
over the trajectories $(\MyState_1, \MyAction_1,\dots,
\MyState_\horizon, \MyAction_\horizon)$ induced by following $\pi$
after sampling from the starting distribution $\startdistribution$.

\begin{theorem}[Performance bound for \qlearning{}]
\label{thm:Q-learning}
Consider an MDP that is $\terr$-misspecified w.r.t. Bellman closure
(cf. \cref{def:ModelMisspecificationIBE}).  If the $\qlearning{}$
algorithm is run with the uncertainty
function~\eqref{EqnDefnUncertainty}, then for any episode
$\nEpisodes$, with probability at least $1 - \deltamaster$, it returns
a solution $\Qbest{}$ such that:
\begin{enumerate}
\item[(a)] Its Bellman error function $\err_\hstep \defeq
  \Qbest{\hstep} - \big( \T_\hstep \Qbest{\hstep+1} \big)$ satisfies
  the pointwise bound
\begin{subequations}
\begin{align}
\label{eqn:ErrorBracketIBE}
\biggr| \err_{\hstep}(\state, \action) +
\Comparator{\pi}{\hstep}(\state, \action) \biggr| & \leq
\Uerr_{\hstep}(\state, \action) \qquad \mbox{for each $(\state,
  \action)$.}
\end{align}
\item[(b)] For each $\hstep \in [\horizon]$, the greedy policy
  $\pibar_\hstep(\state) \defeq \argmax_{\action}
  \Qbest{\hstep}(\state,\action)$ satisfies the bounds
  \begin{align}
 \label{eqn:VBracketIBE}
- \E_{\pistar} \sumh \Uerr_{\hstep}(\MyState_\taustep,
\MyAction_\hstep) - \terr \leq \E_{\MyState \sim \startdistribution}
(\Vbest{1} - \Vstar_1)(\MyState) \leq \E_{\pibar} \sumh
\Uerr_{\hstep}(\MyState_\taustep, \MyAction_\taustep) + \terr.
  \end{align}
  \end{subequations}
\end{enumerate}
Furthermore, the memory complexity of the algorithm is bounded as
$\order(\dim^2\horizon)$ and the per-step computational complexity is
bounded as $\order(\dim^2 + |\ActionSpace|\dim)$.
\end{theorem}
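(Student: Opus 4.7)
My plan is to analyze \qlearning{} by freezing attention on one (epoch, level) cell at a time.  Within epoch $\epoch$ and at level $\ell$, the backward sweep has already finalized a target network $\Qtarget{\ell+1}$, the stationary controller $\pi$ generates $n = 2^\epoch$ i.i.d.\ tuples $(\MyState_{\ell,i}, \MyAction_{\ell,i}, r_{\ell,i}, \MyState'_{\ell,i})$ at level $\ell$, and the Sherman-Morrison identity shows that the second-order update in \cref{line:UpdateRule} is precisely the closed-form recursion for online ridge regression with regularization $\lambdareg$.  Consequently the internal state after $n$ steps equals
\begin{equation*}
\widehat\theta_\ell \;=\; \Sigma_\ell^{-1}\sum_{i=1}^{n} \phi_{\ell,i}\, y_{\ell,i}, \qquad \Sigma_\ell \;=\; \lambdareg\Identity + \sum_{i=1}^n \phi_{\ell,i}\phi_{\ell,i}^\top,
\end{equation*}
with scalar response $y_{\ell,i} = r_{\ell,i} + \max_{a'}\Qtarget{\ell+1}(\MyState'_{\ell,i},a')$, and the projection in \cref{line:TargetUpdate} returns $\QparTarget_\ell$ in the unit ball.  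This reduction to ridge regression is the central structural observation: within each cell the problem is i.i.d.\ with a fixed target, which is precisely the setting in which off-the-shelf least-squares concentration applies.

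The core step is then a self-normalized martingale tail bound (Abbasi-Yadkori--P\'al--Szepesv\'ari) applied to the centred noise $\eta_{\ell,i} = y_{\ell,i} - \E[y_{\ell,i}\mid \MyState_{\ell,i},\MyAction_{\ell,i}]$, which is bounded (hence sub-Gaussian) because $\Qtarget{\ell+1}\in \QIBE$ and rewards are bounded.  After union-bounding over $\ell\in[\horizon]$, over epochs $\epoch\leq \nEpochs$, and over a covering net of target parameters in $\QIBE$, this yields
\begin{equation*}
\bigl\|\widehat\theta_\ell - \BestPredictor{\pi}{\Qtarget{\ell+1}}{\ell}\bigr\|_{\Sigma_\ell} \;\leq\; \QUncertaintyParam{\ell}{n,\delta^*} \qquad \text{with probability at least } 1-\deltamaster,
\end{equation*}
matching~\eqref{EqnDefnAlpha}; the projection onto the unit ball is non-expansive in $\Sigma_\ell$-geometry because $\BestPredictor{\pi}{\Qtarget{\ell+1}}{\ell}$ itself lies in the ball, so the same bound transfers to $\QparTarget_\ell$.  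Combined with the algebraic identity
\begin{equation*}
\smlinprod{\phi_\ell(\state,\action)}{\QparTarget_\ell} - (\T_\ell \Qtarget{\ell+1})(\state,\action) \;=\; \smlinprod{\phi_\ell(\state,\action)}{\QparTarget_\ell - \BestPredictor{\pi}{\Qtarget{\ell+1}}{\ell}} - \Comparator{\pi}{\ell}(\state,\action),
\end{equation*}
which is immediate from~\eqref{eqn:BestPredictor} and~\eqref{EqnDefnComparator}, together with Cauchy-Schwarz and the identifications $\Qbest{\ell}(\state,\action)=\smlinprod{\phi_\ell(\state,\action)}{\QparTarget_\ell}$ and $\Qbest{\ell+1}=\Qtarget{\ell+1}$ (from \cref{line:Qbest} and the backward ordering of the sweep), this delivers the pointwise bound $|\err_\ell(\state,\action) + \Comparator{\pi}{\ell}(\state,\action)| \leq \QUncertaintyParam{\ell}{n,\delta^*}\|\phi_\ell(\state,\action)\|_{\Sigma_\ell^{-1}}$.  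A matrix-Bernstein argument shows $\Sigma_\ell \succeq \tfrac12 \QExpectedCovariance{\ell}{n^*}$ once $n = n^* = \nEpisodes/(4\horizon)$ is at least logarithmic in the problem parameters; this converts the random $\Sigma_\ell^{-1}$ into the deterministic $(\QExpectedCovariance{\ell}{n^*})^{-1}$ of~\eqref{EqnDefnUncertainty}, absorbing the $\sqrt{2}$ into the universal constant and yielding~\eqref{eqn:ErrorBracketIBE}.

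Part (b) then follows by Bellman-error telescoping: for any policy $\pi'$ and the greedy $\pibar_\hstep(\state) = \argmax_a \Qbest{\hstep}(\state,a)$,
\begin{equation*}
\Estart\bigl[\Vbest{1}(\MyState) - V^{\pi'}_1(\MyState)\bigr] \;=\; \E_{\pi'}\sumh \err_\hstep(\MyState_\hstep, \MyAction_\hstep) \,+\, \E_{\pi'}\sumh \bigl(\Qbest{\hstep}(\MyState_\hstep,\pibar_\hstep(\MyState_\hstep)) - \Qbest{\hstep}(\MyState_\hstep,\pi'_\hstep(\MyState_\hstep))\bigr),
\end{equation*}
where the second summand vanishes for $\pi' = \pibar$ and is non-negative for $\pi' = \pistar$ by greediness of $\pibar$ w.r.t.\ $\Qbest{}$.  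Taking $\pi' = \pibar$ and using $V^{\pibar}_1 \leq \Vstar_1$, the upper bound becomes $\Vbest{1}-\Vstar_1 \leq \E_{\pibar}\sumh\err_\hstep \leq \E_{\pibar}\sumh\Uerr_\hstep - \E_{\pibar}\sumh\Comparator{\pi}{\hstep}$, and the lower bound is obtained analogously with $\pi' = \pistar$.  \cref{def:ModelMisspecificationIBE} applied with $\pi'\in\{\pistar,\pibar\}\subseteq\PiIBE$ controls $|\E_{\pi'}\sumh\Comparator{\pi}{\hstep}|\leq \terr$ and produces~\eqref{eqn:VBracketIBE}.  The main obstacle I anticipate is the bookkeeping for a uniform union bound across epochs and over the drifting family of target networks $\Qtarget{\ell+1}\in\QIBE$: because the self-normalized bound is target-dependent, it must be discretised over a covering net whose log-cardinality scales as $\dim\log(\dim\nEpochs\nEpisodes/\deltamaster)$, which is exactly what drives the $\sqrt{\dim\log(\cdot)}$ factor in~\eqref{EqnDefnAlpha}.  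The memory and compute bounds are immediate from inspection of \cref{alg:S3Qlearning}: only $\Sigma_\ell^{-1}\in\R^{\dim\times\dim}$ and $\widehat\theta_\ell\in\R^\dim$ are stored per level, for a total of $O(\dim^2\horizon)$ words, and each step performs one rank-one Sherman-Morrison update in $O(\dim^2)$ plus a greedy maximisation in $O(|\ActionSpace|\dim)$.
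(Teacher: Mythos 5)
Your overall skeleton matches the paper's: the Sherman--Morrison equivalence to constrained ridge regression within each (epoch, level) cell, the conversion from empirical to expected covariance, and the telescoping argument for part (b) (including the use of \cref{def:ModelMisspecificationIBE} with $\pibar$ and $\pistar$ to absorb the comparator errors into $\terr$) are all essentially the paper's Steps 1--4. However, your central concentration step has a genuine gap in exactly the regime the theorem is stated for. You invoke a self-normalized martingale bound on the centred noise $\eta_{\ell,i}$ and conclude $\|\widehat\theta_\ell - \BestPredictor{\pi}{\Qtarget{\ell+1}}{\ell}\|_{\Sigma_\ell} \leq \QUncertaintyParam{\ell}{n,\delta^*}$. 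That conclusion requires the responses to follow a well-specified linear model around the comparator. Under $\terr$-misspecification, writing $y_{\ell,i} = \smlinprod{\phi_{\ell,i}}{\BestPredictor{\pi}{\Qtarget{\ell+1}}{\ell}} + \Comparator{\pi}{\ell}(\state_{\ell,i},\action_{\ell,i}) + \eta_{\ell,i}$, the ridge decomposition leaves an additional bias term $\big\|\sum_i \phi_{\ell,i}\,\Comparator{\pi}{\ell}(\state_{\ell,i},\action_{\ell,i})\big\|_{\Sigma_\ell^{-1}}$ that your argument never controls. The paper's misspecification notion is deliberately weak---it only bounds signed expected Bellman errors along evaluation policies, not $\|\Comparator{\pi}{\ell}\|_\infty$ nor even $\E_\pi|\Comparator{\pi}{\ell}|$---so this term cannot be absorbed into the uncertainty parameter; bounding it the usual way would reintroduce exactly the $\ell_\infty$-type error control the theorem is designed to avoid. (Note also that $\E_\pi[\phi\,\Comparator{\pi}{\ell}]$ need not vanish, since $\BestPredictor{\pi}{\Qtarget{\ell+1}}{\ell}$ is a \emph{constrained} minimizer and may sit on the boundary of the ball.) The paper sidesteps this entirely: \cref{lem:Convergence2PopulationMinimizer} bounds the excess risk of the constrained empirical minimizer relative to the constrained population minimizer via an exp-concavity (fast-rate) bound of Mehta, and \cref{lem:Sigma2ERM} converts excess risk into $\|\widehat\theta - \theta^\star\|^2_{(\numobs\E\phi\phi^\top + \lambdareg\Identity)}$ directly in the population-covariance norm, with no realizability assumption and no separate empirical-to-population covariance step. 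Your route is fine when the inherent Bellman error is exactly zero, but the theorem covers the misspecified case, and part (a) is claimed pointwise with the comparator correction in that case.

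Two smaller points. First, the covering net over target parameters is unnecessary: within an epoch the target $\Qtarget{\ell+1}$ is finalized before the fresh $2^\epoch$ trajectories for level $\ell$ are collected, so the regression target is fixed and independent of the new data---this is precisely the benefit of the target-network plus fresh-data design, and the paper's union bound is only over $\hstep$ and the (random) epoch index. Second, you assert $n = \nEpisodes/(4\horizon)$ without argument; this needs the doubling-epoch bookkeeping of \cref{lem:OrderOfUpdates} showing the last completed epoch contains at least a quarter of all trajectories. That part is routine, but it is part of the proof.
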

\noindent See Appendix~\ref{sec:ProofOfS3} for the proof of this
claim.


\section{Discussion}
\label{SecDiscussion}

In this paper, we have introduced several modifications to the basic
$Q$-learning protocol so as to derive an exploratory procedure that
operates with linear function approximation, and is equipped with
performance guarantees while remaining computation and
memory-efficient.  It is natural to ask to what extent these
modifications---the second-order update rule, the use of target
networks and policy replay---are needed in order to obtain such
guarantees.

\medskip

\myparagraph{On the second-order rule} Of the three ingredients, the
second-order update rule only serves to improve the statistical
efficiency.  When the target network and sampling distribution are
fixed, using first-order update rule in~\cref{eqn:BasicQupdate} would
be essentially equivalen to a stochastic gradient update; such updates
would minimizes the loss function~\eqref{eqn:Qloss} with a rate
$1/\sqrt{n}$ instead of the $1/n$ enabled by our second-order updates.
It should be observed that the higher cost and memory of the
second-order rule are not a problem when conducting exploration,
because the main computational bottleneck is the calculation of the
bonus function, while the main memory requirement is due to the policy
replay memory.

Nonetheless, if one is interested purely in the optimization setting,
where neither the replay memory nor the exploration bonuses are
needed, some techniques from variance reduction can be used in
conjunction with a first-order update
rule~\cite{frostig2015competing,li2020root,wainwright2019variance,MouKhaWaiBarJor22}
to lower the computational and space complexity while retaining high
sample efficiency.  We leave this as an interesting direction for
future work.

\medskip

\myparagraph{On the use of target networks} The use of target networks
considerably simplifies the analysis of the algorithm by establishing
a connection with linear regression with a fixed target.  There is no
real downside with adopting target networks, and whether they could be
removed is left as future work.

\medskip

\myparagraph{On the policy replay mechanism} A truly critical
ingredient in this work is the policy replay mechanism, which ensures
that the $Q$-learning updates are performed on data generated by a
stationary controller and with the most recent bonus and value
function estimate.  Without the replay mechanism, the network weights
would be updated with a changing target---recall that the target
networks need to be updated periodically---and under a non-stationary
distribution.  The main issue is that the target networks contain both
statistical errors and bias due to the exploration bonus which decay
with time.  In this case, the high errors present in the target
networks in early phases would hurt the algorithm in later phases.
Discounting early updates by an appropriate learning rate to favor
later updates may seem like a solution, but this can lead to
``catastrophic forgetting'' of past experience because the learning
distribution is non-stationary.  When using linear function
approximation, a concrete consequence is that the algorithm can forget
what it has learned along the directions it played in the early
phases.

Exploration algorithms inspired by Least Square Value Iteration (LSVI)
avoid these issues, because they update the weights of the network by
computing the full least-squares solution using the most recent, and
thus most accurate, target function.  In this way, the next state
value function is as accurate as possible over the full domain, and is
perturbed everywhere by the most recent (and smallest) exploration
bonus, one that truly reflects the current model uncertainty.

Likewise, experience replay would alleviate these issues by
re-training the network using the most recent target network.
Unfortunately, doing so seems to require a replay buffer of size
proportional to all the experience collected so far, making
$Q$-learning no longer a truly streaming algorithm.  The policy replay
memory is a simple solution to such issues, one that preserves the
streaming nature of $Q$-learning.

\medskip

\myparagraph{Future directions} Our work focused exclusively on linear
approximations of $Q$-functions, but some of the underlying ideas are
more generally applicable.  One interesting direction is to extend our
analysis to models with low Eluder
dimension~\cite{wang2020provably,kong2021online}, and to see whether
the regret bound can be improved.  Second, our definition of
approximation error is very permissive, in that it only measures the
expected prediction error.  It would be interesting to understand
whether or not there exist exploration algorithms based on
least-square value iteration (without ``policy replay'') that inherit
similar guarantees.  Finally, this work establishes a partial form of
instance-dependence, in that the results depend on the effective
dimension.  In the simpler tabular setting, the instance dependence of
$Q$-learning has been studied through the lens of local minimax
theory~\cite{KhaXiaWaiJor21, XiaKhaWaiJor22} to obtain completely
sharp instance-dependent guarantees. It would be interesting to
develop similarly sharp analyses in this more general setting with
function approximation.

\section*{Acknowledgment}
This work was partially supported by National Science Foundation FODSI
grant 2023505, DOD ONR Office of Naval Research N00014-21-1-2842,
National Science Foundation DMS grant 2015454, and National Science
Foundation CCF grant 1955450.
Part of this work was completed while Andrea Zanette was visiting
the program Learning and Games at the Simons Institute for the Theory of Computing.
The authors are very grateful to the reviewers, as well as to the meta-reviewer,
for identifying clarity issues present in an earlier draft.


{\small{
\bibliographystyle{plainnat}
\bibliography{rl,Martin}
\medskip
}}

\appendix
\tableofcontents


\section{Proofs for the \qlearning{} algorithm}
\label{sec:ProofOfS3}

This section is devoted to proving the bounds on \qlearning{} stated
in Theorem~\ref{thm:Q-learning}.  At the same time, we also establish
a related result, to be stated momentarily as
Theorem~\ref{thm:Q-learningExp}.  The proofs of both results share a
very similar structure, following the same argument except for the way
in which the violation of \cref{asm:InherentBellman} or
\cref{asm:LowRank} is measured.  More precisely, \cref{thm:Q-learning}
measures misspecification according to $\terr$, which is zero when
\cref{asm:InherentBellman} holds and the bonus is zero.  On the other
hand, \cref{thm:Q-learningExp} measures misspecification using $\terr$
defined according to \cref{def:ModelMisspecificationLR}.  This
quantity is zero under the low-rank assumption
(Assumption~\ref{asm:LowRank}).  Thus, both theorems can be proved
within a common framework, with the only difference being the way in
which $\terr$ is defined.

\medskip

\noindent Let us now state the second result to be proved in this
section.
\begin{theorem}
\label{thm:Q-learningExp}
Consider an MDP that is $\terr$-misspecified w.r.t. the low rank
assumption according to \cref{def:ModelMisspecificationLR}; assume
$\bonus\geq 0$ pointwise.  With the uncertainty
function~\eqref{EqnDefnUncertainty}, for any episode $\nEpisodes$, the
$\qlearning{}$ algorithm returns a solution $\Qbest{}$ with Bellman
error $\err$ such that with probability at least $1-\deltamaster$:
\begin{enumerate}
\item[(a)] The Bellman error function satisfies the pointwise bound
\begin{align}
\label{eqn:ErrorBracketLR}
\min\{ 0, -\Uerr_\hstep + \bonus_\hstep\}(\state, \action) \leq
\big(\err_{\hstep} + \Comparator{\pi}{\hstep} \big)(\state, \action)
\stackrel{}{\leq} \big(\Uerr_{\hstep} + \bonus_\hstep
\big)(\state,\action).
\end{align}
\item[(b)] For each $\hstep \in [\horizon]$, the greedy policy
$\pibar_\hstep(\state) \defeq \argmax_{\action}
\Qbest{\hstep}(\state,\action)$ satisfies the bound
\begin{align}\label{eqn:VBracketLR} 
\E_{\MyState \sim \startdistribution} (\Vbest{1} - \Vstar_1)(\MyState)
\leq \E_{\MyState \sim \startdistribution} (\Vbest{1} - V^{\pibar}_1)(\MyState)
\leq \E_{\pibar} \sumh (\Uerr_{\hstep}+\bonus_\hstep)(\MyState_\taustep,
\MyAction_\taustep) + \terr.
\end{align}
\end{enumerate}
\end{theorem}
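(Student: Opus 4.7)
The plan is to prove Theorem~\ref{thm:Q-learningExp} by reducing the inner loop of \qlearning{} to a regularized least-squares problem with a frozen target function and a stationary data distribution, and then to lift the regression guarantee to a Bellman-error bound and ultimately to a value-function bound via the simulation lemma. The proof will follow the same skeleton as Theorem~\ref{thm:Q-learning}, the essential difference being that the target now lies in the larger class $\QLR$ (because of the clipping $\min\{1,\cdot\}$ and the bonus), so we invoke $\terr$ through Definition~\ref{def:ModelMisspecificationLR} rather than Definition~\ref{def:ModelMisspecificationIBE}.

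The first step is to analyze one pass of the innermost loop at a fixed level $\hstep$ inside a fixed epoch. Because $\pi$ is stationary and $\Qtarget{\hstep+1}$ is held fixed while $\Qpar_\hstep$ is updated, the second-order update in \cref{line:UpdateRule} is exactly the Sherman--Morrison recursion for ridge regression against the target $\reward_\hstep + \max_{\actionprime}\Qtarget{\hstep+1}(\state_{\hstep+1},\actionprime)$. I will apply a self-normalized concentration inequality to obtain, with probability at least $1-\delta^*$, a bound of the form $\|\QparTarget_\hstep - \BestPredictor{\pi}{\Qtarget{\hstep+1}}{\hstep}\|_{\Sigma_\hstep}\leq \QUncertaintyParam{\hstep}{\numobs^*,\delta^*}$; the projection step in \cref{alg:S3Qlearning} preserves this inequality because the population minimizer lies in the unit ball. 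A standard matrix-Bernstein argument then replaces the empirical covariance $\Sigma_\hstep$ by the expected one $\QExpectedCovariance{\hstep}{\numobs^*}$ up to a constant, which is what enters the definition~\eqref{EqnDefnUncertainty} of $\Uerr_\hstep$. A union bound over levels, epochs, and dimension-dependent discretizations (captured by the $\dim\nEpochs$ factor inside $\delta^*$) turns this into a uniform statement.

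The second step converts the parameter bound into a pointwise Bellman-error bound. By Cauchy--Schwarz, $|\langle\phi_\hstep(\state,\action),\QparTarget_\hstep-\BestPredictor{\pi}{\Qtarget{\hstep+1}}{\hstep}\rangle|\leq \Uerr_\hstep(\state,\action)$. Adding and subtracting $\langle\phi_\hstep,\BestPredictor{\pi}{\Qtarget{\hstep+1}}{\hstep}\rangle$ and $\T_\hstep\Qtarget{\hstep+1}$ inside $\err_\hstep+\Comparator^{\pi}_\hstep$ yields
\begin{align*}
\err_\hstep+\Comparator^{\pi}_\hstep = \Qbest{\hstep}-\smlinprod{\phi_\hstep}{\BestPredictor{\pi}{\Qtarget{\hstep+1}}{\hstep}}
= \min\{1,\,\smlinprod{\phi_\hstep}{\QparTarget_\hstep}+\bonus_\hstep\}-\smlinprod{\phi_\hstep}{\BestPredictor{\pi}{\Qtarget{\hstep+1}}{\hstep}}.
\end{align*}
The upper bound $\Uerr_\hstep+\bonus_\hstep$ follows directly by dropping the $\min\{1,\cdot\}$. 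For the lower bound, I will exploit the two representations of the $\min$: on the event $\smlinprod{\phi_\hstep}{\QparTarget_\hstep}+\bonus_\hstep\leq 1$, the clipping is inactive and the residual is at least $-\Uerr_\hstep+\bonus_\hstep$; on the complement, the residual is $1-\smlinprod{\phi_\hstep}{\BestPredictor{\pi}{\Qtarget{\hstep+1}}{\hstep}}\geq 0$ since $\T_\hstep\Qtarget{\hstep+1}\in[0,1]$ and its $L^2(\pi)$-projection onto the span of $\phi_\hstep$ inherits that range. Combining the two cases gives the $\min\{0,-\Uerr_\hstep+\bonus_\hstep\}$ lower bound in \eqref{eqn:ErrorBracketLR}. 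Here is precisely where \cref{def:ModelMisspecificationLR} rather than \cref{def:ModelMisspecificationIBE} is required, because the argument $\Qtarget{\hstep+1}$ of $\T_\hstep$ belongs to $\QLR_{\hstep+1}$ but not necessarily to $\QIBE_{\hstep+1}$ after the clipping-with-bonus step.

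The third step passes from the pointwise bound to the value-function bound via the simulation lemma applied to the greedy policy $\pibar$. Writing
\begin{align*}
\bigl(\Vbest{1}-V^{\pibar}_{1}\bigr)(\state)=\E_{\pibar}\sum_{\hstep=1}^{\horizon}\err_\hstep(\MyState_\hstep,\MyAction_\hstep),
\end{align*}
I substitute $\err_\hstep\leq \Uerr_\hstep+\bonus_\hstep-\Comparator^{\pi}_\hstep$ from part~(a) and invoke \cref{def:ModelMisspecificationLR} to absorb $|\sum_\hstep \E_{\pibar}\Comparator^{\pi}_\hstep|\leq \terr$, yielding the upper inequality of \eqref{eqn:VBracketLR}. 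The leftmost inequality $\Estart(\Vbest{1}-\Vstar_1)\leq \Estart(\Vbest{1}-V^{\pibar}_1)$ follows from a backward induction establishing optimism $\Qbest{\hstep}\geq \Qstar_\hstep$: using part~(a), the induction hypothesis, and that $\bonus_\hstep\geq \Uerr_\hstep$ up to the comparator term (which is what the bonus is calibrated for in \sqlearning), we get $\Qbest{\hstep}\geq \T_\hstep\Qbest{\hstep+1}\geq \T_\hstep\Qstar_{\hstep+1}=\Qstar_\hstep$ pointwise, hence $\Vbest{1}\geq \Vstar_1$. Finally, the memory and compute claims follow by inspection of \cref{alg:S3Qlearning}, since each level stores only $\Qpar_\hstep\in\R^\dim$ and $\Sigma_\hstep^{-1}\in\R^{\dim\times\dim}$ and the per-step work is a rank-one update plus an $\argmax$ over $\ActionSpace$.

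The main technical obstacle will be the lower bound in part~(a): the clipping $\min\{1,\cdot\}$ breaks linearity, and I must argue that the best on-policy linear fit to $\T_\hstep\Qtarget{\hstep+1}$ stays in $[0,1]$ in the relevant regime so that the case analysis above goes through cleanly; this is where the strengthened misspecification class $\QLR$ is essential. The secondary subtlety is ensuring that the union bound over the adaptively chosen sequence of target networks $\Qtarget{\hstep+1}$ is legitimate—this is handled by a covering argument over the parameter ball, which is where the $\dim$-factor inside $\delta^*$ appears.
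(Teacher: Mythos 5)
Your skeleton (streaming updates $=$ constrained ridge regression with a frozen target and stationary data, then Cauchy--Schwarz, then error propagation along $\pibar$) matches the paper's, but three steps as you state them would not go through. The most serious is Step 1: you propose a self-normalized concentration inequality to get $\|\QparTarget_\hstep - \BestPredictor{\pi}{\Qtarget{\hstep+1}}{\hstep}\|_{\Sigma_\hstep}\lesssim \sqrt{\dim\log(\cdot)}$. Under \cref{def:ModelMisspecificationLR} the regression is agnostic: writing $y_\iindex = \smlinprod{\phi_\iindex}{\BestPredictor{\pi}{\Qtarget{\hstep+1}}{\hstep}} + \Comparator{\pi}{\iindex} + \noise_\iindex$, the self-normalized argument controls only the $\noise$ part and leaves a term of the form $\|\sum_\iindex \phi_\iindex \Comparator{\pi}{\iindex}\|_{\Sigma_\hstep^{-1}}$, which can scale like $\sqrt{\numobs}$ times the $L^2(\pi)$ size of the comparator error --- a quantity the paper deliberately does \emph{not} assume small (that is the whole point of the transfer-error notion; the theorem keeps $\Comparator{\pi}{\hstep}$ inside the bracket rather than assuming it negligible). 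The paper avoids this by never comparing to a ``true'' linear parameter: it bounds the \emph{excess risk} of the constrained ERM via an exp-concavity fast-rate bound (Mehta, \cref{lem:Convergence2PopulationMinimizer}) and then converts excess risk into $\|\widehat\theta-\theta^\star\|_{n\E\phi\phi^\top+\lambdareg\Identity}$ through \cref{lem:Sigma2ERM}. Without replacing your tool by an agnostic-learning argument of this type, the claimed inequality~\eqref{EqnMinaBound}-style bound does not follow.

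Two smaller but genuine errors: (i) in your lower-bound case analysis you justify $\smlinprod{\phi_\hstep}{\BestPredictor{\pi}{\Qtarget{\hstep+1}}{\hstep}}\leq 1$ by claiming the $L^2(\pi)$ projection of a $[0,1]$-valued function ``inherits that range,'' which is false in general (projections onto a linear span need not preserve pointwise bounds); the correct and sufficient argument is simply Cauchy--Schwarz with $\|\phi_\hstep\|_2\leq 1$ and $\|\BestPredictor{\pi}{\Qtarget{\hstep+1}}{\hstep}\|_2\leq 1$ (the best predictor is constrained to $\B$), which is exactly how the paper handles the $\min\{1,\cdot\}$ clipping --- so the obstacle you flag as the ``main technical obstacle'' dissolves, but your stated plan for it is wrong. (ii) For the leftmost inequality in~\eqref{eqn:VBracketLR} you invoke optimism via backward induction using $\bonus_\hstep\geq\Uerr_\hstep$; that calibration is \emph{not} available in this theorem, which only assumes $\bonus\geq 0$ pointwise (optimism is established later, in \cref{lem:Optimism}, under the separate event of \cref{lem:BonusBound}). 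The inequality is in fact immediate from $\Vstar_1\geq V^{\pibar}_1$ pointwise, which is all the paper uses. Finally, your covering argument over adaptively chosen targets is unnecessary: each level within an epoch uses fresh trajectories, so the target $\Qtarget{\hstep+1}$ is fixed conditionally on the data used at level $\hstep$, and a union bound over levels and epochs suffices.
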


\medskip

\myparagraph{Proof } For each epoch, the argument can be broken into
four steps.
\begin{itemize}
\item[Step 1:] First, we show that for any level
$\level \in [\horizon]$, the second-order update
rule~\eqref{line:UpdateRule} produces the same iterates as
least-squares regression would.
\item[Step 2:] Second, we show for any level
$\level \in [\horizon]$, learned predictor $\Qbest{}$ uses at least
$\sim \frac{1}{\horizon}$ of the total data.
\item[Step 3:] Third, we bound the least-square prediction error
under either \cref{asm:InherentBellman} or \cref{asm:LowRank}.  This
analysis controls the error made by the algorithm at each level
$\level \in [\horizon]$ during the epoch under consideration.
\item[Step 4:] The final step is to compute how the
least-square errors propagate and accumulate through timesteps,
thereby leading to the final performance bound in terms of the learned
action value function $\Qbest{}$.
\end{itemize}

\myparagraph{Notation} Let us summarize here some notation for
convenient reference.  We say that the algorithm \emph{has completed
learning} at level $\level$ in epoch $\epoch$ if $\nSamplesLevel =
2^e$, i.e., when the loop over $\nSamplesLevel$ has terminated.  We
indicate with $\nSamplesEpoch = 2^\epoch$ the number of samples
allocated in the epoch $\epoch$.

Let $\{ (\state_\iindex, \action_\iindex, \reward_\iindex,
\stateprime_\iindex) \}_{\iindex=1}^{\nSamplesEpoch}$ be the samples
acquired at level $\level$ in epoch $\epoch$.  For a parameter vector
$\theta$ and a next-state action-value function $Q'$, define the
$(\level, \epoch)$-empirical loss as
\begin{align}
\EmpiricalLoss{\level \epoch}{\theta}{Q'} & \defeq
\sum_{\iindex=1}^{\nSamplesEpoch} \Big [
  \smlinprod{\phi_\level(\state_\iindex,\action_\iindex)}{\theta} -
  r_\iindex - \max_{\actionprime} Q'(\stateprime_{\iindex},
  \actionprime) \Big]^2 + \lambdareg \| \theta \|_2^2,
\end{align}
where $\lambdareg > 0$ is a given regularization parameter.  With a
minor overload of notation, Recalling the class of linear functions
$\QIBE$ from equation~\eqref{eqn:FunctionalSpace}, we define
\begin{align}
Q_{\min} = \argmin_{Q \in \QIBE} \EmpiricalLoss{\level\epoch}{Q}{Q'}
\end{align}
if $Q_{\min}$ can be written as $Q_{\min}:
(\state, \action) \mapsto \smlinprod{\phi(\state, \action)}{\theta_{\min}}$,
where $\theta_{\min} = \arg \min_{\|\theta\|_2 \leq
1} \EmpiricalLoss{\level \epoch}{\theta}{Q'}$.


\subsection{Main argument}

We now proceed to the core of the argument.  When the algorithm
terminates at the evaluation episode $\nEpisodes$, it returns the
predictor $\Qbest{}$. For the rest of the proof, we let $\nEpochs$ be
the epoch in which $\Qbest{}$ was last updated upon termination of the
algorithm.  Moreover, our proof makes use of three auxiliary lemmas,
which we begin by stating.

\begin{lemma}[Equivalence with Least-Squares]
\label{lem:EquivalenceWithLeastSquares}
Upon completion of level $\level$ within epoch $\epoch$, the
\qlearning{} algorithm returns a parameter vector $\QparTarget_{\level
  \epoch}$ such that
\begin{align}
\QparTarget_{\level\epoch} & = \arg \min_{\| \theta \|_2 \leq 1} \left
\{ \EmpiricalLoss{\level\epoch}{\theta}{\Qtarget{\level+1,\epoch}}
\right \}.
\end{align}
\end{lemma}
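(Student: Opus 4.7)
The plan is to decompose the lemma into two pieces: (i) show that the Sherman-Morrison style inner loop in lines \ref{line:TDError}-\ref{line:UpdateRule} produces, upon completion at level $\level$ in epoch $\epoch$, exactly the \emph{unconstrained} ridge regression solution
$$
\widehat{\theta}_\level \;=\; \Sigma_\level^{-1} \sum_{\iindex=1}^{\nSamplesEpoch} \phi_\iindex\, y_\iindex,
\qquad y_\iindex \defeq r_\iindex + \max_{\actionprime}\Qtarget{\level+1,\epoch}(\stateprime_\iindex,\actionprime),
$$
where $\Sigma_\level = \lambdareg I + \sum_\iindex \phi_\iindex \phi_\iindex^\top$; and then (ii) show that the projection step $\QparTarget_\level = \arg\min_{\theta \in \B}\|\theta - \widehat{\theta}_\level\|_{\Sigma_\level}^2$ yields the constrained minimizer of $\EmpiricalLoss{\level\epoch}{\cdot}{\Qtarget{\level+1,\epoch}}$.

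For step (i), the key observation is that because levels are processed backward from $\level=\horizon$ down to $\level=1$, the target $\Qtarget{\level+1,\epoch}$ is frozen throughout the inner loop at level $\level$; hence the regression targets $y_\iindex$ are genuine scalars that do not depend on $\Qpar_\level$. This turns the inner loop into the classical recursive least squares (RLS) recursion with fixed targets. I would prove by induction on $\iindex = 1,\ldots,\nSamplesEpoch$ that the running iterate satisfies $\Qpar_\level^{(\iindex)} = (\Sigma_\level^{(\iindex)})^{-1}\sum_{j\le \iindex}\phi_j y_j$ and that $\Sigma_\level^{(\iindex)}$ stored by the algorithm equals $\lambdareg I + \sum_{j\le \iindex} \phi_j\phi_j^\top$. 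The base case $\iindex=0$ is immediate from initialization. The inductive step is a direct application of the Sherman-Morrison identity: the covariance update written in the algorithm is the standard Sherman-Morrison inverse for a rank-one update, and one verifies algebraically that
$$
(\Sigma_\level^{(\iindex)})^{-1}\!\!\sum_{j\le \iindex}\phi_j y_j \;=\; \Qpar^{(\iindex-1)} + \frac{(\Sigma_\level^{(\iindex-1)})^{-1}\phi_\iindex\,(y_\iindex - \phi_\iindex^\top \Qpar^{(\iindex-1)})}{1 + \|\phi_\iindex\|^2_{(\Sigma_\level^{(\iindex-1)})^{-1}}},
$$
which is precisely line~\ref{line:UpdateRule} once one recognizes that $\TDerror_h = y_\iindex - \phi_\iindex^\top \Qpar^{(\iindex-1)}$.

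For step (ii), complete the square: since $\EmpiricalLoss{\level\epoch}{\theta}{Q'}$ is a strictly convex quadratic in $\theta$ with Hessian $2\Sigma_\level$ and unconstrained minimizer $\widehat{\theta}_\level$, one has
$$
\EmpiricalLoss{\level\epoch}{\theta}{Q'} \;=\; \|\theta - \widehat{\theta}_\level\|_{\Sigma_\level}^2 + \EmpiricalLoss{\level\epoch}{\widehat{\theta}_\level}{Q'}.
$$
Consequently $\arg\min_{\|\theta\|_2\le 1}\EmpiricalLoss{\level\epoch}{\theta}{Q'} = \arg\min_{\|\theta\|_2\le 1}\|\theta-\widehat{\theta}_\level\|_{\Sigma_\level}^2$, which is exactly the projection performed immediately before line~\ref{line:TargetUpdate}. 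Combining (i) and (ii) gives the claim.

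I do not expect a serious obstacle here: the only subtlety worth stressing in the write-up is that the backward-in-$\level$ ordering of the outer loop is what makes the regression targets $y_\iindex$ truly fixed during the inner loop, so the RLS equivalence with a stationary target applies verbatim; without this ordering the equivalence would fail because $\Qtarget{\level+1,\epoch}$ could change mid-stream.
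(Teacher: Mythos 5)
Your proposal is correct and follows essentially the same route as the paper's proof: an induction showing the Sherman--Morrison updates reproduce the (regularized) unconstrained least-squares solution with the target frozen by the backward level ordering, followed by the observation that the final $\Sigma_\level$-norm projection onto the unit ball solves the constrained problem. The only cosmetic difference is that you make the completing-the-square step explicit, which the paper states more tersely.
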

\noindent See \cref{sec:ProofEquivalenceWithLeastSquares} for the
proof of this claim.

\medskip

\noindent We emphasize that the target network
$\Qtarget{\level+1,\epoch}$ remains fixed throughout a given epoch.
The lemma establishes that the second-order update rule produces the
same solution as a batch least-square regression would.

\medskip
\noindent Our next step is to lower bound the number of samples used
to solve the regression problem:
\begin{lemma}[Number of Samples]
\label{lem:OrderOfUpdates}
Upon termination in episode $\nEpisodes$, the algorithm returns a
parameter sequence $\QparBest{} = \{ \QparBest_\level
\}_{\level=1}^\horizon$ such that
\begin{align}
\label{eqn:QparBest}
\QparBest_{\level} = \argmin_{\| \theta\|_2 \leq 1}
\EmpiricalLoss{\level,\nEpochs}{\theta}{\Qbest{\level+1}} \qquad
\mbox{for each $\level \in [\horizon]$}
\end{align}
and moreover, the level $\ell$ regression problem uses at least
$\nSamplesLevel \geq \frac{\episodetot}{4\horizon}$ of the form
$\{(\state_\iindex, \action_\iindex, \reward_\iindex,
\stateprime_\iindex)\}_{\iindex=1}^{\nSamplesLevel}$.
\end{lemma}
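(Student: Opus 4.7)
My plan is to combine a precise identification of the ``last fully completed epoch'' $\nEpochs$ with the least-squares equivalence of \cref{lem:EquivalenceWithLeastSquares} and a short geometric counting argument on the number of trajectories. First I would observe that $\Qbest{}$ is written only at line~\ref{line:Qbest}, after the backward for-loop over levels terminates; the stopping condition inside the innermost loop cannot itself produce a new value of $\Qbest{}$. Consequently, upon termination, the returned $\Qbest{}$ is exactly the target network saved at the end of the last fully completed epoch $\nEpochs$. Because the level loop runs from $\level=\horizon$ down to $\level=1$, by the time level $\level$ is processed inside epoch $\nEpochs$ the target at level $\level+1$ has already been refreshed within epoch $\nEpochs$, so $\Qbest{\level+1} = \Qtarget{\level+1,\nEpochs}$ for every $\level \in [\horizon]$.

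Applying \cref{lem:EquivalenceWithLeastSquares} to level $\level$ inside epoch $\nEpochs$ then yields
\[
\QparTarget_{\level,\nEpochs} \;=\; \argmin_{\|\theta\|_2\leq 1}\EmpiricalLoss{\level,\nEpochs}{\theta}{\Qtarget{\level+1,\nEpochs}} \;=\; \argmin_{\|\theta\|_2\leq 1}\EmpiricalLoss{\level,\nEpochs}{\theta}{\Qbest{\level+1}},
\]
and since $\QparBest_\level = \QparTarget_{\level,\nEpochs}$ by the above identification of the saved solution, this establishes~\eqref{eqn:QparBest}.

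For the sample-count bound I would read off the nested-loop structure of Algorithm~\ref{alg:S3Qlearning}: each iteration of the outer level loop within an epoch $\epoch$ runs the trajectory loop for $2^{\epoch}$ episodes, so one full epoch $\epoch$ consumes exactly $\horizon\cdot 2^{\epoch}$ trajectories. Summing the geometric series $\sum_{\epoch=1}^{\nEpochs}\horizon\cdot 2^{\epoch} = \horizon(2^{\nEpochs+1}-2)$ accounts for the trajectories played through the last complete epoch, while any partially completed epoch $\nEpochs+1$ contributes at most an additional $\horizon\cdot 2^{\nEpochs+1}$ trajectories before the stopping condition fires. Hence $\episodetot \leq \horizon\cdot(2^{\nEpochs+2}-2) \leq 4\horizon\cdot 2^{\nEpochs}$, which rearranges to $\nSamplesLevel = 2^{\nEpochs} \geq \episodetot/(4\horizon)$ as claimed.

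The main obstacle is purely bookkeeping: making airtight the statement that the saved $\Qbest{}$ always records the outcome of the most recent fully completed epoch under every legal stopping pattern, and correctly pairing each $\QparBest_\level$ with the exact target $\Qbest{\level+1}$ used in its regression. Once that is made careful, the remainder is immediate from \cref{lem:EquivalenceWithLeastSquares} and the geometric-series computation above.
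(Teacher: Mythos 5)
Your proposal is correct and follows essentially the same route as the paper's proof: identify the returned $\Qbest{}$ with the targets saved at the end of the last completed epoch $\nEpochs$, invoke \cref{lem:EquivalenceWithLeastSquares} level by level (using the backward update order so that $\Qbest{\level+1}=\Qtarget{\level+1,\nEpochs}$), and then bound $\episodetot \leq 4\horizon\,2^{\nEpochs}$ by summing the geometric series of per-epoch trajectory counts plus the partial epoch $\nEpochs+1$, which gives $\nSamplesLevel = 2^{\nEpochs} \geq \episodetot/(4\horizon)$. The only difference is cosmetic: you spell out the bookkeeping (where $\Qbest{}$ is written, and the $\horizon(2^{\nEpochs+2}-2)$ accounting) slightly more explicitly than the paper does.
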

\noindent The above lemma states that nearly all the data collected
are used. The proof can be found in~\cref{sec:LearningMechanics}.

\medskip

Given this equivalence to least-squares regression and the lower bound
on the sample size, we can now leverage standard analysis of linear
regression so as to bound the prediction error.  Recall our definition
of the error function
\begin{align}
\err_{\level \epoch}(\state, \action) & \defeq
\Qtarget{\level\epoch}(\state,\action) - \left( \T_\level
\Qtarget{\level+1,\epoch} \right)(\state,\action).
\end{align}

\begin{lemma}[Least Square Error Bounds]
\label{lem:LeastSquareErrorBounds}
The \qlearning{} procedure returns a predictor $\Qbest{\level}$ whose
error $\err_{\level}$ is sandwiched as
\begin{align}
\label{eqn:SymmetricBound}
\min \Big \{ 0, \left(-\Uerr_\level(\state, \action) +
\bonus_{\level}(\state, \action) \right) \Big \} -
\Comparator{\pi}{\level}(\state,\action) \stackrel{(a)}{\leq}
\err_{\level}(\state,\action) \stackrel{(b)}{\leq}
\Uerr_\level(\state, \action) + \bonus_{\level}(\state, \action)
-\Comparator{\pi}{\level}(\state,\action)
\end{align}
with probability at least $1 - \delta$.
\end{lemma}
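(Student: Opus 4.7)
The plan is to reduce the analysis to standard ridge regression and then translate the bounds through the clipping/bonus definition of $\Qbest{\level}$. Invoking Lemma~\ref{lem:EquivalenceWithLeastSquares}, the iterate $\QparBest_{\level}$ is exactly the ridge-regression minimizer of $\EmpiricalLoss{\level,\nEpochs}{\cdot}{\Qbest{\level+1}}$ built from the samples collected under the controller $\pi$ in the final epoch, with target $\Qbest{\level+1}$ held fixed throughout the epoch. Lemma~\ref{lem:OrderOfUpdates} guarantees we have $n_\level \geq \nEpisodes/(4\horizon)$ such samples.

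The key step is the pointwise decomposition, at every $(\state,\action)$,
\begin{align*}
\smlinprod{\phi_\level(\state,\action)}{\QparBest_{\level}} - (\T_\level \Qbest{\level+1})(\state,\action)
\;=\; \underbrace{\smlinprod{\phi_\level(\state,\action)}{\QparBest_{\level} - \BestPredictor{\pi}{\Qbest{\level+1}}{\level}}}_{\text{estimation error}} \; - \; \underbrace{\Comparator{\pi}{\level}(\state,\action)}_{\text{misspecification}}.
\end{align*}
The estimation error is controlled by a self-normalized martingale bound (Abbasi-Yadkori et al.) applied to the ridge residuals against the population minimizer $\BestPredictor{\pi}{\Qbest{\level+1}}{\level}$, followed by Cauchy-Schwarz, yielding $|\smlinprod{\phi}{\QparBest_{\level} - \BestPredictor{\pi}{Q'}{\level}}| \leq c(\sqrt{\dim\log(\dim n/\delta')} + \sqrt{\lambdareg})\,\|\phi\|_{\Sigma_\level^{-1}}$ with probability at least $1-\delta'$. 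The remaining work is to replace the empirical covariance $\Sigma_\level$ by the expected covariance $\QExpectedCovariance{\level}{n^*}$ appearing in $\Uerr_\level$; this is a matrix Bernstein / Chernoff concentration step that shows $\Sigma_\level \succcurlyeq c\, \QExpectedCovariance{\level}{n_\level}$ with high probability, using the sample-size lower bound. A union bound over the $\horizon$ levels and $\nEpochs$ epochs, with the target probability $\delta^* = \deltamaster/(2\horizon\nEpochs^2\dim)$, yields the pointwise bound $|\smlinprod{\phi}{\QparBest_{\level} - \BestPredictor{\pi}{Q'}{\level}}| \leq \Uerr_\level(\state,\action)$.

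It remains to convert this into bounds on $\err_{\level} = \Qbest{\level} - \T_\level \Qbest{\level+1}$. Under the no-bonus update in line~\ref{line:TargetUpdate}, $\Qbest{\level} = \smlinprod{\phi}{\QparBest_{\level}}$ and the symmetric sandwich $|\err_\level + \Comparator{\pi}{\level}| \leq \Uerr_\level$ is immediate, which proves Theorem~\ref{thm:Q-learning}(a). Under the bonus update $\Qbest{\level} = \min\{1, \smlinprod{\phi}{\QparBest_{\level}} + \bonus_\level\}$ the upper bound follows directly from $\Qbest{\level} \leq \smlinprod{\phi}{\QparBest_{\level}} + \bonus_\level$, giving $\err_\level \leq \Uerr_\level + \bonus_\level - \Comparator{\pi}{\level}$. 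For the lower bound I case-split: if the clipping is inactive then $\Qbest{\level} = \smlinprod{\phi}{\QparBest_{\level}} + \bonus_\level$ gives $\err_\level \geq -\Uerr_\level + \bonus_\level - \Comparator{\pi}{\level}$; if the clipping is active then $\Qbest{\level} = 1 \geq \T_\level \Qbest{\level+1}$ (using the boundedness of $\Qbest{\level+1} \in [0,1]$ and hence of $\T_\level \Qbest{\level+1}$), so $\err_\level \geq 0$. Combining the two cases produces the lower bound $\min\{0, -\Uerr_\level + \bonus_\level\} - \Comparator{\pi}{\level}$ exactly as claimed.

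The main obstacle will be the concentration step from empirical to expected covariance, because it must be uniform over the random stopping epoch $\nEpochs$ and all levels $\level$. A matrix Bernstein bound on the i.i.d.\ terms $\phi_\iindex \phi_\iindex^\top$ collected by the stationary controller $\pi$, combined with the sample-size lower bound from Lemma~\ref{lem:OrderOfUpdates} and a union bound that explains the $\nEpochs^2$ factor inside $\delta^*$, is what powers the appearance of $\QExpectedCovariance{\level}{n^*}$ inside $\Uerr_\level$. A secondary subtlety is that misspecification intervenes only through the comparator $\Comparator{\pi}{\level}$, which is the \emph{pointwise} gap between $\T_\level \Qbest{\level+1}$ and its best on-policy linear fit; the aggregate transfer-error bound $\terr$ in Definitions~\ref{def:ModelMisspecificationIBE}--\ref{def:ModelMisspecificationLR} is not used in this lemma and is only invoked downstream when $\Comparator{\pi}{\level}$ gets averaged against a comparator policy to produce the regret guarantee.
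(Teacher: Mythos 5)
Your overall skeleton (reduce to least squares via Lemma~\ref{lem:EquivalenceWithLeastSquares} and Lemma~\ref{lem:OrderOfUpdates}, decompose the error into an estimation term plus the comparator $\Comparator{\pi}{\level}$, then push the bound through the clipping and the bonus) matches the paper, but the core estimation step as you wrote it has a genuine gap. You invoke an Abbasi--Yadkori self-normalized martingale bound ``applied to the ridge residuals against the population minimizer $\BestPredictor{\pi}{\Qtarget{\level+1}}{\level}$.'' That bound requires the residuals to be conditionally mean-zero given the covariates. Here the regression target decomposes as $r_\iindex + \max_{\actionprime}\Qtarget{\level+1}(\stateprime_\iindex,\actionprime) = \smlinprod{\phi_\level(\state_\iindex,\action_\iindex)}{\BestPredictor{\pi}{\Qtarget{\level+1}}{\level}} + \Comparator{\pi}{\iindex} + \noise_\iindex$, and while the Bellman noise $\noise_\iindex$ is conditionally mean-zero, the comparator term $\Comparator{\pi}{\iindex}$ is a deterministic function of $(\state_\iindex,\action_\iindex)$, so the residual is \emph{not} conditionally centered. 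Running the self-normalized analysis anyway leaves you with a bias term of the form $\|\sum_\iindex \phi_\iindex \Comparator{\pi}{\iindex}\|_{\Sigma_\level^{-1}}$, and the only standard way to control it pointwise is through an $\ell_\infty$ bound on $\Comparator{\pi}{\level}$ (as in the LSVI-UCB analysis) --- exactly the kind of assumption this paper is designed to avoid, since its misspecification notion only controls expectations. The paper sidesteps this entirely: Lemma~\ref{lem:Convergence2PopulationMinimizer} is an \emph{agnostic} excess-risk guarantee (Mehta's fast-rate bound for the exp-concave constrained ERM, converted to a parameter bound via \fullref{lem:Sigma2ERM}), which compares $\QparTarget_\level$ directly to the best-in-class constrained minimizer and is stated in the $\QExpectedCovariance{\level}{\numobs^*}$-norm, so no well-specified noise model and no empirical-to-expected covariance concentration step are needed for this lemma; your extra matrix-Bernstein step is a symptom of having chosen the harder route.

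There is also a smaller flaw in your lower bound. In the clipped case you assert $\Qbest{\level}=1 \geq (\T_\level\Qbest{\level+1})(\state,\action)$, hence $\err_\level \geq 0$. Neither half suffices: under the paper's normalization, $\T_\level$ applied to an arbitrary function bounded by one can exceed one (it adds the reward), and even granting $\err_\level\geq 0$, the required lower bound $\min\{0,-\Uerr_\level+\bonus_\level\}-\Comparator{\pi}{\level}$ can be strictly positive when $\Comparator{\pi}{\level}<0$, so $\err_\level\geq 0$ does not imply it. The paper's argument avoids both issues by writing $\Qtarget{\level} = \min\{1,\smlinprod{\phi_\level}{\QparTarget_\level}+\bonus_\level\}$, adding and subtracting $\smlinprod{\phi_\level}{\BestPredictor{\pi}{\QparTarget_{\level+1}}{\level}}$, and using only $\smlinprod{\phi_\level}{\BestPredictor{\pi}{\QparTarget_{\level+1}}{\level}}\leq 1$ (Cauchy--Schwarz with $\|\theta\|_2\le 1$, $\|\phi_\level\|_2\le 1$) to pull that term out of the minimum; the comparator then appears only after subtracting $\T_\level\Qtarget{\level+1}$, with no boundedness claim on the Bellman backup. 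Your final remark that $\terr$ plays no role in this lemma and enters only downstream is correct.
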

\noindent See \cref{sec:proofOfLeastSquareErrorBound} for the proof of
this claim.

\medskip

\cref{lem:LeastSquareErrorBounds} allows
us to quantify the empirical Bellman backup error and
uses \fullref{asm:InherentBellman} or \fullref{asm:LowRank} depending
on the setting (i.e., optimization vs exploration).  In the zero-bonus
setting ($\bonus = 0$), the error term can be bounded symmetrically by
the uncertainty function (see \cref{eqn:SymmetricBound}), which is
always positive.  The $\min$ function on the left hand side arises due
to ``clipping'' the $\Qtarget{\level}$ values, so that adding a bigger
bonus $\bonus_\level$ does not necessarily make $\Qtarget{\level}$
(and its error function) more positive.

Our next step to establishing the bounds~\eqref{eqn:VBracketIBE} is to
analyze how errors propagate.  We begin with the rightmost inequality
in equation~\eqref{eqn:VBracketIBE}.  Since $\Qstar$ is the optimal
$Q$-function, we have the pointwise inequality
\begin{align*}
(\Qbest{} - \Qstar)(\state, \action)
\leq (\Qbest{} - Q^{\pibar})(\state, \action) \quad \mbox{for all $(\state, \action)$ pairs}, 
\end{align*}
valid for any policy $\pibar$; in particular, this bound holds for the
greedy policy $\pibar$ with respect to $\Qbest{}$.  Moreover, for this
greedy policy, we have
\begin{align*}
\Qbest{\hstep}& = \err_\hstep+ \left(\T_\hstep \Qbest{\hstep+1}
\right) = \err_\hstep + \left(\T^{\pibar}_\hstep \Qbest{\hstep+1} \right),
\quad \quad \quad \mbox{for all $\hstep \in [\horizon]$.}
\end{align*}
Since $Q^{\pibar}$ satisfies the Bellman evaluation equations
$Q^{\pibar}_{\hstep} = \left(\T^{\pibar}_\hstep Q^{\pibar}_{\hstep+1}
\right)$ for each $\hstep \in [\horizon]$, the claim now follows, as
$\Qbest{\hstep}$ can be thought of as the action value function of
$\pibar$ on an MDP with dynamics specified by $\T^{\pibar}$, and
reward function consisting of a portion from $\T^{\pibar}$, along with
an additional reward equal to $\err$.

The proof of the left inequality in equation~\eqref{eqn:VBracketIBE}
is similar.  In particular, we observe that
\begin{align*}
\Qbest{\hstep} & = \err_\hstep + \left(\T_\hstep \Qbest{\hstep+1}
\right) \geq \err_\hstep + \left(\T^{\pistar}_\hstep \Qbest{\hstep+1}
\right), \quad \quad \quad \mbox{for all $\hstep \in [\horizon]$.}
\end{align*}
Expanding the definition of error function along the trajectories
identified by $\pistar$ concludes the proof of the
claim.


\subsection{Proof of \texorpdfstring{\fullref{lem:EquivalenceWithLeastSquares}}{}}
\label{sec:ProofEquivalenceWithLeastSquares}

Fix an epoch $\epoch$ and let the current level be $\level$. By
construction, the target network for the next state value function
$\Qtarget{\level+1,\epoch}$ has already been updated in that epoch.
We observe that $\qlearning$ is updating $\Qpar_\hstep$ in a way
equivalent to \cref{alg:ShermanMorrisonUpdate} with $a_\kindex=
\phi_h$ and $b_\kindex = \reward_h + \max_{\actionprime}
\Qtarget{\hstep+1}(\state_{\hstep+1},\actionprime)$ and $\B = \{x \mid
\| x \|_2 \leq 1 \}$.

\begin{algorithm}[H]
\caption{\textscTemporaryFix{Streaming Least Squares}}
\label{alg:ShermanMorrisonUpdate}
\begin{algorithmic}[1]
\State $\Sigmavarsecond_1 = \lambdareg \Identity_{d\times d}$ \State
$\xvarsecond_0 = 0$ \For{$\kindex=1,2,\dots$} \State Receive
$(\avar_\kindex,\bvar_\kindex)$ \State $\xvarsecond_k =
\xvarsecond_{\kindex-1} +
\frac{\Sigmavar_{\kindex}^{-1}\avar_{\kindex}\left(\bvar_\kindex -
  \avar_\kindex^\top\xvarsecond_{\kindex-1} \right)}{1+
  \avar_\kindex^\top \Sigmavar_{\kindex}^{-1}
  \avar_\kindex}$ \label{line:x} \State
$\Sigmavarsecond_{\kindex+1}^{-1} = \Sigmavarsecond_{\kindex}^{-1} -
\frac{\Sigmavarsecond_{\kindex}^{-1} \avar_\kindex\avar^\top_\kindex
  \Sigmavarsecond_{\kindex}^{-1} }{1+ \avar_\kindex^\top
  \Sigmavarsecond_{\kindex}^{-1}
  \avar_\kindex}$ \label{line:Covupdate} \EndFor \\ \Return
$\argmin_{\|\xvar\|_2 \leq 1} \| \xvar-\xvarsecond_\Kindex
\|^2_{\Sigmavarsecond_{\Kindex+1}}$
\end{algorithmic}
\end{algorithm}

Thus, in order to prove \cref{lem:EquivalenceWithLeastSquares}, it
suffices to show that \cref{alg:ShermanMorrisonUpdate} finds the
empirical risk minimizer.  In particular, we claim that given any
sequence of tuples
$(\avar_1,\bvar_1),\dots,(\avar_\kindex,\bvar_\kindex)$, then upon
termination, \cref{alg:ShermanMorrisonUpdate} returns the constrained
minimizer
\begin{align}
  \label{eqn:ConstrainedERM}
  \arg \min_{\|\xvar\|_2 \leq 1} \Big \{ \sumk \big(\xvar^\top
  \avar_\kindex - \bvar_\kindex \big)^2 + \lambdareg \| x \|^2_2 \Big
  \}.
\end{align}

In order to prove this claim, we introduce some helpful notation.
With the initialization \mbox{$\Avar_0 \defeq \sqrt{\lambdareg}
  \Identity_{d\times d}$} and \mbox{$\Bvar_0 \defeq 0$,} define the
recursions
\begin{align}
\Avar_\kindex \defeq \begin{bmatrix}
  \Avar_{\kindex-1}\\ \avar_\kindex^\top
\end{bmatrix}, \qquad \Bvar_\kindex \defeq \begin{bmatrix}
\Bvar_{\kindex-1}\\
\bvar_\kindex
\end{bmatrix}
\end{align}
The associated solution to the normal equations is given by
\mbox{$\xvarprime_{\kindex} \defeq
  (\Avar_{\kindex}^\top\Avar_{\kindex})^{-1}\Avar_{\kindex}^\top
  \Bvar_\kindex$.}  With these definition, we then have the
equivalences
\begin{align}
\xvarprime_\kindex \overset{\text{(i)}}{=} \argmin_{\xvar} \|
\Avar_\kindex \xvar - \Bvar_\kindex\|_2^2 \overset{\text{(ii)}}{=}
\sum_{\iindex=1}^{\kindex} \left(\xvar^\top \avar_\iindex -
\bvar_\iindex \right)^2 + \lambdareg \| x \|^2_2,
\end{align}
where step (i) follows by definition; and step (ii) follows from how
the dataset $(\Avar_0, \bvar_0)$ was constructed, in particular
including the pair
$\{(\sqrt{\lambdareg}e_1,0),\dots,(\sqrt{\lambdareg}e_d,0)\}$ prior to
the $\kindex$ samples
$\{(\avar_\iindex,\bvar_\iindex)\}_{\iindex=1}^\kindex$.

Now we proceed by induction on the index $\kindex$.  For the base case
$\kindex = 0$, observe that $\xvarprime_0 = \xvarsecond_0$ and
$\Avar_{0}^\top \Avar_{0} = \lambdareg \Identity$ by the given
initialization.  Turning to the induction step, let us suppose that
the equalities
\begin{align}
\label{eqn:InductiveHypothesis} 
\xvarprime_{\kindex-1} = \xvarsecond_{\kindex-1}, \quad \mbox{and}
\quad \Avar_{\kindex-1}^\top\Avar_{\kindex-1} & =
\Sigmavarsecond_{\kindex}
\end{align}
hold for a certain $\kindex \geq 1$. We can write the next iterate
$\xvarprime_{\kindex}$ as
\begin{align*}
\xvarprime_{\kindex} =
(\Avar_{\kindex}^\top\Avar_{\kindex})^{-1}\Avar_{\kindex}^\top
\Bvar_\kindex & \overset{}{=} \left(\Avar_{\kindex-1}^\top
\Avar_{\kindex-1} +
\avar_\kindex\avar^\top_\kindex\right)^{-1} \begin{bmatrix}
  \Avar^\top_{\kindex-1} & \avar_\kindex
\end{bmatrix} \begin{bmatrix}
\Bvar_{\kindex-1} \\
\bvar_\kindex	
\end{bmatrix}  \\
& \overset{\text{(iii)}}{=} \Bigg[ \left( \Avar_{\kindex-1}^\top
  \Avar_{\kindex-1}\right)^{-1} - \frac{\left( \Avar_{\kindex-1}^\top
    \Avar_{\kindex-1}\right)^{-1} \avar_\kindex\avar^\top_\kindex
    \left(\Avar_{\kindex-1}^\top \Avar_{\kindex-1}\right)^{-1} }{1+
    \avar_\kindex^\top \left( \Avar_{\kindex-1}^\top
    \Avar_{\kindex-1}\right)^{-1} \avar_\kindex} \Bigg] \left(
\Avar_{\kindex-1}^\top \Bvar_{\kindex-1} +
\avar_{\kindex}\bvar_{\kindex} \right),
\end{align*}
where step (iii) follows from the Sherman Morrison rank-one matrix
inversion formula (e.g.,~\cite{golub2012matrix}). Recalling that
\begin{align*}
\xvarprime_{\kindex-1} = \left(\Avar_{\kindex-1}^\top
\Avar_{\kindex-1}\right)^{-1}\Avar^\top_{\kindex-1}\Bvar_{\kindex-1},
\end{align*}
we find that
\begin{align*}
\xvarprime_{\kindex} & = \xvarprime_{\kindex-1} +
\left(\Avar_{\kindex-1}^\top
\Avar_{\kindex-1}\right)^{-1}\avar_\kindex\bvar_\kindex - \frac{\left(
  \Avar_{\kindex-1}^\top \Avar_{\kindex-1}\right)^{-1}
  \avar_\kindex\avar^\top_\kindex \left(\Avar_{\kindex-1}^\top
  \Avar_{\kindex-1}\right)^{-1} }{1+ \avar_\kindex^\top \left(
  \Avar_{\kindex-1}^\top \Avar_{\kindex-1}\right)^{-1} \avar_\kindex}
\left( \Avar_{\kindex-1}^\top \Bvar_{\kindex-1} +
\avar_{\kindex}\bvar_{\kindex} \right) \\ & = \xvarprime_{\kindex-1} +
\frac{ \left(\Avar_{\kindex-1}^\top
  \Avar_{\kindex-1}\right)^{-1}\avar_\kindex\bvar_\kindex - \left(
  \Avar_{\kindex-1}^\top \Avar_{\kindex-1}\right)^{-1}
  \avar_\kindex\avar^\top_\kindex
  \overbrace{\left(\Avar_{\kindex-1}^\top
    \Avar_{\kindex-1}\right)^{-1} \Avar_{\kindex-1}^\top
    \Bvar_{\kindex-1}}^{\xvarprime_{\kindex-1}} }{1+
  \avar_\kindex^\top \left( \Avar_{\kindex-1}^\top
  \Avar_{\kindex-1}\right)^{-1} \avar_\kindex} \\ & =
\xvarprime_{\kindex-1} + \frac{ \left(\Avar_{\kindex-1}^\top
  \Avar_{\kindex-1}\right)^{-1}\avar_\kindex\left( \bvar_\kindex
  -\avar^\top_\kindex \xvarprime_{\kindex-1} \right)}{1+
  \avar_\kindex^\top \left( \Avar_{\kindex-1}^\top
  \Avar_{\kindex-1}\right)^{-1} \avar_\kindex}.
\end{align*}
Since $\xvarprime_{\kindex-1} = \xvarsecond_{\kindex-1}$ by the
induction hypothesis, it follows that $\xvarprime_{\kindex} =
\xvarsecond_{\kindex}$ as the above display matches \cref{line:x} of
\cref{alg:ShermanMorrisonUpdate}.  Applying the Sherman-Morrison rank
one update, we find that
\begin{align}
(\Sigmavarsecond_{\kindex+1})^{-1} & =
  (\Avar_{\kindex}^\top\Avar_{\kindex})^{-1}.
\end{align}
Thus, we have established that the
equalities~\eqref{eqn:InductiveHypothesis} hold for every $\kindex$.
The proof is concluded upon noticing that $\xvarsecond_k$
is the unconstrained solution to the loss in
\eqref{eqn:ConstrainedERM},
and the projection step in the final line of the algorithm
is thus equivalent to solving \eqref{eqn:ConstrainedERM}
with the constraint $\xvar \in \B$.


\subsection{Proof of \texorpdfstring{\fullref{lem:OrderOfUpdates}}{}}
\label{sec:LearningMechanics}

In every epoch the algorithm updates the target networks in the order
$\Qtarget{\horizon,\epoch}, \ldots,\Qtarget{1,\epoch}$. Since the
algorithm returns \mbox{$\Qbest{\level} = \Qtarget{\level,\nEpochs}$}
for each $\level \in [\horizon]$, the statement~\eqref{eqn:QparBest}
follows by construction of the algorithm
and~\cref{lem:EquivalenceWithLeastSquares}.

It remains to lower bound the number of samples involved in the
computation of $\Qbest{\level}$. By construction, every epoch $\epoch$
uses exactly $2^\epoch \horizon$ trajectories. Let $m$ denote the
number of trajectories in the current (unfinished) epoch $\nEpochs+1$
when we evaluate the algorithm (at the stopping time in the episode
$\nEpisodes$). We must have
\begin{align}
K = \horizon \left( 2^1 + 2^2 + \dots + 2^{\nEpochs} + m \right) \leq
\horizon \left( 2^{\nEpochs+1} + 2^{\nEpochs+1} \right) = 4\horizon
2^{\nEpochs} .
\end{align} 
Since the algorithm in epoch $\nEpochs$ has sampled $\horizon
2^{\nEpochs}$ trajectories, using the above relation we deduce that it
must have used $\horizon 2^{\nEpochs} \geq \frac{\episodetot}{4}$
total episodes, meaning at least $\nSamplesLevel \geq
\frac{\episodetot}{4\horizon}$ in every level to solve the regression
problem~\eqref{eqn:QparBest}, as stated.


\subsection{Proof of \texorpdfstring{\fullref{lem:LeastSquareErrorBounds}}{}}
\label{sec:proofOfLeastSquareErrorBound}
Let $\{ \left( \state_\iindex, \action_\iindex, \reward_\iindex,
\stateprime_\iindex \right) \}_{\iindex =1}^{\nSamplesEpoch}$ be
the sequence of $\nSamplesEpoch$ states, actions, rewards, successor
states acquired while learning level $\level$ in epoch $\epoch$.  We
drop the epoch index $\epoch = \nEpochs$ as this is fixed through the
proof.

Within an epoch, \qlearning{} updates the target networks in the order
$\Qtarget{\horizon}, \Qtarget{\horizon-1},
\ldots, \Qtarget{2}, \Qtarget{1}$.  Thus, when the algorithm 
updates $\Qtarget{\level}$, it must use $\Qtarget{\level+1}$, which
has already been updated in that epoch, to compute the backup
in \cref{line:TDError}. Notice that the next-timestep target
$\Qtarget{\level+1}$ stays fixed while learning at level $\level$.
Observe that regardless of the choice of bonus (so in either the
optimization or exploration setting), we
have \mbox{$\| \Qtarget{\level+1}\|_\infty \leq 1$} by construction.

We introduce the
  shorthand \mbox{$\Comparator{\pi}{\iindex} \defeq \left( \T_\level \Qtarget{\level
  + 1} \right) (\state_\iindex, \action_\iindex)
  - \innerprod{\phi_\level(\state_\iindex, \action_\iindex)
  }{\BestPredictor{\pi}{ \Qtarget{\level + 1}}{\level}}$} for the
  comparator error evaluated at $(\state_\iindex, \action_\iindex)$.
With this shorthand, we can write
\begin{align}
r_\iindex + \max_{\actionprime} \Qtarget{\level+1}
(\stateprime_{\iindex}, \actionprime) = \left(\T_\level
\Qtarget{\level + 1} \right)(\state_\iindex, \action_\iindex) +
\noise_\iindex =
\innerprod{\phi_\level(\state_\iindex, \action_\iindex)}
{\BestPredictor{\pi}{ \Qtarget{\level + 1}}{\level}}
+ \Comparator{\pi}{\iindex} + \noise_\iindex,
\end{align}
where $\noise_\iindex \defeq r_\iindex
+ \max_{\actionprime} \Qtarget{\level+1}
(\stateprime_{\iindex}, \actionprime) - \left(\T_\level
\Qtarget{\level+1} \right)(\state_\iindex, \action_\iindex)$ is the
Bellman noise.  Note that conditioned on
$(\state_\iindex,\action_\iindex)$, the random variable on the left
hand side is bounded in $[-1,+1]$.

Now, to conclude we use the following high-probability error bound on
a perturbed least-squares estimator.  Given a joint distribution $\mu$
over pairs $(X, Y)$, define the constrained least-squares estimate
\begin{subequations}
\begin{align}
\label{eqn:PopulationMinimizer}
\theta^\star \defeq \min_{\|\theta\|_2 \leq 1} \E_{(X, Y)}
\left(\smlinprod{X}{\theta} - Y \right)^2.
\end{align}
Given $\numobs$ i.i.d. samples $(x_i, y_i) \sim \mu$, we define the
empirical version of this estimator
\begin{align}
\label{eqn:EmpiricalMinimizer}
\widehat \theta \defeq \min_{\|\theta\|_2 \leq 1} \frac{1}{\numobs}
\sumi \left( \smlinprod{x_i}{\theta} - y_i \right)^2.
\end{align}
\end{subequations}
The following result bounds the difference between the empirical
and population estimates:
\begin{lemma}[Convergence to Population Minimizer]
\label{lem:Convergence2PopulationMinimizer}
The empirical estimate~\eqref{eqn:EmpiricalMinimizer} satisfies the
bound
\begin{align}
\|\widehat \theta - \theta^\star\|_{\big(n\E_{\mu} xx^\top + \lambda\Identity \big)} & \leq c\Big\{ \sqrt{\dim\log
  \frac{\dim n}{\delta} } + \sqrt{\lambda} \Big\}
\end{align}
with probability at least $1 - \delta$.
\end{lemma}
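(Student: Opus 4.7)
The plan is to combine the basic inequality arising from empirical risk minimization with a self-normalized concentration bound, and then transfer the bound from the empirical covariance to the population covariance via matrix Bernstein. Since both $\widehat\theta$ and $\theta^\star$ lie in the unit ball, the contribution of the $\lambda I$ term in the norm is trivially controlled by the triangle inequality.

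First, I would derive an in-sample bound. Let $\Delta \defeq \widehat\theta - \theta^\star$ and let $r_i \defeq y_i - \smlinprod{x_i}{\theta^\star}$ denote the residuals at the population minimizer. Since $\theta^\star$ is feasible and $\widehat\theta$ minimizes the empirical loss over the unit ball, the basic inequality $\hat f(\widehat\theta) \leq \hat f(\theta^\star)$ expands algebraically to $\|\Delta\|_{\widehat\Sigma_n}^2 \leq 2 \smlinprod{\sum_i r_i x_i}{\Delta}$, where $\widehat\Sigma_n \defeq \sum_i x_i x_i^\top$. Adding $\lambda \|\Delta\|_2^2$ on the left and using the generalized Cauchy--Schwarz inequality $\smlinprod{u}{v} \leq \|u\|_{M^{-1}} \|v\|_M$ with $M = \widehat\Sigma_n + \lambda I$ yields
\begin{equation*}
\|\Delta\|_{\widehat\Sigma_n + \lambda I} \;\leq\; 2 \Big\| \sum_{i=1}^n r_i x_i \Big\|_{(\widehat\Sigma_n + \lambda I)^{-1}} \;+\; 2\sqrt{\lambda},
\end{equation*}
where the $2\sqrt\lambda$ accounts for $\lambda\|\Delta\|_2^2 \leq 4\lambda$ via $\|\widehat\theta\|_2, \|\theta^\star\|_2 \leq 1$.

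Second, I would bound the self-normalized quantity using standard tools (in the spirit of Abbasi-Yadkori et al.). Decompose each residual as $r_i = \E[r_i \mid x_i] + \xi_i$, where $\xi_i$ is a bounded, conditionally zero-mean noise term, and the bias $\E[r_i \mid x_i]$ satisfies the KKT orthogonality relation $\E[r_i x_i] = (\lambda_0/2)\,\theta^\star$ for some KKT multiplier $\lambda_0 \geq 0$. The zero-mean martingale piece is handled by the standard self-normalized concentration inequality, giving $\|\sum_i \xi_i x_i\|_{(\widehat\Sigma_n + \lambda I)^{-1}} \leq c\sqrt{d \log(dn/\delta)}$ with probability $1-\delta$; the bias piece is small because the residuals $r_i$ are bounded in $[-2,2]$ and the population orthogonality condition ensures that $\sum_i \E[r_i \mid x_i] x_i$ concentrates at a rate no worse than that of the martingale term (after a standard covering argument over directions).

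Third, I would transfer from the empirical to the population covariance. By matrix Bernstein applied to the bounded rank-one random matrices $x_i x_i^\top$, with probability $1-\delta$ we have $n\,\E[XX^\top] + \lambda I \preceq 2(\widehat\Sigma_n + \lambda I)$ provided $\lambda$ exceeds a $\log(d/\delta)$ factor; this immediately converts the previous display into the advertised bound $\|\Delta\|_{n\E[XX^\top] + \lambda I} \leq c\{\sqrt{d\log(dn/\delta)} + \sqrt\lambda\}$ by a union bound over the two high-probability events.

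The main obstacle I anticipate is not the martingale part (which is textbook) but the handling of the bias $\E[r_i \mid x_i]$: because the lemma is stated without any realizability assumption and $\theta^\star$ is only a constrained best linear predictor, the residuals need not be conditionally centered, so a direct appeal to self-normalized martingale bounds is not quite enough. The cleanest fix is to exploit the KKT optimality of $\theta^\star$ on the unit ball, which ensures the bias vector $\E[r X]$ is aligned with $\theta^\star$ itself, and hence its contribution to $\smlinprod{\sum_i r_i x_i}{\Delta}$ is controlled by the boundedness of $\Delta$; this is where the $\sqrt{\lambda}$ additive term in the final bound absorbs the constrained-fit bias.
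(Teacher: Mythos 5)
Your overall route---basic inequality, self-normalized concentration, then an empirical-to-population covariance transfer---is genuinely different from the paper's, and its weak link is exactly the step you flag: the misspecification bias. Two specific claims in your bias-handling do not hold as stated. First, the assertion that $\sum_i \E[r_i\mid x_i]\,x_i$ ``concentrates at a rate no worse than the martingale term'': this vector does not concentrate around zero, since its mean is $n\,\E[rX] = n(\lambda_0/2)\,\theta^\star$, which can be of order $n$; so no bound of size $\sqrt{\dim\log(\dim n/\delta)}$ on its $(\widehat\Sigma_n+\lambda\Identity)^{-1}$-norm can be true in general. What saves the argument is not smallness of this vector but the sign of its pairing with $\Delta$: when the constraint is active, $\smlinprod{\theta^\star}{\widehat\theta-\theta^\star}\le 0$, so the mean contribution is non-positive and can be dropped (and $\lambda_0=0$ otherwise). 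Second, even after centering, the fluctuation $\sum_i\big(\E[r_i\mid x_i]\,x_i-\E[rX]\big)$ is \emph{not} conditionally centered given $x_i$, so the Abbasi--Yadkori self-normalized inequality does not apply to it; a covering-plus-Bernstein bound in the weighted norm picks up a boundedness term of order $\dim\log(n/\delta)/\sqrt{\lambda}$, which exceeds the target $\sqrt{\dim\log(\dim n/\delta)}$ unless $\lambda\gtrsim \dim\log(\dim n/\delta)$. Relatedly, your closing remark that the bias contribution ``is controlled by the boundedness of $\Delta$'' and absorbed by the $\sqrt{\lambda}$ term cannot be right: boundedness of $\Delta$ alone gives a contribution of order $n\lambda_0$, and the $\sqrt{\lambda}$ in your own first display already comes from the $\lambda\|\Delta\|_2^2\le 4\lambda$ slack, not from the bias. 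The gap is repairable (use the sign of the mean term, plus a variance-based uniform bound over directions tied to $\|\Delta\|_{n\E[XX^\top]}$ rather than the self-normalized norm), but that is a substantially more delicate empirical-process argument than sketched, and your covariance-transfer step additionally imposes $\lambda\gtrsim\log(\dim/\delta)$, a restriction the lemma as stated does not carry.

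For comparison, the paper sidesteps the bias issue entirely: it cites a fast-rate excess-risk bound for exp-concave losses (Theorem~1 of Mehta, applied to the constrained least-squares objective, which is $1/(4L_{\max}^2)$-exp-concave), valid without any realizability assumption, giving excess risk of order $\dim\log(n/\delta)/n$ with probability $1-\delta$; it then converts excess risk into the weighted parameter distance via \cref{lem:Sigma2ERM}, whose proof uses exactly the variational (KKT) inequality $\E\big[(\smlinprod{X}{\wstar}-Y)X^\top\big](w-\wstar)\ge 0$ at the population level, where it is clean. This yields $\|\what-\wstar\|^2_{n\E[XX^\top]+\lambda\Identity}\le 2n\,(\text{excess risk})+\lambda\|\what-\wstar\|_2^2$ and hence the claim for every $\lambda>0$, with no empirical covariance, no self-normalized bound, and no handling of residual bias.
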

\noindent See \cref{sec:Convergence2PopulationMinimizer} for the proof
of this claim.

\medskip

After redefining $\delta$ and collecting probabilities,
Cauchy-Schwartz now ensures with probability $1 - \delta$ that
\begin{align}
\label{EqnMinaBound}
\big| \smlinprod{\phi_\level(\state,\action)}{\QparTarget_{\level} -
\BestPredictor{\pi}{\QparTarget_{\level+1}}{\level}} \big | & \leq \| \phi_\level(\state,\action)\|_{
\left( \QExpectedCovariance{\level}{n^\star} \right)^{-1}} \|
\QparTarget_{\level} -  \BestPredictor{\pi}{\QparTarget_{\level+1}}{\level}
\|_{\QExpectedCovariance{\level}{n^\star}} \; \leq \;
\underbrace{\ExpectedUncertaintyParamNoPhase{n^\star, \delta^\star}
\| \phi_\level(\state,\action) \|_{\left(
\QExpectedCovariance{\level}{n^\star}\right)^{-1}}}_{\equiv
\Uerr_\level(\state, \action)},
\end{align}
where we have used the definition~\eqref{EqnDefnUncertainty} for the
uncertainty parameter with the number of samples given
by \fullref{lem:OrderOfUpdates} together with a union bound over the
horizon and the random epoch at evaluation time.

\medskip
\noindent We now use the bound~\eqref{EqnMinaBound} to prove the two
inequalities in equation~\eqref{eqn:SymmetricBound}.
\medskip

\myparagraph{Proof of inequality~\eqref{eqn:SymmetricBound}(b)} We
begin with the upper bound.  Due to the clipping step, we are
guaranteed to have the upper bound $\Qtarget{\level}(\state, \action)
\leq \smlinprod{\phi_\level(\state, \action)}{\QparTarget_{\level}} +
\bonus_\level(\state, \action)$, and hence
\begin{align*}
\Qtarget{\level}(\state,\action) - \smlinprod{\phi_\level(\state,
  \action)}{\BestPredictor{\StationaryController}{\QparTarget_{\level+1}}{\level}}
& = \smlinprod{\phi_\level(\state, \action)}{\QparTarget_\level -
  \BestPredictor{\StationaryController}{\QparTarget_{\level+1}}{\level}}
+ \bonus_\level(\state, \action) \; \stackrel{(i)}{\leq} \;
\Uerr_\level(\state,\action) + \bonus_\level(\state,\action),
\end{align*}
where step (i) follows from our earlier
inequality~\eqref{EqnMinaBound}.  Thus, we have established the
bound~\eqref{eqn:SymmetricBound}(b) once we recall the definition of
transfer error $\Comparator{\pi}{\level}$.

\medskip


\myparagraph{Proof of the lower bound~\eqref{eqn:SymmetricBound}(a)}
We now turn to the lower bound.  By construction, we have
$\|\Qtarget{\level}\|_\infty \leq 1$, so that we can write
\begin{align*}
\Qtarget{\level}(\state,\action) = \min \big \{ 1,
\smlinprod{\phi_\level(\state, \action)}{\QparTarget_{\level}} +
\bonus_\level(\state, \action) \big \}.
\end{align*}
Consequently, by adding and subtracting the term
$\smlinprod{\phi_\level(\state, \action)}{\BestPredictor{ \StationaryController}{\QparTarget_{\level+1}}{\level}}$,
we have
\begin{align*}
\Qtarget{\level}(\state, \action) & = \min \Big \{ 1,
\smlinprod{\phi_\level(\state, \action)}{\QparTarget_{\level} -
  \BestPredictor{\StationaryController}{\QparTarget_{\level+1}}{\level}}
+ \bonus_\level(\state,\action) +
\smlinprod{\phi_\level(\state,\action)}{\BestPredictor{
    \StationaryController}{\QparTarget_{\level+1}}{\level}} \Big \} \\
& \stackrel{(i)}{\geq} \min \Big \{ 1, -\Uerr_\level(\state,\action) +
\bonus_\level(\state,\action) +
\smlinprod{\phi_\level(\state,\action)}{\BestPredictor{\StationaryController}{\QparTarget_{\level+1}}{\level}}
\Big \} \\
& \stackrel{(ii)}{\geq} \min \Big \{ 0, - \Uerr_\level(\state,\action)
+ \bonus_\level(\state,\action) \Big \} +
\smlinprod{\phi_\level(\state,
  \action)}{\BestPredictor{\StationaryController}{\QparTarget_{\level+1}}{\level}}.
\end{align*}
where step (i) uses our earlier bound~\eqref{EqnMinaBound}, and step
(ii) follows since
$\smlinprod{\phi_\level(\state,\action)}{ \BestPredictor{\StationaryController}{\QparTarget_{\level+1}}{\level}}
\leq 1$.  In this way, we have shown that
\begin{align*}
\Qtarget{\level}(\state,\action) - (\T_\level
\Qtarget{\level+1})(\state,\action) \geq \min \Big\{ 0, -
\Uerr_\level(\state,\action) + \bonus_\level(\state,\action) \Big\} -
\Comparator{\pi}{\level}(\state,\action),
\end{align*}
as claimed in equation~\eqref{eqn:SymmetricBound}(a).


\subsection{Proof of Lemma~\ref{lem:Convergence2PopulationMinimizer}}
\label{sec:Convergence2PopulationMinimizer}

Our proof makes use of known bounds on the excess risk in a linear
regression problem.  In particular, consider a regression problem
based on covariate vectors $\phi \in \real^d$ and responses $y \in
\real$ that satisfy the bounds $\|\phi\|_2 \leq 1$ and $|y| \leq
y_{max}$.

With the shorthand $z = (\phi, y)$, define the least-squares loss
$\LeastSq_w(z) = \tfrac{1}{2} (y - \smlinprod{\phi}{w} )^2$.  For some
distribution $\mprob$ over $\real^d \times \real$, define the
constrained population and empirical minimizers
\begin{align*}
w^\star \defeq \arg \min_{ \|w\|_2 \leq \BallRad} \E_{Z \sim \mprob}
\LeastSq_w(Z), \quad \mbox{and} \quad \what \defeq \arg \min_{ \|w\|_2
  \leq \BallRad} \frac{1}{\numobs} \sum_{i=1}^\numobs \LeastSq_w(Z_i)
\end{align*}
where $\{Z_i\}_{i=1}^\numobs$ are drawn i.i.d. according to $\mprob$.

We claim that the excess risk associated with the constrained
least-squares estimate $\what$ can be bounded as
\begin{align}
\label{EqnMehtaBound}  
\E_{Z\sim \mprob}[\LeastSq_{\what}(Z) - \LeastSq_{\wstar}(Z)] \leq
\frac{1}{\numobs} \Big \{ 32(\BallRad + y_{max})^2\times \Big[d \log
  \big(32 \BallRad \numobs (\BallRad + y_{max}) \big) + \log \big(
  \tfrac{1}{\delta'} \big)\Big]+1 \Big \},
\end{align}
with probability at least $1 - \delta'$.  This bound follows as a
consequence of a result due to Mehta~\citep{mehta2017fast}.  In
particular, the maximum value the loss can take is $L^2_{max} = (\BallRad +
y_{max})^2$. Applying Theorem 1 in the paper~\citep{mehta2017fast} to
the least-squares objective, which is $1/(4L_{max}^2)$-exp-concave,
yields the claim.

\medskip

\noindent To conclude, the proof of
\cref{lem:Convergence2PopulationMinimizer} follows by combining the
bound with \fullref{lem:Sigma2ERM}.


\section{Proof of Theorem~\ref{thm:Sequoia}}
\label{SecProofSequoia}

This section is devoted to the proof of the performance bound on
\sqlearning{} stated in Theorem~\ref{thm:Sequoia}.  At a high level,
the proof consists of three steps.  First, we decompose the regret
into a sum of partial regrets incurred in each phase. Second, we show
that the exploration bonus correctly quantify the uncertainty; this
allows the algorithm to ensure optimism. Finally, we use an elliptic
potential argument and a bound on the number of phases to conclude the
proof.

\paragraph{Notation:}
Letting $\phase$ be the current phase, we use $\Qfunction{\phase}$ and
$\Vfunction{\phase}$ (respectively) to denote the $Q$-action-value and
value functions returned by \qlearning{}, (cf.
\cref{line:SequoiaCallsQlearning}) and $\InControllerTheta{\phase}$ to
denote the associated parameter of its linear representation. We let
$\Policy{\phase}$ be the policy extracted in \cref{line:GreedyPolicy}
of \sqlearning{} after termination of \qlearning. Let
$\nSamples{\phase}$ be the (random) number of times that the policy is
executed in phase $\phase$ between \cref{line:NewPolicyRepeat} and
\cref{line:NewPolicyTrigger} of \sqlearning{}
(\cref{alg:Sequoia}). Denote with $\nPhases$ the total number of
phases, including the one that is still in progress at the evaluation
episode $\nEpisodes$.  We let $\bonusnew{\phase}$ denote the
bonus~\eqref{line:bonusS4} created at the end of phase $\phase-1$ in
\cref{line:bonusS4} of \cref{alg:Sequoia}; the bonus will be actively
used in phase $\phase$. We use the shorthand
$\Comparator{(\phase)}{\hstep}$ with the same meaning as
\cref{EqnDefnComparator} where the policy $\pi$ is the one used in
phase $\phase$ within \qlearning{}.


\subsection{Main argument}

We begin by decomposing the regret of \sqlearning{} into the sums of
partial regrets generated in each phase $\phase = 1,\dots,\nPhases$.
The partial regret $\Regretphase{\phase}$ in phase $\phase$
corresponds to the regret incurred by playing all policies in that
phase. Policy rollouts that generate regret are performed in one of
two places: (i) in the call to \qlearning{} (see
\cref{line:SequoiaCallsQlearning} of \sqlearning{}), or; (ii) by
\sqlearning{} between \cref{line:NewPolicyRepeat} and
\cref{line:NewPolicyTrigger}. We can write
\begin{align}
\Regretphase{\phase} \defeq \sum_{\substack{\pi \text{ played in}
    \\ \text{the call to} \\ {\qlearning} \\ \text{in phase
      $\phase$}}} \E_{\MyState_1 \sim \startdistribution}\left(
\VstarFH{1} - \VpiagentFH{1}{} \right)(\MyState_{1}) +
\sum_{\substack{\pi \text{ played in} \\ \text{the main loop of}
    \\ {\sqlearning} \\ \text{in phase $\phase$}}}\E_{\MyState_1 \sim
  \startdistribution}\left( \VstarFH{1} - \VpiagentFH{1}{}
\right)(\MyState_{1}).
\end{align}
The following lemma leverages the mechanics of the algorithm to upper
bounds the total regret up to episode $\nEpisodes$ by expressing it as
a sum of the regrets generated only by the main loop of \sqlearning{}
(so excluding the call to \qlearning{}).  What makes this possible is
that \qlearning{} only plays the policies from the \prm{}
$\sqlearningPolicyCover{\phase}$, which have already been played by
the controller (\sqlearning{}).  Summing over the phases and
accounting for possible statistical deviations (due to sampling from
the policy mixture) yields the following claim:
\begin{lemma}[Phased Regret]
\label{lem:PhasedRegret}
With probability at least $1-\delta$, uniformly over all $\nEpisodes$,
we have the upper bound $\Regret(\nEpisodes) \defeq \sump
\Regretphase{\phase} \leq \Term_1 + \Term_2$, where
\begin{subequations}
\begin{align}
\Term_1 & \defeq \horizon \Big \{ \sump \sum_{\jindex=1}^{\phase-1}
\nSamples{\jindex}\times \Estart \big[\Vstar_1(\MyState) -
  \RealV{\jindex}_1(\MyState) \big] \Big \} + \horizon \log \big(
\tfrac{\nPhases}{\delta}\big) \quad \mbox{and} \\
\Term_2 & \defeq \sump \nSamples{\phase}\times \Estart \big[
  \Vstar_1(\MyState) - \RealV{\phase}_1(\MyState) \big].
\end{align}
\end{subequations}
\end{lemma}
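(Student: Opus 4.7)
The plan is to decompose the regret phase-by-phase, treating separately the trajectories generated by the call to \qlearning{} on \cref{line:SequoiaCallsQlearning} and those generated by the main loop of \sqlearning{} between \cref{line:NewPolicyRepeat} and \cref{line:NewPolicyTrigger}. The main-loop contribution is immediate: the greedy policy $\Policy{\phase}$ is played exactly $\nSamples{\phase}$ times in phase $\phase$, contributing $\nSamples{\phase}\,\Estart[\Vstar_1 - \RealV{\phase}_1](\MyState_1)$ to the regret~\eqref{eqn:Sequoia}; summing over $\phase$ reproduces $\Term_2$ exactly.

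For the \qlearning{} contribution the key observation I would exploit is that each of the $c\horizon\sum_{j<\phase}\nSamples{j}$ trajectories played by \qlearning{} in phase $\phase$ independently selects a past policy $\Policy{J}$ with probability $\nSamples{J}/\sum_{i<\phase}\nSamples{i}$. Conditioning on the history at the start of phase $\phase$, the expected per-trajectory regret is $\sum_{j<\phase}(\nSamples{j}/\sum_{i<\phase}\nSamples{i})\,\Estart[\Vstar_1 - \RealV{j}_1]$, so the conditional expectation of the \qlearning{} regret in phase $\phase$ equals $c\horizon\sum_{j<\phase}\nSamples{j}\,\Estart[\Vstar_1 - \RealV{j}_1]$. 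Summing over $\phase$ recovers the leading term of $\Term_1$ for $c=1$, with any other choice of $c$ absorbable into the multiplicative constant.

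To upgrade this identity in conditional expectation to a uniform-in-$\nEpisodes$ high-probability bound, I would apply a Freedman-type martingale inequality to the centred increments $X_q \defeq \Estart[\Vstar_1 - \RealV{J_q}_1] - \E[\Estart[\Vstar_1 - \RealV{J_q}_1]\mid\text{past}]$ indexed by successive \qlearning{} trajectories across all phases. Each increment lies in $[-1,1]$ by the rescaling $V\in[0,1]$, and its predictable quadratic variation is bounded by the running sum of conditional means, since any $[0,1]$-valued random variable has variance at most its expectation. An AM--GM step absorbs the resulting $\sqrt{V\log(\nPhases/\delta)}$ deviation into a constant fraction of the first-order term, leaving only an additive $\horizon\log(\nPhases/\delta)$; a stopping-time/peeling argument over the at most $\nPhases$ phases then yields the claimed uniformity in $\nEpisodes$.

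The main obstacle is this last concentration step. A plain Azuma--Hoeffding bound would produce a $\sqrt{\nEpisodes}$-scale deviation that would dominate the leading term and destroy the regret guarantee; the variance-aware Freedman inequality, combined with the non-negativity of per-trajectory regret, is what allows the deviation to be absorbed into the expectation and leaves only the $\horizon\log(\nPhases/\delta)$ residual appearing in $\Term_1$.
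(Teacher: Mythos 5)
Your proposal is correct and follows essentially the same route as the paper: exact accounting of the main-loop regret giving $\Term_2$, the conditional-expectation identity for the mixture replay, and then a variance-aware Bernstein/Freedman martingale bound exploiting that $[0,1]$-valued regret increments satisfy $X^2 \le X$, with the square-root deviation absorbed into the first-order term and a union bound (your peeling over phases) giving the additive $\horizon \log(\nPhases/\delta)$ term. The only cosmetic difference is that the paper applies the Bernstein inequality of \cite{beygelzimer2011contextual} separately within each phase (where the per-trajectory regrets are i.i.d. draws from the fixed mixture) and then unions over phases, whereas you run a single martingale across all \qlearning{} trajectories; both yield the stated bound.
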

\noindent 
See \cref{sec:PhasedRegret} for the proof of this claim.

\medskip
 
\noindent
Note that $\Term_1$ corresponds to the regret associated with
\qlearning{}, whereas $\Term_2$ is associated with \sqlearning{}.  To
be clear, the regret to \sqlearning{} excludes the regret of the
policies that are rolled out within the \qlearning{} subroutine.  Now
notice that for some constant $c \in\R$ we can write the total number
of episodes as
\begin{align*}
\nEpisodes = \underbrace{c\horizon\sump \sum_{\jindex=1}^{\phase-1}
  \nSamples{\jindex}}_{\substack{\text{Total trajectories by}
    \\ \text{\qlearning{}}}} + \underbrace{\sump
  \nSamples{\phase}}_{\substack{\text{Total trajectories by}
    \\ \text{\sqlearning{}}}}.
\end{align*}
Applying the Cauchy--Schwarz inequality yields the bound
$\Regret(\nEpisodes) \leq \Term_3 \times \sqrt{\Term_4}$, where
\begin{align*}
\Term_3 & \defeq \sqrt{c\horizon\sump \sum_{\jindex=1}^{\phase-1}
  \nSamples{\jindex}, \sump \nSamples{\phase}}, \quad \mbox{and}
\\
\Term_4 & \defeq c \horizon\sump \sum_{\jindex=1}^{\phase-1}
\nSamples{\jindex} \Big[\Estart\left(\Vstar_1 - \RealV{\jindex}_1
  \right)(\MyState)\Big]^2 + \sump
\nSamples{\phase}\Big[\Estart\left(\Vstar_1 - \RealV{\phase}_1
  \right)(\MyState)\Big]^2.
\end{align*}
Thus, a standard $\sqrt{K}$ regret bound can be obtained as soon as
the term \MainTerm is bounded. The next step is then to transform
\MainTerm into an estimation problem through optimism.

Recall from equation~\eqref{eqn:Bonus}.  that we introduce optimism
via an exploration bonus of the form
\begin{align}
\label{eqn:bonusphase}
\bonusnew{\phase}_\hstep(\state, \action) & \defeq
\UncertaintyParam{\phase}_\hstep \| \phi_\hstep(\state, \action)
\|_{(\Covariance{\phase}_\hstep)^{-1}}
\end{align}
for all state-action pairs. This bonus is created in
\cref{line:bonusS4}, and passed to \qlearning{}.  The uncertainty
parameter $\UncertaintyParam{\phase}$ to be used in phase $\phase$ is
defined in \cref{eqn:Bonus}. To be clear, the covariance matrix used
to construct the bonus is the one in the current phase
$\Covariance{\phase}$. In order to proceed, we must relate the bonus
to the uncertainty function.  Let us define the reference covariance
\begin{align*}
\ExpectedCovariance{\phase}_\hstep \defeq \sum_{\jindex=1}^{\phase-1}
\nSamples{\jindex} \E_{\phi_\hstep\sim \Policy{\jindex}} [\phi_\hstep
  \phi_\hstep^\top] + \lambdareg\Identity.
\end{align*}
We also recall our earlier definition~\eqref{EqnDefnAlpha} of the
uncertainty parameter
\begin{align}
\label{eqn:ExpectedUncertaintyParam}
\ExpectedUncertaintyParam{\phase}_\hstep = \cbonus \Big \{ \sqrt{\dim
  \log \big(\tfrac{\dim \phase \nSamples{1:\phase}}{\delta}\big)} +
\sqrt{\lambdareg} \Big\}.
\end{align}

On our way to prove optimism, the next proposition highlights the
relation between the bonus and the uncertainty function.
\begin{lemma}[Bonus Bound]
\label{lem:BonusBound}
Set $\lambdareg = \Theta (\log\frac{\dim
  \phase\nSamples{1:\phase}}{\delta})$.  There exists a large enough
$\cbonus\in\R$ in \cref{eqn:ExpectedUncertaintyParam} such that with
probability at least $1-\delta$ jointly for all episodes $\nEpisodes$
we have the pointwise bound
\begin{align}
\UncertaintyFunction{\phase}_\hstep \leq \bonusnew{\phase}_\hstep \leq  
\cboundedopt \UncertaintyFunction{\phase}_\hstep
\end{align}
for some constant $\cboundedopt \in \R$.
\end{lemma}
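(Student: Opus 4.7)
The strategy is to reduce the claim to a spectral comparison between the empirical cumulative covariance $\Covariance{\phase}_\hstep$ that enters the bonus and the expected cumulative covariance $\ExpectedCovariance{\phase}_\hstep$ that enters the uncertainty function. Since the scalar multipliers $\UncertaintyParam{\phase}_\hstep$ and $\ExpectedUncertaintyParam{\phase}_\hstep$ coincide by construction (cf.~\cref{eqn:Bonus,eqn:ExpectedUncertaintyParam}), all the work reduces to showing that, pointwise in $(\state,\action)$, the quadratic norms $\| \phi_\hstep(\state,\action) \|_{(\Covariance{\phase}_\hstep)^{-1}}$ and $\| \phi_\hstep(\state,\action) \|_{(\ExpectedCovariance{\phase}_\hstep)^{-1}}$ agree up to universal multiplicative constants; these can then be absorbed into $\cbonus$ on the lower side and into $\cboundedopt$ on the upper side.

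\textbf{Matrix martingale concentration.} I would write $\Covariance{\phase}_\hstep = \lambdareg \Identity + \sum_t \phi_{t,\hstep}\phi_{t,\hstep}^\top$, where the sum runs over all trajectories contributing to the covariance by the end of phase $\phase-1$: both the $\Theta(\horizon \ControllerTotalCounter)$ trajectories drawn by the mixture controller inside the \qlearning{} subroutine, and the $\nSamples{\phase-1}$ trajectories drawn by the greedy policy $\Policy{\phase-1}$ inside the main loop. Conditioned on the history up to trajectory $t$, the identity of the drawing policy is determined, so the increments $\phi_{t,\hstep}\phi_{t,\hstep}^\top - \E[\phi_{t,\hstep}\phi_{t,\hstep}^\top \mid \mathrm{past}]$ form a matrix martingale difference sequence with spectral norm bounded by $1$ via~\eqref{EqnBoundedFeatures}. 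A matrix Freedman/Bernstein inequality (e.g.\ from Tropp's user-friendly tail bounds) then yields, for fixed $\phase$ and $\hstep$,
\begin{align*}
\bigl\| \Covariance{\phase}_\hstep - \overline \Sigmavar \bigr\|_{\mathrm{op}} \;\leq\; C_1 \sqrt{\| \overline \Sigmavar \|_{\mathrm{op}}\, \log(\dim/\delta')} + C_2 \log(\dim/\delta'),
\end{align*}
with high probability, where $\overline \Sigmavar$ is the conditional-mean cumulative covariance; by the design of $\pi_{\text{Control}}$ and the rollout budget inside \qlearning{}, this $\overline \Sigmavar$ equals $\ExpectedCovariance{\phase}_\hstep$ up to an absolute constant of order $\horizon$.

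\textbf{Converting to multiplicative form and union-bounding.} The prescribed choice $\lambdareg = \Theta\bigl(\log(\dim\phase\nSamples{1:\phase}/\delta)\bigr)$ is exactly calibrated so that the additive deviation above is dominated by the regularization floor $\lambdareg\Identity \preceq \overline \Sigmavar$, upgrading the additive spectral bound to a multiplicative one of the form $\tfrac{1}{2} \overline \Sigmavar \preceq \Covariance{\phase}_\hstep \preceq 2 \overline \Sigmavar$. Inverting and combining with the constant-factor relation between $\overline \Sigmavar$ and $\ExpectedCovariance{\phase}_\hstep$ produces the two-sided norm comparison needed for the sandwich; picking $\cbonus$ large enough handles the left inequality, while setting $\cboundedopt$ to the resulting upper constant gives the right one. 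To lift this from a per-phase to a uniform-in-$\nEpisodes$ statement, I would union bound over $\phase = 1,2,\dots$ with an allocation such as $\delta/\phase^2$, noting that the $\log\phase$ overhead is already absorbed inside the $\log(\dim\phase\nSamples{1:\phase}/\delta)$ factor present in $\ExpectedUncertaintyParam{\phase}_\hstep$.

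\textbf{Main obstacle.} The delicate point is the adaptivity and stopping-time bookkeeping. Both the policies in the \prm{} and the per-phase sample counts $\nSamples{\jindex}$ are random and depend on the full history of the run, and $\nSamples{\phase}$ is itself a stopping time triggered by~\cref{line:NewPolicyTrigger}. Setting up the filtration cleanly, invoking matrix concentration uniformly over this stopping time, and verifying that the associated additional logarithmic factors genuinely fit inside $\ExpectedUncertaintyParam{\phase}_\hstep$ is where one needs to be careful; everything else is standard matrix concentration and linear algebra.
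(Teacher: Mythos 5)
Your overall reduction---identical scalar multipliers, so that everything hinges on a two-sided comparison between the empirical covariance $\Covariance{\phase}_\hstep$ and the reference covariance $\ExpectedCovariance{\phase}_\hstep$, followed by a union bound over phases---is the same as the paper's. The gap is in how you establish that comparison. From matrix Freedman you get an \emph{additive} operator-norm bound of order $\sqrt{\|\overline{\Sigmavar}\|_{\mathrm{op}}\log(\dim/\delta')} + \log(\dim/\delta')$, and you then assert that because $\lambdareg\Identity \preceq \overline{\Sigmavar}$ with $\lambdareg = \Theta(\log(\cdot))$, this upgrades to the multiplicative sandwich $\tfrac{1}{2}\overline{\Sigmavar} \preceq \Covariance{\phase}_\hstep \preceq 2\overline{\Sigmavar}$. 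That step fails: the Freedman deviation can be as large as $\sqrt{n\log(\dim/\delta')}$ once the top eigenvalue of $\overline{\Sigmavar}$ is of order the number of trajectories $n$, whereas the regularization floor is only logarithmic. The operator-norm bound allows an error of this $\sqrt{n\log(\cdot)}$ size in \emph{every} direction, including the poorly explored directions where the eigenvalue of $\overline{\Sigmavar}$ is as small as $\lambdareg$---precisely the directions where the bonus matters---so no constant-factor Loewner comparison follows, and the pointwise equivalence of $\|\phi_\hstep(\state,\action)\|_{(\Covariance{\phase}_\hstep)^{-1}}$ and $\|\phi_\hstep(\state,\action)\|_{(\ExpectedCovariance{\phase}_\hstep)^{-1}}$ is not obtained.

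What is needed is \emph{relative} (multiplicative) concentration, which is how the paper proceeds via \cref{prop:ConcentrationRegularizedCovariance}: whiten the regularized sum by $(\E W + \lambdareg \Identity)^{-1/2}$, observe that each whitened PSD increment has operator norm at most $(L_Z + \lambdareg/K)/\lambdareg$, and apply Tropp's matrix Chernoff bound, so that $\lambdareg = \Theta(\log(\dim/\delta))$ yields $\tfrac{1}{2}(W+\lambdareg\Identity) \preceq \E W + \lambdareg\Identity \preceq \tfrac{3}{2}(W+\lambdareg\Identity)$. The paper applies this proposition twice---once to the rollouts of the stationary mixture controller inside \qlearning{} and once to the greedy-policy rollouts in the main loop of \sqlearning{}---and then union bounds over phases. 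Two further cautions on your write-up: (i) your claim that the conditional-mean cumulative covariance matches $\ExpectedCovariance{\phase}_\hstep$ only ``up to an absolute constant of order $\horizon$'' would, if taken literally, break the lower inequality $\UncertaintyFunction{\phase}_\hstep \leq \bonusnew{\phase}_\hstep$ by a $\sqrt{\horizon}$ factor; in the algorithm the sample counts entering $\Covariance{\phase}_\hstep$ and $\ExpectedCovariance{\phase}_\hstep$ are matched exactly, and the proof must use that; (ii) the randomness of the per-phase counts $\nSamples{\jindex}$ is handled in the paper by union bounds over phases (and over sample counts in the auxiliary lemmas), not by a stopped-martingale device, so your ``main obstacle'' is real but is resolved by the cruder union-bound bookkeeping once the multiplicative concentration is in place.
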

The lemma is proved in \cref{sec:ProblemDependentBonus}, and it allows
us to claim that the algorithm is optimistic (using the left
inequality above) but without using a bonus that is too large (right
inequality above) which would create too much regret. We verify such
optimistic claim in \cref{sec:Optimism}. Recall the definition of
transfer error in \cref{eqn:TransferError} and the comparator error in
\cref{EqnDefnComparator} where $\Qtarget{\hstep+1} =
\Qfunction{\phase}_{\hstep+1}$ is the network returned by
\qlearning{}.
\begin{lemma}[Near-Optimism]
\label{lem:Optimism}
Suppose that the event in \cref{lem:BonusBound} holds jointly for all
episodes $\nEpisodes$. Then optimism holds in the following sense:
\begin{subequations}
\begin{align}
\Qfunction{\phase}_{\hstep}(\state,\action) \geq
\Qstar_\hstep(\state,\action) -\sum_{\tau=\hstep}^\horizon
\E_{(\MyState'_\tau,\MyAction'_\tau)\sim \pistar \mid
  (\state,\action)} \Comparator{(\phase)}{\tau} (\MyState'_\tau,
\MyAction'_\tau), \qquad \forall (\state,\action)
\end{align}
As a consequence, under the same event, we have
\begin{align}
\Estart \Vfunction{\phase}_1(\MyState) \geq \Estart\Vstar(\MyState) -
\terr.
\end{align}
\end{subequations}
\end{lemma}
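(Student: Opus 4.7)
The plan is to prove the pointwise inequality by backward induction on $\hstep$, combining the Bellman-error bound for \qlearning{} (Theorem~\ref{thm:Q-learningExp}) with the assumed bonus dominance (Lemma~\ref{lem:BonusBound}), and then to derive the consequence by evaluating the inequality along $\pistar$ and invoking Definition~\ref{def:ModelMisspecificationLR}.

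First, I would unpack what the event of Lemma~\ref{lem:BonusBound} buys us. Under that event, we have the pointwise bound $\bonusnew{\phase}_\hstep \geq \Uerr^{(\phase)}_\hstep$, so the quantity $-\Uerr^{(\phase)}_\hstep + \bonusnew{\phase}_\hstep$ is non-negative, and the left-hand side of inequality~\eqref{eqn:ErrorBracketLR} in Theorem~\ref{thm:Q-learningExp} collapses to $0$. Hence the Bellman error $\err^{(\phase)}_\hstep \defeq \Qfunction{\phase}_\hstep - \T_\hstep \Qfunction{\phase}_{\hstep+1}$ of the $\qlearning{}$ output in phase $\phase$ satisfies the one-sided bound
\begin{align}
\label{eqn:OneSidedBellman}
\Qfunction{\phase}_\hstep(\state,\action) \;\geq\; \big(\T_\hstep \Qfunction{\phase}_{\hstep+1}\big)(\state,\action) \;-\; \Comparator{(\phase)}{\hstep}(\state,\action)
\end{align}
for every $(\state,\action)$ and every $\hstep \in [\horizon]$.

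Next I would run backward induction on $\hstep$, starting from the trivial base case $\Qfunction{\phase}_{\horizon+1} = \Qstar_{\horizon+1} = 0$. Assume the claim at level $\hstep+1$. Since $\Policy{\phase}$ is greedy with respect to $\Qfunction{\phase}$, for any state $\state'$ we have $\max_{\actionprime}\Qfunction{\phase}_{\hstep+1}(\state',\actionprime) \geq \Qfunction{\phase}_{\hstep+1}(\state',\pistar_{\hstep+1}(\state'))$, and applying the inductive hypothesis at the action $\pistar_{\hstep+1}(\state')$ gives $\Vfunction{\phase}_{\hstep+1}(\state') \geq \Vstar_{\hstep+1}(\state') - \sum_{\tau=\hstep+1}^{\horizon} \E_{\pistar \mid (\state',\pistar_{\hstep+1}(\state'))} \Comparator{(\phase)}{\tau}$. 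Substituting this into $\T_\hstep \Qfunction{\phase}_{\hstep+1}(\state,\action) = r_\hstep(\state,\action) + \E_{\MyState' \sim \Pro_\hstep(\state,\action)} \Vfunction{\phase}_{\hstep+1}(\MyState')$ and applying~\eqref{eqn:OneSidedBellman} yields exactly the claimed bound at level $\hstep$, since $r_\hstep(\state,\action) + \E_{\MyState'\sim\Pro_\hstep(\state,\action)}\Vstar_{\hstep+1}(\MyState') = \Qstar_\hstep(\state,\action)$ and the nested expectation along $\pistar$ starting from $(\state',\pistar_{\hstep+1}(\state'))$ composes with the one-step transition from $(\state,\action)$ into the trajectory expectation $\E_{\pistar \mid (\state,\action)}$.

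Finally, for the consequence, I would apply the pointwise bound at $\hstep=1$ with $\action = \pistar_1(\MyState)$ and use $\Vfunction{\phase}_1(\MyState) = \max_\action \Qfunction{\phase}_1(\MyState,\action) \geq \Qfunction{\phase}_1(\MyState,\pistar_1(\MyState))$ and $\Vstar_1(\MyState) = \Qstar_1(\MyState,\pistar_1(\MyState))$. This gives
\begin{align*}
\Estart \Vfunction{\phase}_1(\MyState) \;\geq\; \Estart \Vstar_1(\MyState) - \sumh \E_{\pistar}\big[\Comparator{(\phase)}{\hstep}(\MyState_\hstep,\MyAction_\hstep)\big],
\end{align*}
and the sum on the right is bounded in absolute value by $\terr$ by Definition~\ref{def:ModelMisspecificationLR} applied with $\pibar = \pistar$ and $Q' = \Qfunction{\phase} \in \QLR$ (note the absolute value is \emph{inside} the definition, which is exactly what is needed here).

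I do not expect any serious obstacle: Lemma~\ref{lem:BonusBound} already provides the key inequality $\bonus \geq \Uerr$ that turns the two-sided Bellman bracket of Theorem~\ref{thm:Q-learningExp} into one-sided optimism, and the rest is a standard backward-induction telescoping argument. The only point that needs care is bookkeeping the conditional expectation along $\pistar$ when composing the inductive hypothesis with the outer one-step expectation, and ensuring that the comparator is evaluated against $\Qfunction{\phase}_{\hstep+1}$ (the target network used inside \qlearning{} in phase $\phase$), which matches the convention fixed in the notation paragraph.
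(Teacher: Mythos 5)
Your proposal is correct and follows essentially the same route as the paper: use the event of Lemma~\ref{lem:BonusBound} to collapse the lower bracket in Theorem~\ref{thm:Q-learningExp} to $\err_\hstep \geq -\Comparator{(\phase)}{\hstep}$, then propagate this by backward induction along $\pistar$ (the paper phrases your max-over-actions step as $\T_\hstep \Qbest{\hstep+1} \geq \T^{\pistar}_\hstep \Qbest{\hstep+1}$). Your final step deriving the value-function consequence via Definition~\ref{def:ModelMisspecificationLR} with $\pibar = \pistar$ is also correct, and is in fact spelled out more explicitly than in the paper, which leaves that step implicit.
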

From the optimism in the procedure, at any phase $\jindex$, we have
the bound
\begin{align*}
0 \leq \Estart\left(\Vstar_1 - \RealV{\jindex}_1\right)(\MyState) \leq
\Estart \left(\Vfunction{\jindex} _1 -
\RealV{\jindex}_1\right)(\MyState) + \terr,
\end{align*}
and hence
\begin{align*}
  \MainTerm & \lesssim \Term_{4a} + \Term_{4b} +
  \underbrace{\left(c\horizon\sump \sum_{\jindex=1}^{\phase-1}
    \nSamples{\jindex} + \sump \nSamples{\phase} \right)}_{=
    \nEpisodes} \terr^2
\end{align*}
where
\begin{align*}
\Term_{4a} \defeq c\horizon\sump \sum_{\jindex=1}^{\phase-1}
\nSamples{\jindex} \Big[\Estart \big(\Vfunction{\jindex}_1 -
  \RealV{\jindex}_1 \big)(\MyState)\Big]^2m \quad \mbox{and} \quad
\Term_{4b} \defeq \sump \nSamples{\phase}\Big[\Estart
  \big(\Vfunction{\phase}_1 - \RealV{\phase}_1 \big)(\MyState)\Big]^2.
\end{align*}
In turn bounding $\Term_{4a}$ and $\Term_{4b}$, we find that
\begin{align*}
\MainTerm & \lesssim \horizon \nPhases
\sum_{\jindex=1}^{\phase}\nSamples{\jindex}
\Big[\Estart\left(\Vfunction{\jindex}_1 - \RealV{\jindex}_1
  \right)(\state)\Big]^2 +\nEpisodes \terr^2.
\end{align*}

The remainder of the proof is devoted to deriving a high probability
bound on the first term of the above display.  Using the error bounds
on \qlearning{} from~\cref{thm:Q-learningExp}, we can write
\begin{align*}
\MainTerm & \lesssim \horizon \nPhases
\sum_{\jindex=1}^\phase\nSamples{\jindex}\Big[ \sumh
  \Esah{\Policy{\jindex}} (\ERMerror{\jindex}_\hstep +
  \bonusnew{\jindex}_\hstep - \Comparator{(\jindex)}{\hstep}
  )(\state_\hstep,\action_\hstep) \Big]^2 + \nEpisodes \terr^2 \\
& \lesssim  \horizon \nPhases  \sum_{\jindex=1}^\phase\nSamples{\jindex}\Big[ \sumh
\Esah{\Policy{\jindex}}
(\ERMerror{\jindex}_\hstep)(\state_\hstep,\action_\hstep) \Big]^2
+ \nEpisodes \terr^2 .
\end{align*}
The second step follows by bringing $\terr$ outside the square and by
bounding the bonus by using \fullref{lem:BonusBound}.

Putting together the pieces, we have
\begin{align*}
\MainTerm & \overset{(i)}{\lesssim} \horizon^2 \nPhases
\sum_{\jindex=1}^\phase\nSamples{\jindex}\sumh \Big[
  \Esah{\Policy{\jindex}} \ERMerror{\jindex}_\hstep
  (\state_\hstep,\action_\hstep) \Big]^2 + \nEpisodes \terr^2 \\
& \overset{(ii)}{\leq} \horizon^2 \nPhases
\sumh\sum_{\jindex=1}^\phase\nSamples{\jindex}
\Esah{\Policy{\jindex}}\Big[ \ERMerror{\jindex}_\hstep
  (\state_\hstep,\action_\hstep) \Big]^2 + \nEpisodes \terr^2 \\
& = \horizon^2 \nPhases \sumh\sum_{\jindex=1}^\phase\nSamples{\jindex}
\Esah{\Policy{\jindex}}\Big[ \ExpectedUncertaintyParam{\jindex}_\hstep
  \|\phi_\hstep(\state_\hstep,\action_\hstep)\|_{(\ExpectedCovariance{\jindex}_{\hstep})^{-1}}
  \Big]^2 +\nEpisodes \terr^2,
\end{align*}
where step (i) follows from the Cauchy--Schwarz inequality, and step
(ii) follows from Jensen's inequality.  Notice that that we have the
ordering $\ExpectedUncertaintyParam{1}_\hstep \leq \cdots \leq
\ExpectedUncertaintyParam{\phase}_\hstep \leq \dots \leq
\ExpectedUncertaintyParam{\nPhases}_\hstep$, i.e., the sequence of
reference parameter uncertainty must be non-decreasing.  Consequently,
we find that
\begin{align}
\label{eqn:MainTermIntermediate}
\MainTerm & \lesssim \horizon^2 \nPhases \sumh\left(
\ExpectedUncertaintyParam{\nPhases}_\hstep
\right)^2\sum_{\jindex=1}^\phase\nSamples{\jindex}
\Esah{\Policy{\jindex}}
\|\phi_\hstep(\state_\hstep,\action_\hstep)\|^2_{(\ExpectedCovariance{\jindex}_{\hstep})^{-1}}
+ \nEpisodes \terr^2.
\end{align}
We now proceed to bound the sum of the quadratic terms, a quantity
that arises in linear bandit analysis. In order to do so, we need to
define the triggering value used in \cref{line:NewPolicyTrigger} of
\cref{alg:Sequoia}.
\begin{align}
\label{eqn:TriggeringValue}
\TriggeringValue = \Theta\left(\log\frac{n\phase}{\delta}\right)
\end{align}
where $n \leq \nEpisodes$ is the number of times the condition has
been checked in phase $\phase$.  We obtain the following lemma which
is proved in \cref{sec:EllipticPotential}.
\begin{lemma}[Elliptic Potential]
\label{lem:EllipticPotential}
Assume $\lambdareg =
\Theta(\log\frac{\dim\phase\nSamples{1:\phase}}{\delta}) \geq 1$.
There exists a setting $\TriggeringValue$ as defined in
\cref{eqn:TriggeringValue} such that with probability at least
$1-\delta$ we have
\begin{align}
\sum_{\jindex=1}^{\phase}  \nSamples{\jindex} \Esah{\Policy{\jindex}} \|\phi_\hstep(\state_\hstep,\action_\hstep)\|_{(\ExpectedCovariance{\jindex}_\hstep)^{-1}}^2 \leq 8 \left(\TriggeringValue + 1 \right)  \InformationGain_\hstep.
\end{align}
\end{lemma}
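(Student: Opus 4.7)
The plan is to bound each per-phase trace by a polylogarithm using the trigger, and then to apply a matrix elliptic-potential telescoping to the sequence of expected covariances. Denote $\CovarianceIncrement{\jindex}_\hstep \defeq \sum_{\iindex=1}^{\nSamples{\jindex}} \phi_\hstep(\state_\iindex,\action_\iindex)\phi_\hstep(\state_\iindex,\action_\iindex)^\top$ and $\CovarianceExpectedIncrement{\jindex}_\hstep \defeq \nSamples{\jindex}\,\E_{\phi_\hstep \sim \Policy{\jindex}}[\phi_\hstep \phi_\hstep^\top]$; the quantity to bound is $\sum_\jindex \operatorname{tr}\bigl(\CovarianceExpectedIncrement{\jindex}_\hstep (\ExpectedCovariance{\jindex}_\hstep)^{-1}\bigr)$. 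First, the triggering rule in \cref{line:NewPolicyTrigger} of \cref{alg:Sequoia} forces $\sum_{\iindex < \nSamples{\jindex}} \|\phi_\hstep(\state_\iindex,\action_\iindex)\|^2_{(\ControllerCovariance^{ref}_\hstep)^{-1}} < \MagicTrigger$ immediately before the last trajectory of the phase, and that last trajectory adds at most $1/\lambdareg \leq 1$ at step $\hstep$; hence $\operatorname{tr}\bigl(\CovarianceIncrement{\jindex}_\hstep (\ControllerCovariance^{ref}_\hstep)^{-1}\bigr) \leq \MagicTrigger + 1$ deterministically.

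\textbf{Empirical versus expected.} Next I will transfer the bound from the empirical covariances $\ControllerCovariance^{ref}_\hstep$ and $\CovarianceIncrement{\jindex}_\hstep$ to the population ones $\ExpectedCovariance{\jindex}_\hstep$ and $\CovarianceExpectedIncrement{\jindex}_\hstep$ via matrix concentration. With the regularization $\lambdareg = \Theta(\log(\dim\phase\nSamples{1:\phase}/\delta))$, a matrix Bernstein / Freedman inequality, combined with a stopped-martingale argument that accommodates the data-dependent random stopping time $\nSamples{\jindex}$, produces with probability at least $1-\delta/2$ uniformly in $(\jindex,\hstep)$ the spectral relations
\[
\tfrac{1}{2}\, \ExpectedCovariance{\jindex}_\hstep \preceq \ControllerCovariance^{ref}_\hstep \preceq 2\, \ExpectedCovariance{\jindex}_\hstep, \qquad \CovarianceExpectedIncrement{\jindex}_\hstep \preceq 2\, \CovarianceIncrement{\jindex}_\hstep + \lambdareg\Identity.
\]
Chaining these with the previous step and absorbing the $\lambdareg\Identity$ contribution into constants yields $\operatorname{tr}\bigl(\CovarianceExpectedIncrement{\jindex}_\hstep (\ExpectedCovariance{\jindex}_\hstep)^{-1}\bigr) \leq M$ with $M \defeq 4(\MagicTrigger+1)$. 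The trigger calibration $\TriggeringValue = \Theta(\log(n\phase/\delta))$ in \cref{eqn:TriggeringValue} is exactly the level required by the uniform-in-stopping-time union bound.

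\textbf{Elliptic potential telescoping.} Set $X_\jindex \defeq (\ExpectedCovariance{\jindex}_\hstep)^{-1/2}\, \CovarianceExpectedIncrement{\jindex}_\hstep\, (\ExpectedCovariance{\jindex}_\hstep)^{-1/2} \succeq 0$; the previous step gives $\operatorname{tr}(X_\jindex) \leq M$, so every eigenvalue of $X_\jindex$ lies in $[0,M]$. Applying the scalar inequality $\log(1+\lambda) \geq \lambda/(1+M)$ on $[0,M]$ eigenvalue by eigenvalue yields $\log\det(I + X_\jindex) \geq \operatorname{tr}(X_\jindex)/(1+M)$. Since $\ExpectedCovariance{\jindex+1}_\hstep = \ExpectedCovariance{\jindex}_\hstep + \CovarianceExpectedIncrement{\jindex}_\hstep$, the right-hand side telescopes, giving
\[
\sum_{\jindex=1}^{\phase} \operatorname{tr}(X_\jindex) \leq (1+M) \log \frac{\det \ExpectedCovariance{\phase+1}_\hstep}{\det(\lambdareg\Identity)} \leq (1+M)\, \InformationGain_\hstep \leq 8(\MagicTrigger+1)\, \InformationGain_\hstep,
\]
where the middle inequality invokes the definition of the information gain. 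This is the claim.

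\textbf{Main obstacle.} The delicate piece is the concentration step: since $\nSamples{\jindex}$ is determined by the trigger, it is not independent of the samples, so a naive matrix-Bernstein bound fails. I plan to address this with a stopped-martingale argument together with a union bound over all possible stopping times in the phase, and this is precisely what dictates the logarithmic calibration of $\TriggeringValue$ in \cref{eqn:TriggeringValue}. The per-phase trigger bound and the elliptic-potential telescoping are structural once this concentration is in place.
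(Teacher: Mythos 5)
Your skeleton (per-phase trigger bound, empirical-to-expected transfer, log-determinant telescoping against the information gain) matches the paper's proof, and your telescoping step via the eigenvalue-wise inequality $\log(1+\lambda) \geq \lambda/(1+M)$ is a valid variant of the paper's information-gain lemma. The genuine gap is in your ``empirical versus expected'' step. You transfer the per-phase increment through the matrix relation $\CovarianceExpectedIncrement{\jindex}_\hstep \preceq 2\, \CovarianceIncrement{\jindex}_\hstep + \lambdareg\Identity$ and then claim the $\lambdareg\Identity$ contribution can be ``absorbed into constants.'' It cannot: when you trace it against $(\ControllerCovariance^{ref}_\hstep)^{-1}$ (or $(\ExpectedCovariance{\jindex}_\hstep)^{-1}$), the slack term is $\lambdareg \operatorname{tr}\bigl((\ControllerCovariance^{ref}_\hstep)^{-1}\bigr)$, which is only bounded by $\dim$ since the smallest eigenvalue is of order $\lambdareg$. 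So your per-phase bound is $M = 4(\MagicTrigger+1) + \Theta(\dim)$, not $4(\MagicTrigger+1)$, and the final bound becomes $O\bigl((\TriggeringValue + \dim)\InformationGain_\hstep\bigr)$ rather than the claimed $8(\TriggeringValue+1)\InformationGain_\hstep$. This additive $\dim$ is intrinsic to the matrix route, not an artifact of loose constants: a phase can contain very few trajectories (even $\nSamples{\jindex}=1$), in which case $\CovarianceIncrement{\jindex}_\hstep$ is low rank while $\CovarianceExpectedIncrement{\jindex}_\hstep$ can be full rank, so no purely multiplicative matrix comparison between them can hold and the regularization slack is unavoidable. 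Since the whole point of $\InformationGain_\hstep$ is that it can be much smaller than $\dim$, this loss propagates and weakens the downstream regret bound.

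The paper avoids this by never applying matrix concentration to the increment. The only matrix concentration used is for the large reference covariance (your first spectral relation, which is fine). For the increment, the quantity needed is the scalar $\nSamples{\jindex}\, \Esah{\Policy{\jindex}}\|\phi_\hstep\|^2_{(\ControllerCovariance^{ref}_\hstep)^{-1}}$, so the paper applies an empirical Bernstein bound (its ``proportional estimates'' lemma) to the within-phase i.i.d.\ scalars $Z_\iindex = \|\phi_\hstep(\state_\iindex,\action_\iindex)\|^2_{(\ControllerCovariance^{ref}_\hstep)^{-1}} \in [0,1]$: because the trigger guarantees $\sum_\iindex Z_\iindex \geq \TriggeringValue = \Theta(\log(n\phase/\delta))$, the Bernstein deviation is dominated by half the empirical sum, yielding the purely multiplicative relation $\nSamples{\jindex}\E[Z] \leq 2\sum_\iindex Z_\iindex \leq 2(\TriggeringValue+1)$ with no dimension-dependent slack. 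Note also that the random-stopping-time issue you flag is handled there simply by a union bound over all sample sizes $n$ (with failure probability $\delta/(2n^2)$ each), so no stopped-martingale machinery is needed. Replacing your increment-concentration step with this scalar argument repairs the proof and recovers the stated constant.
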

Continuing the bound~\eqref{eqn:MainTermIntermediate}, recalling the
definition~\eqref{eqn:ExpectedUncertaintyParam} and applying
Cauchy-Schwartz gives us
\begin{align*}
\MainTerm & \lesssim \horizon^2 \nPhases \left( \sumh \left(
\UncertaintyParam{\nPhases}_\hstep \right)^2\InformationGain_\hstep
\right) \times 8 \left(\TriggeringValue + 1 \right) + \nEpisodes
\terr^2 \\ & \lesssim \horizon^2 \nPhases \sumh \left( \dim_\hstep +
\log\frac{\dim\nPhases \nEpisodes}{\delta}
\right)\InformationGain_\hstep \times 8 \left(\TriggeringValue + 1
\right) + \nEpisodes \terr^2.
\end{align*}

To conclude, it remains to bound the total number $\nPhases$ of
phases; we state the bound here and prove it in
\cref{sec:NumberOfPhases}.
\begin{lemma}[Number of Phases]
\label{lem:NumberOfPhases}
Under the conditions of \cref{lem:EllipticPotential}, the total number
of phases up to episode $\nEpisodes$ is upper bounded as
\begin{align}
\nPhases \leq \sumh \frac{\InformationGain_\hstep}{\log\left( 1 +
  \tfrac{1}{8} \TriggeringValue \right)}
\end{align}
with probability at least $1-\delta$.
\end{lemma}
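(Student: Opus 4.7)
The plan is to show that each phase terminating at a specific level $\hstep$ forces the log--determinant of the expected cumulative covariance $\ExpectedCovariance^{(\phase)}_\hstep$ to grow by at least $\log(1 + \tfrac{1}{8}\TriggeringValue)$, and then telescope this growth using the definition of information gain.

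First I would decompose the count as $\nPhases = \sumh N_\hstep$, where $N_\hstep$ is the number of phases whose triggering condition in \cref{line:NewPolicyTrigger} of \cref{alg:Sequoia} is activated by level $\hstep$. Consider any such phase $\phase$ with trigger level $\hstep$, so that the empirical accumulator satisfies $\ControllerTrigger_\hstep = \sum_{i=1}^{\nSamples{\phase}} \|\phi_{i\hstep}\|^2_{(\ControllerCovariance^{ref}_\hstep)^{-1}} \geq \TriggeringValue$, with $\phi_{i\hstep} \sim \Policy{\phase}$. Using the same high-probability event as in the proof of \cref{lem:EllipticPotential} and \cref{lem:BonusBound}, which relates $\ControllerCovariance^{ref}_\hstep$ to $\ExpectedCovariance^{(\phase)}_\hstep$ (up to constants) and transfers the empirical sum to its expectation via a martingale concentration inequality, the choice $\TriggeringValue = \Theta(\log(n\phase/\delta))$ is large enough to absorb the fluctuations and yield, with probability at least $1-\delta$,
\begin{align*}
\nSamples{\phase}\; \E_{\phi_\hstep \sim \Policy{\phase}}\!\left[\|\phi_\hstep\|^2_{(\ExpectedCovariance^{(\phase)}_\hstep)^{-1}}\right] \;\geq\; \tfrac{1}{8}\,\TriggeringValue.
\end{align*}

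Next I would convert this trace lower bound into a log-determinant lower bound. Since $\ExpectedCovariance^{(\phase+1)}_\hstep = \ExpectedCovariance^{(\phase)}_\hstep + \nSamples{\phase}\, \E_{\Policy{\phase}}[\phi_\hstep \phi_\hstep^\top]$, setting $M \defeq (\ExpectedCovariance^{(\phase)}_\hstep)^{-1/2} \bigl(\nSamples{\phase}\, \E_{\Policy{\phase}}[\phi_\hstep\phi_\hstep^\top]\bigr) (\ExpectedCovariance^{(\phase)}_\hstep)^{-1/2}$ yields $M \succeq 0$ and $\text{trace}(M) = \nSamples{\phase}\, \E_{\Policy{\phase}}[\|\phi_\hstep\|^2_{(\ExpectedCovariance^{(\phase)}_\hstep)^{-1}}]$. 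The elementary inequality $\prod_i (1+\lambda_i) \geq 1 + \sum_i \lambda_i$ for nonnegative $\lambda_i$ gives $\det(I+M) \geq 1 + \text{trace}(M)$, so
\begin{align*}
\log\det\bigl(\ExpectedCovariance^{(\phase+1)}_\hstep\bigr) - \log\det\bigl(\ExpectedCovariance^{(\phase)}_\hstep\bigr) \;=\; \log\det(I+M) \;\geq\; \log\bigl(1 + \tfrac{1}{8}\TriggeringValue\bigr).
\end{align*}

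Finally I would telescope over all phases triggering at level $\hstep$:
\begin{align*}
N_\hstep \cdot \log\bigl(1 + \tfrac{1}{8}\TriggeringValue\bigr) \;\leq\; \log\det\bigl(\ExpectedCovariance^{(\nPhases+1)}_\hstep\bigr) - \log\det(\lambdareg \Identity) \;\leq\; \InformationGain_\hstep,
\end{align*}
where the final inequality uses the definition of $\InformationGain_\hstep$ upon recognizing $\lambdareg^{-1} \ExpectedCovariance^{(\nPhases+1)}_\hstep - \Identity$ as a positive combination (arising from the mixture of $\Policy{1},\dots,\Policy{\nPhases}$) that is dominated by the maximum over $\pi$. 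Summing $N_\hstep$ over $\hstep$ gives the claimed bound. The principal obstacle is the concentration step that transfers the empirical accumulator $\ControllerTrigger_\hstep$ to the population quantity involving $\ExpectedCovariance^{(\phase)}_\hstep$, since $\nSamples{\phase}$ is adaptively chosen and $\ControllerCovariance^{ref}_\hstep$ is itself random; however, this is precisely the concentration already exploited in the proof of \cref{lem:EllipticPotential}, and the constant $8$ in the statement is calibrated to match.
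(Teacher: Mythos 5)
Your proposal is correct and follows essentially the same route as the paper: apply the concentration machinery (\cref{lem:ProportionalEstimate} and \cref{prop:ConcentrationRegularizedCovariance}) to convert the triggering condition into the lower bound $\nSamples{\phase}\,\E_{\Policy{\phase}}\|\phi_\hstep\|^2_{(\ExpectedCovariance{\phase}_\hstep)^{-1}} \geq \tfrac{1}{8}\TriggeringValue$, turn this into a multiplicative determinant increase of at least $1+\tfrac{1}{8}\TriggeringValue$ for each phase triggered at level $\hstep$ (your direct $\det(I+M)\geq 1+\mathrm{tr}(M)$ argument is exactly the lower bound in \cref{LemInfoGain} that the paper invokes), and telescope per level before summing the per-level switch counts and bounding the log-determinant ratio by $\InformationGain_\hstep$. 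The only cosmetic differences are that you prove the elementary determinant inequality inline rather than citing the lemma and make explicit the monotonicity across non-triggering phases, which the paper handles with a parenthetical remark.
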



\subsection{Proof of Lemma~\ref{lem:PhasedRegret}}
\label{sec:PhasedRegret}

The total regret up to episode $\nEpisodes$ can be expressed as a sum
of regret incurred in different phases
\begin{align}
\label{eqn:Regret2Regretphase}
\Regret(\nEpisodes) = \sump \Regretphase{\phase}.
\end{align}
Notice that in every phase $\phase$, the \qlearning{} procedure is
invoked with the
\prm{} $\sqlearningPolicyCover{\phase}$ which consists of the mixture
policy $\sum_{\jindex=1}^{\phase-1}
\nSamples{\jindex}\Policy{\jindex}$ and in addition \sqlearning{}
plays the policy $\Policy{\phase}$ between \cref{line:NewPolicyRepeat}
and \cref{line:NewPolicyTrigger} (in \cref{alg:Sequoia}) for exactly
$\nSamples{\phase}$ trajectories. Notice that in this proof the
sequence $\{\nSamples{\jindex}\}_1^\phase$ is assumed to be fixed,
i.e., non-random.

Outside of the call to \qlearning{}, \sqlearning{} induces a regret
exactly equal to
\begin{align}
\text{\sqlearning{}'s $\Regretphase{\phase} = \nSamples{\phase}\times
  \Estart \left(\Vstar_1 - \RealV{\phase}_1 \right)(\state)$}.
\end{align}  

In the same phase $\phase$, the regret due to the call to \qlearning{}
in \cref{line:SequoiaCallsQlearning} of \sqlearning{} is
\begin{align}
\text{\qlearning{}'s $\Regretphase{\phase} \propto \horizon
  \sum_{\jindex}^{\phase-1}\nSamplesRand{\jindex}\times \Estart
  \left(\Vstar_1 - \RealV{\jindex}_1 \right)(\state)$}
\end{align} 
where $\nSamplesRand{\jindex}$ is the random number of times that
policy $\Policy{\jindex}$ is actually played within \qlearning{} in
phase $\phase$.  Intuitively, $\nSamplesRand{\jindex} \approx
\nSamples{\phase}$ since $\E \nSamplesRand{\jindex} =
\nSamples{\phase}$. We make this precise by applying a Bernstein
inequality for martingales (cf. Thm. 1 from the
paper~\cite{beygelzimer2011contextual}).

Let $0 \leq X_\jindex \defeq \Es{\startdistribution} (\Vstar_1 -
\RealV{\jindex}_1)(\state') \leq 1$ be the random regret in step
$\jindex$ of \qlearning{}; here the randomness comes from the random
index $\jindex$ of the policy mixture. Let $n_{\text{tot}} =
\sum_{\jindex=1}^{\phase-1} \nSamples{\jindex} $; we can write
\begin{align*}
\sum_{t=1}^{n_\text{tot}} X_t =
\sum_{\jindex=1}^{\phase-1}\nSamplesRand{\jindex} \times
\Es{\startdistribution} \left(\Vstar_1 - \RealV{\jindex}_1
\right)(\state').
\end{align*}
By construction, the $X_t$'s are i.i.d., and $\E
\sum_{t=1}^{n_{\text{tot}}} X_t = \sum_{\jindex=1}^{\phase-1}
\nSamples{\jindex} \Es{\startdistribution} (\Vstar_1 -
\RealV{\jindex}_1)(\MyState')$.  Therefore, upon invoking Theorem 1
from the paper~\cite{beygelzimer2011contextual} and recalling that
$X^2_t \leq X_t$, we find that
\begin{align*}
  \sum_{t=1}^{n_{\text{tot}}} X_t & \leq \sum_{t=1}^{n_{\text{tot}}}
  \E X_t + 2\sqrt{\big(\sum_{t=1}^{n_{\text{tot}}} \E_t X_t \big)
    \log( \tfrac{1}{\delta})} + 2\log (\tfrac{1}{\delta})
\end{align*}
with probability at least $1-\delta$.  Completing the square on the
right hand side and applying Cauchy--Schwarz inequality yields the
upper bound $2 \sum_{t=1}^{n_{\text{tot}}} \E X_t +3 \log
\big(\tfrac{1}{\delta}\big)$.

Thus, the regret contributed by \qlearning{} in phase $p$ can be upper
bounded by the cumulative regret by \sqlearning{} (excluding its call
to \qlearning{})---viz.
\begin{align*}
\text{\qlearning{}'s $\Regretphase{\phase}$} & \lesssim \horizon
\sum_{\jindex}^{\phase-1}\nSamples{\jindex}\times \Estart
\left(\Vstar_1 - \RealV{\jindex}_1 \right)(\state) +
3\horizon\log\frac{1}{\delta}.
\end{align*} 
Summing together these contributions over all phases, and applying the
union bound over all possible phases yields the claim.


\subsection{Proof of \texorpdfstring{\fullref{lem:BonusBound}}{Problem Dependent Bonus}}
\label{sec:ProblemDependentBonus}

The main part of the proof is to show that the empirical covariance
matrices are accurate enough.  \sqlearning{} constructs the cumulative
covariance used to construct the bonus to be used in phase $\phase$ as
the sum of two terms: 1) the covariance estimate returned by
\qlearning{} in \cref{line:SequoiaCallsQlearning} of
\cref{alg:Sequoia} and 2) the increment obtained by \sqlearning{}
between \cref{line:NewPolicyRepeat} and \cref{line:NewPolicyTrigger}
of \cref{alg:Sequoia} . We can write:
\begin{align}
\label{eqn:InLemmaCovariance}
\Covariance{\phase}_\hstep = \underbrace{\lambdareg \Identity + \sum_{\iindex=1}^{\nSamples{1:\phase-1}} \phi_{\iindex\hstep}\phi_{\iindex\hstep}^\top}_{\text{\qlearning{}'s Covariance Estimate}} +  \underbrace{\sum_{\jindex=1}^{\nSamples{\phase}} \phi_{\jindex\hstep}\phi_{\jindex\hstep}^\top}_{\text{\sqlearning's increment}}.
\end{align}
For simplicity we have denoted with $\nSamples{1:\phase-1} =
\sum_{\iindex=1}^{\phase-1}\nSamples{\iindex}$; the first summation is
over the feature vectors $\{ \phi_\iindex \}$ sampled by \qlearning{}
and the second is over the feature vectors $\{ \phi_\jindex \}$
examined by \sqlearning. Notice that there exists a setting
$\lambdareg = \Theta(\log\frac{d}{\delta})$ that allows us to use
\fullref{prop:ConcentrationRegularizedCovariance} twice and claim with
probability at least $1-\delta/2$
\begin{align*}
\Covariance{\phase}_\hstep & = \frac{\lambdareg}{2} \Identity +
\sum_{\iindex=1}^{\nSamples{1:\phase-1}}
\phi_{\iindex\hstep}\phi_{\iindex\hstep}^\top + \frac{\lambdareg}{2}
\Identity + \sum_{\jindex=1}^{\nSamples{\phase}}
\phi_{\jindex\hstep}\phi_{\jindex\hstep}^\top \\ & \preceq
2\lambdareg\Identity + 2\nSamples{1:\phase-1}\E_{\phi \sim
  \sqlearningPolicyCover{\phase-1}} \phi_\hstep\phi_\hstep^\top +
2\nSamples{\phase}\E_{\phi \sim \Policy{\phase}}
\phi_\hstep\phi_\hstep^\top \\ & = 2\lambdareg\Identity +
2\nSamples{1:\phase}\E_{\phi \sim
  \sqlearningPolicyCover{\phase}}\phi_\hstep\phi_\hstep^\top \\ & =
2\ExpectedCovariance{\phase}_\hstep.
\end{align*}
(We have indicated with $\phi \sim \Policy{\phase}$ the random feature
sampled according to the policy mixture using the \prm{}).  We
conclude that under such event we must have
\begin{align*}
\bonusnew{\phase}_\hstep(\state, \action) =
\UncertaintyParam{\phase}_\hstep \| \phi_\hstep(\state, \action)
\|_{(\Covariance{\phase}_\hstep)^{-1}} \geq
\ExpectedUncertaintyParam{\phase}_\hstep\| \phi_\hstep(\state,
\action) \|_{(\ExpectedCovariance{\phase}_\hstep)^{-1}} &
=\UncertaintyFunction{\phase}_\hstep(\state, \action).
\end{align*}
and under the same event
\begin{align}
\bonusnew{\phase}_\hstep(\cdot,\cdot) =
\UncertaintyParam{\phase}_\hstep \| \phi_\hstep(\cdot,\cdot)
\|_{(\Covariance{\phase}_\hstep)^{-1}} \leq 2
\ExpectedUncertaintyParam{\phase}_\hstep\| \phi_\hstep(\cdot,\cdot)
\|_{(\ExpectedCovariance{\phase}_\hstep)^{-1}} & = \cboundedopt
\UncertaintyFunction{\phase}_\hstep(\cdot,\cdot).
\end{align}
A union bound over all possible phases and rescaling $\delta$
concludes.


\subsection{Proof of Lemma~\ref{lem:Optimism}}
\label{sec:Optimism}

When \qlearning{} terminates, \cref{thm:Q-learningExp} ensures it
returns a state-action value function $\Qbest{}$ such that
\begin{align*}
\Qbest{\hstep}(\state,\action) = \left( \T_\level \Qbest{\hstep+1}
\right)(\state,\action) + \err_{\hstep}(\state,\action), \qquad
\mbox{and} \\
\min\{0,\left(-\UncertaintyFunction{\phase}_\hstep+\bonusnew{\phase}_\hstep\right)(\state,\action)
\} - \Comparator{(\phase)}{\hstep}(\state,\action) \leq
\err_{\hstep}(\state,\action), 
\end{align*}
where both relations hold uniformly over all state-action pairs
$(\state, \action)$.  Conditioned on the event from
\cref{lem:BonusBound}, we have
\begin{align*}
-\Comparator{(\phase)}{\hstep}(\state,\action) = \min
\{0,-\UncertaintyFunction{\phase}_\hstep(\state,\action) +
\bonusnew{\phase}_\hstep(\state,\action) \}
-\Comparator{(\phase)}{\hstep}(\state,\action) \leq
\err_\hstep(\state,\action).
\end{align*}
which implies that $\err_{\hstep}(\state,\action) \geq
-\Comparator{(\phase)}{\hstep} (\state,\action)$ for all state-action
pairs and time steps $\hstep$.

Using this bound, we now perform backwards induction over the timestep
$\hstep$ in order to prove that
\begin{align}
  \label{EqnIndBound}
\Qhatstar_{\hstep+1}(\state_{\hstep+1},\action_{\hstep+1}) \geq
\Qstar_{\hstep+1}(\state_{\hstep+1},\action_{\hstep+1}) -
\sum_{\tau=\hstep+1}^\horizon
\E_{(\stateprime_\tau,\actionprime_\tau)\sim \pistar \mid
  (\state_{\hstep+1},\action_{\hstep+1})}
\Comparator{(\phase)}{\hstep}(\stateprime_\tau,\actionprime_\tau),
\qquad \forall (\state_{\hstep+1},\action_{\hstep+1})
\end{align}
For the base case $\hstep = \horizon$, all action-value functions are
zero, so that the bound~\eqref{EqnIndBound} certainly holds.

Now assume that the bound~\eqref{EqnIndBound} holds at timestep
$\hstep + 1$, for some $\hstep \in \{1, \ldots, \horizon-1 \}$; we
need to show that it also holds at timestep $\hstep$.  Fix an
arbitrary state-action pair $(\state, \action)$.  From our earlier
lower bound, we have
\begin{align*}
\Qbest{\hstep}(\state,\action) & = \left( \T_\hstep \Qbest{\hstep+1}
\right)(\state,\action) + \err_{\hstep}(\state,\action) \\ & \geq
\left( \T^{\pistar}_\hstep \Qbest{\hstep+1} \right)(\state,\action) +
\err_{\hstep}(\state,\action) \\
& \overset{\text{(i)}}{\geq} \left( \T^{\pistar}_\hstep
\Qstar_{\hstep+1} \right)(\state,\action) +
\err_{\hstep}(\state,\action) - \sum_{\tau=\hstep+1}^\horizon
\E_{(\stateprime_\tau,\actionprime_\tau)\sim \pistar \mid
  (\state,\action)}
\Comparator{(\phase)}{\hstep}(\stateprime_\tau,\actionprime_\tau) \\
& \geq \left( \T_\hstep \Qstar_{\hstep+1} \right)(\state,\action) -
\sum_{\tau=\hstep}^\horizon
\E_{(\stateprime_\tau,\actionprime_\tau)\sim \pistar \mid
  (\state,\action)}
\Comparator{(\phase)}{\hstep}(\stateprime_\tau,\actionprime_\tau).
\end{align*}
Here step (i) follows from the induction hypothesis.  Thus, we have
shown that the bound~\eqref{EqnIndBound} holds at timestep $\hstep$,
which completes our proof via induction.


\subsection{Proof of Lemma~\ref{lem:EllipticPotential}}
\label{sec:EllipticPotential}

We now prove the elliptic potential bound stated in
Lemma~\ref{lem:EllipticPotential}.  Let
$\phi^{(\jindex)}_{\iindex\hstep}$ be the $\iindex$ experienced
feature vector at level $\hstep$ that \sqlearning{} uses to check the
triggering condition in \cref{line:NewPolicyTrigger} during phase
$\jindex$.  Since $\lambdareg \geq 1$ we have $\| \phi
\|_{(\Covariance{\jindex}_\hstep)^{-1}} \leq 1, \; \forall \|\phi\|_2
\leq 1$. Thus, when the triggering condition holds, the condition
itself is not violated by much:
\begin{align}
\TriggeringValue\leq \sum_{\iindex=1}^{\nSamples{\jindex}} \|
\phi^{(\jindex)}_{\iindex \hstep}
\|^2_{(\Covariance{\jindex}_\hstep)^{-1}} \leq
\underbrace{\sum_{\iindex=1}^{\nSamples{\jindex}-1} \|
  \phi^{(\jindex)}_{\iindex \hstep}
  \|^2_{(\Covariance{\jindex}_\hstep)^{-1}}}_{\leq\TriggeringValue} +
\underbrace{\|\phi^{(\jindex)}_{\nSamples{\jindex} \hstep}
  \|^2_{(\Covariance{\jindex}_\hstep)^{-1}}}_{\leq 1} \leq
\TriggeringValue + 1.
\end{align}
Using \fullref{lem:ProportionalEstimate} and
\fullref{prop:ConcentrationRegularizedCovariance}, we find that with
probability at least $1-\delta$
\begin{align*}
\nSamples{\jindex} \Esah{\Policy{\jindex} }
\|\phi_\hstep(\state_\hstep,\action_\hstep)
\|_{(\ExpectedCovariance{\jindex}_\hstep)^{-1}}^2 & \leq 4
\nSamples{\jindex} \Esah{\Policy{\jindex} }
\|\phi_\hstep(\state_\hstep,\action_\hstep)
\|_{(\Covariance{\jindex}_\hstep)^{-1}}^2 \\ & \leq 8
\sum_{\iindex=1}^{\nSamples{\jindex}} \| \phi^{(\jindex)}_{\iindex
  \hstep} \|^2_{(\Covariance{\jindex}_\hstep)^{-1}} \leq 8
\left(\TriggeringValue + 1 \right) .
\end{align*}
Now recall that
\begin{align*}
\ExpectedCovariance{\jindex+1}_\hstep =
\ExpectedCovariance{\jindex}_\hstep + \nSamples{\jindex}
\Esah{\Policy{\jindex} }
\phi_\hstep(\state_\hstep,\action_\hstep)\phi_\hstep(\state_\hstep,\action_\hstep)^\top.
\end{align*}

Since $\left(\TriggeringValue + 1 \right) \geq e-1$, we can invoke
\fullref{LemInfoGain} so as to ensure that
\begin{align*}
\nSamples{\jindex} \Esah{\Policy{\phase} }
\|\phi_\hstep(\state_\hstep,\action_\hstep)
\|_{(\ExpectedCovariance{\jindex})^{-1}}^2 \leq 8
\left(\TriggeringValue + 1 \right) \log\frac{\det\left(
  \ExpectedCovariance{\jindex+1}_\hstep \right)}{\det\left(
  \ExpectedCovariance{\jindex}_\hstep \right)}.
\end{align*}
Summing over the phases and cancelling terms in the telescopic sum
yields
\begin{align*}
\sum_{\jindex=1}^{\phase} \nSamples{\jindex} \Esah{\Policy{\jindex}}
\|\phi_\hstep(\state_\hstep,\action_\hstep)\|_{(\ExpectedCovariance{\jindex}_\hstep)^{-1}}^2
& \leq 8 \left(\TriggeringValue + 1 \right) \log\frac{\det\left(
  \ExpectedCovariance{\phase+1}_\hstep \right)}{\det\left(
  \ExpectedCovariance{1}_\hstep \right)} \\ & \leq 8
\left(\TriggeringValue + 1 \right) \InformationGain_\hstep.
\end{align*}
A union bound over all possible phases concludes.


\subsection{Proof of \texorpdfstring{\fullref{lem:NumberOfPhases}}{Number of Phases}}
\label{sec:NumberOfPhases}

For invertible matrices $A$ and $B$, note that $A \preceq B$ implies
that $A^{-1} \succeq B^{-1}$, and moreover, we have the equivalence $A
\preceq B \Longleftrightarrow x^\top A x \leq x^\top B x$ for all $x
\in \real^d$.  We now combine Lemma~\ref{lem:ProportionalEstimate}
with Proposition~\ref{prop:ConcentrationRegularizedCovariance} so as
to argue that \cref{alg:Sequoia} makes sufficient progress.  In
particular, consider each time that \cref{line:NewPolicyTrigger} of
\cref{alg:Sequoia} triggers a new phase at level $\hstep$.  Then with
probability at least $1 - \delta$, we must have
\begin{align*}
\nSamples{\phase} \Esah{\Policy{\phase}} \|
\phi_\hstep(\state_\hstep,\action_\hstep)
\|^2_{(\ExpectedCovariance{\phase}_\hstep)^{-1}} & \geq \frac{1}{4}
\nSamples{\phase} \Esah{\Policy{\phase}} \|
\phi_\hstep(\state_\hstep,\action_\hstep)
\|^2_{(\Covariance{\phase}_\hstep)^{-1}} \\ & \geq \frac{1}{8}
\sum_{\iindex=1}^{\nSamples{\phase}} \| \phi^{(\phase)}_{\iindex
  \hstep} \|^2_{(\Covariance{\phase}_\hstep)^{-1}} \\ & \geq
\frac{1}{8} \TriggeringValue.
\end{align*}
When this bound holds and the the triggering condition is satisfied at
level $\hstep$ in phase $\phase$, then the information ratio must
increase by a constant fraction: more precisely, \cref{LemInfoGain}
guarantees that
\begin{align*}
\frac{\det(\ExpectedCovariance{\phase+1}_\hstep)}{\det
  \ExpectedCovariance{\phase}_\hstep} & =
\frac{\det\left(\ExpectedCovariance{\phase}_\hstep + \nSamples{\phase}
  \Esah{\Policy{\phase}
  }\phi_\hstep(\state_\hstep,\action_\hstep)\phi_\hstep(\state_{\hstep},\action_{\hstep})^\top\right)}{\det
  \ExpectedCovariance{\phase}_\hstep} \\ & \geq 1+ \nSamples{\phase}
\Esah{\Policy{\phase}} \| \phi_\hstep(\state_\hstep,\action_\hstep)
\|^2_{(\ExpectedCovariance{\phase}_\hstep)^{-1}} \\ & \geq
1+\frac{1}{8}\ControllerTriggeringValue{\deltaPhase}.
\end{align*}
By induction, in phase $\phase$ we must have (notice that we are
ignoring the additional contribution that arises when level $\hstep$
is not the one that triggers a new phase)
\begin{align}
\frac{\det \ExpectedCovariance{\phase+1}_\hstep}{\det \ExpectedCovariance{1}_\hstep} \geq \left( 1+\frac{1}{8}\TriggeringValue\right)^{\ControllerNSwithces{\phase}{\hstep}}
\end{align}
where $\ControllerNSwithces{\phase}{\hstep}$ is the number of switches up to phase $\phase$ that were triggered at level $\hstep$. Taking $\log$ gives
\begin{align}
\ControllerNSwithces{\phase}{\hstep} \leq \frac{\log\frac{\det \ExpectedCovariance{\phase+1}_\hstep}{\det \ExpectedCovariance{1}_\hstep}}{\log\left( 1+\frac{1}{3}\TriggeringValue\right) }. 
\end{align}
Recalling the total number of switches across levels equals the total
number of phases, i.e., $\sumh \ControllerNSwithces{\phase}{\hstep} =
\nPhases$, together with the relevant union bound over phases
concludes.

\section{Auxiliary results}
\label{sec:Qexp}

In this appendix, we collect together various auxiliary results
that we use in our main argument, along with their proofs.


\subsection{Information Gain}

\begin{lemma}[Upper Bound on Information Gain]
\begin{align}
\log\left( \frac{\det \ExpectedCovariance{\nPhases+1}_\hstep
}{\det\ExpectedCovariance{1}_\hstep}\right) \leq
\InformationGain_{\hstep} \leq \dim \log
\frac{\nEpisodes}{\dim\lambdareg}.
\end{align}
\end{lemma}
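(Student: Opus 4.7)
The plan is to prove the two inequalities separately; both follow from standard linear-algebraic identities once the cumulative covariance is rewritten in terms of a single mixture distribution.

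For the left inequality, I would first factor out $\ExpectedCovariance{1}_\hstep = \lambdareg\Identity$ and apply the identity $\det(\lambdareg(\Identity + A/\lambdareg))/\det(\lambdareg\Identity) = \det(\Identity + A/\lambdareg)$ to obtain
\[
\log\frac{\det \ExpectedCovariance{\nPhases+1}_\hstep}{\det\ExpectedCovariance{1}_\hstep}
= \log\det\Big(\Identity + \tfrac{1}{\lambdareg}\sum_{\jindex=1}^{\nPhases}\nSamples{\jindex}\,\E_{\phi_\hstep\sim\Policy{\jindex}}[\phi_\hstep\phi_\hstep^\top]\Big).
\]
Setting $\nSamples{1:\nPhases} = \sum_{\jindex=1}^{\nPhases}\nSamples{\jindex} \leq \nEpisodes$ and defining the stationary mixture $\bar\pi$ that plays $\Policy{\jindex}$ with probability $\nSamples{\jindex}/\nSamples{1:\nPhases}$, the weighted sum of covariances collapses to $\nSamples{1:\nPhases}\,\E_{\phi_\hstep\sim\bar\pi}[\phi_\hstep\phi_\hstep^\top]$. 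The quantity inside the $\log\det$ then matches the form in the definition of $\InformationGain_\hstep$ for the particular policy $\bar\pi$, so taking the supremum over policies upper-bounds it by $\InformationGain_\hstep$ (using that the map $M\mapsto\log\det(\Identity+cM)$ is monotone in $c>0$ on PSD cone to absorb $\nSamples{1:\nPhases}\leq\nEpisodes$).

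For the right inequality, I would apply the AM--GM determinant bound $\det(M)\leq(\mathrm{tr}(M)/\dim)^\dim$ to $M = \Identity + \tfrac{\nEpisodes}{\lambdareg}\E_{\phi_\hstep\sim\pi}[\phi_\hstep\phi_\hstep^\top]$, giving
\[
\log\det M \leq \dim\log\Big(1 + \tfrac{\nEpisodes}{\dim\lambdareg}\,\mathrm{tr}(\E_{\phi_\hstep\sim\pi}[\phi_\hstep\phi_\hstep^\top])\Big) \leq \dim\log\big(1 + \tfrac{\nEpisodes}{\dim\lambdareg}\big),
\]
where the last step uses the feature bound $\|\phi_\hstep\|_2\leq 1$ from \eqref{EqnBoundedFeatures}, which gives $\mathrm{tr}(\E[\phi_\hstep\phi_\hstep^\top])=\E\|\phi_\hstep\|_2^2\leq 1$. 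Taking the supremum over $\pi$ and bounding $\log(1+x)\leq\log(x)$ for $x\geq 1$ (which is the regime of interest, $\nEpisodes\geq\dim\lambdareg$) yields the stated bound $\dim\log(\nEpisodes/(\dim\lambdareg))$.

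There is no serious obstacle: both steps are standard. The only subtlety is aligning the free variable $\numobs$ in the definition of $\InformationGain_\hstep$ with $\nEpisodes$ (one takes $\numobs=\nEpisodes$, which is consistent with its usage elsewhere in the paper), and handling the small regime $\nEpisodes<\dim\lambdareg$ which is not interesting for the regret bound.
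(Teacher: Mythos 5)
Your argument is essentially the paper's: the right-hand inequality is obtained, as in the paper, by the AM--GM bound $\det(M) \leq (\Tr(M)/\dim)^{\dim}$ together with $\Tr(\E[\phi_\hstep\phi_\hstep^\top]) = \E\|\phi_\hstep\|_2^2 \leq 1$, and your left-hand inequality (rewriting $\ExpectedCovariance{\nPhases+1}_\hstep$ as $\lambdareg\Identity$ plus $\nSamples{1:\nPhases}$ times the covariance of the mixture policy, then using $\nSamples{1:\nPhases}\leq\nEpisodes$ and the maximum over policies) makes explicit a step the paper treats as immediate from the definition of $\InformationGain_\hstep$; this is fine provided the $\max_\pi$ is understood to range over mixture controllers, which is consistent with how the paper uses them.

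One step is stated incorrectly: the inequality $\log(1+x)\leq\log(x)$ for $x\geq 1$ is false (the reverse holds). What your computation actually yields is $\InformationGain_\hstep \leq \dim\log\bigl(1+\tfrac{\nEpisodes}{\dim\lambdareg}\bigr)$, which exceeds the stated bound by at most $\dim\log 2$ in the regime $\nEpisodes\geq\dim\lambdareg$ (use $\log(1+x)\leq \log(2x)$ there). This is not a structural gap --- the paper's own proof has exactly the same slack, since it drops the $\dim\lambdareg$ contribution of the regularizer when bounding $\Tr(\ExpectedCovariance{\nPhases}_\hstep)$ by $\nEpisodes$ --- but you should either state the bound with the $1+$ inside the logarithm or absorb the additive $\dim\log 2$ explicitly rather than invoking a false elementary inequality.
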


We must now bound the determinant of
$\ExpectedCovariance{\nPhases}_\hstep$ to compute the maximum number
of phases; we proceed in a way similar to Lemma 10 of \cite{Abbasi11},
the only difference being that the increments are not rank one.  Let
$\alpha_1,\dots,\alpha_\dim$ be the eigenvalues of
$\ExpectedCovariance{\nPhases}_\hstep$.  We must have
\begin{align}
\det \left( \ExpectedCovariance{\nPhases}_\hstep \right) =
\prod_\iindex \alpha_\iindex \leq \left( \frac{\sum_\iindex
  \alpha_\iindex}{\dim} \right)^\dim = \left(\frac{\Tr
  \ExpectedCovariance{\nPhases}_\hstep}{d} \right)^\dim.
\end{align}
We can upper bound the trace as follows:
\begin{align*}
\Tr \left(\ExpectedCovariance{\nPhases}_\hstep\right) =
\sum_{\phase=1}^{\nPhases}\Tr\left(\nSamples{\phase}
\CovarianceExpectedIncrement{\phase}_\hstep \right) & =
\sum_{\phase=1}^{\nPhases} \nSamples{\phase} \Tr
\left(\Esah{\Policy{\phase}}
\phi_\hstep(\state_\hstep,\action_\hstep)\phi_\hstep(\state_\hstep,\action_\hstep)^\top
\right) \\
& = \sum_{\phase=1}^{\nPhases} \nSamples{\phase}
\Esah{\Policy{\phase}} \Tr
\left(\phi_\hstep(\state_\hstep,\action_\hstep)\phi_\hstep(\state_\hstep,\action_\hstep)^\top
\right) \\
& \leq \sum_{\phase=1}^{\nPhases} \nSamples{\phase} \\
&
\leq \nEpisodes.
\end{align*}
Combining with the prior displays and recalling that
$\ExpectedCovariance{0}_\hstep = \lambdareg \Identity$ yields the
claim.


\subsection{Bounds on the information gain}
\label{SecBoundsInfoGain}

\begin{lemma}[Information gain bounds]
\label{LemInfoGain}
For any random vector $\phi \in \real^d$, scalar $\alpha > 0$ and
positive definite matrix $\Sigma$, we have the upper bound
\begin{subequations} 
\begin{align}
\label{EqnInfoUpper}    
\log \frac{\det(\Sigma + \alpha \E [\phi \phi^\top])}{\det \Sigma} &
\leq \alpha \E \| \phi \|^2_{\Sigma^{-1}}.
\end{align}
Moreover, we have the lower bounds
\begin{align}
\label{EqnInfoLower}
\log \frac{\det(\Sigma + \alpha \E [\phi \phi^\top])}{\det(\Sigma)}
\geq \log \left( 1+\alpha \E \|\phi \|^2_{\Sigma^{-1}} \right)
\overset{\text{(i)}}{\geq} \frac{\alpha}{L} \E \|\phi
\|^2_{\Sigma^{-1}},
\end{align}
where (i) holds whenever $\alpha \E \| \phi \|^2_{\Sigma^{-1}} \leq
L$ for some $L \geq e-1$,
\end{subequations}
\end{lemma}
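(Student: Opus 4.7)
The plan is to reduce the lemma to scalar facts about the eigenvalues of a single positive semidefinite matrix. Using the standard identity for positive definite $\Sigma$, I would write
\begin{align*}
\log \frac{\det(\Sigma + \alpha \E[\phi\phi^\top])}{\det \Sigma}
= \log \det \big( I + A \big),
\qquad
A \defeq \alpha \, \Sigma^{-1/2} \E[\phi\phi^\top] \Sigma^{-1/2}.
\end{align*}
Then $A \succeq 0$, and its trace is $\alpha \, \E\, \mathrm{tr}(\Sigma^{-1/2}\phi\phi^\top\Sigma^{-1/2}) = \alpha \E \|\phi\|^2_{\Sigma^{-1}}$. Writing $\lambda_1,\ldots,\lambda_d \geq 0$ for the eigenvalues of $A$, the log-det becomes $\sum_i \log(1+\lambda_i)$, and $\alpha \E \|\phi\|^2_{\Sigma^{-1}} = \sum_i \lambda_i$. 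The entire lemma thus reduces to scalar inequalities applied to the $\lambda_i$.

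For the upper bound~\eqref{EqnInfoUpper}, I would use the elementary bound $\log(1+\lambda) \leq \lambda$ for $\lambda \geq 0$ termwise, giving $\sum_i \log(1+\lambda_i) \leq \sum_i \lambda_i = \alpha \E \|\phi\|^2_{\Sigma^{-1}}$.

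For the first inequality in~\eqref{EqnInfoLower}, I would expand the product $\det(I+A) = \prod_i (1+\lambda_i)$ and drop all higher-order cross terms (which are non-negative since $\lambda_i \geq 0$) to conclude $\det(I+A) \geq 1 + \sum_i \lambda_i = 1 + \alpha \E \|\phi\|^2_{\Sigma^{-1}}$, and then take $\log$ of both sides. For the second inequality in~\eqref{EqnInfoLower}, I would apply the one-dimensional inequality $\log(1+x) \geq x/L$ valid for $x \in [0,L]$ whenever $L \geq e-1$. This scalar claim is itself verified by setting $f(x) \defeq \log(1+x) - x/L$ and noting $f(0)=0$, $f(L) = \log(1+L)-1 \geq 0$ precisely when $L \geq e-1$, while $f'(x) = \tfrac{1}{1+x} - \tfrac{1}{L}$ has a unique sign change, so $f$ is unimodal on $[0,L]$ and therefore non-negative throughout. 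Applying this with $x = \alpha \E \|\phi\|^2_{\Sigma^{-1}}$ completes the argument.

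None of the steps presents a real obstacle: the whole proof is a diagonalization of $A$ followed by two termwise scalar bounds and one short one-dimensional calculus lemma. The only point requiring any care is tracking that $A$ is genuinely PSD (so $\lambda_i \geq 0$) and that the hypothesis $\alpha \E \|\phi\|^2_{\Sigma^{-1}} \leq L$ places $x$ in the interval where the scalar inequality $\log(1+x) \geq x/L$ is valid.
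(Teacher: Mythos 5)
Your proposal is correct and follows essentially the same route as the paper's proof: diagonalize $\alpha\,\Sigma^{-1/2}\E[\phi\phi^\top]\Sigma^{-1/2}$, apply $\log(1+t)\leq t$ termwise for the upper bound, drop the non-negative higher-order terms of the product for the first lower bound, and finish with the scalar inequality $\log(1+x)\geq x/L$ on $[0,L]$ for $L\geq e-1$. The only cosmetic difference is that you verify this last scalar inequality by a unimodality/endpoint argument on $f(x)=\log(1+x)-x/L$, whereas the paper invokes concavity of $\log(1+x)$ via a chord (Jensen) argument; both are valid.
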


\begin{proof}
We first begin with equivalent expression for the determinant ratio.
Letting $\lambda_j(M)$ denote the $j^{th}$-ordered eigenvalue of a
matrix $M$, we have
\begin{align}
\frac{\det(\Sigma + \alpha \E \phi\phi^\top)}{\det \Sigma} =
\det(\Identity + \alpha \Sigma^{-\frac{1}{2}}\E [\phi\phi^\top]
\Sigma^{-\frac{1}{2}}) & = \prod_{j=1}^d \lambda_j \left(\Identity +
\alpha \Sigma^{-\frac{1}{2}}\E \phi\phi^\top \Sigma^{-\frac{1}{2}}
\right) \nonumber \\
\label{eqn:detratio}
& = \prod_{j=1}^d \left(1 + \alpha \lambda_j \left(
\Sigma^{-\frac{1}{2}}\E \phi\phi^\top \Sigma^{-\frac{1}{2}}\right)
\right).
\end{align}

\myparagraph{Proof of the upper bound~\eqref{EqnInfoUpper}} Taking
logarithms in equation~\eqref{eqn:detratio} and using the elementary
bound $\log(1 + t) \leq t$, valid for $t \geq 0$, we have
\begin{align*}
  \log \Big( \frac{\det(\Sigma + \alpha \E \phi\phi^\top)}{\det
    \Sigma} \Big) = \sum_{j=1}^d \log \left(1 + \alpha \lambda_j
  \left( \Sigma^{-\frac{1}{2}}\E \phi\phi^\top
  \Sigma^{-\frac{1}{2}}\right) \right) & \leq \sum_{j=1}^d \Big \{
  \alpha \lambda_j \big( \Sigma^{-\frac{1}{2}}\E \phi\phi^\top
  \Sigma^{-\frac{1}{2}} \big) \Big \}  \\
& \stackrel{(i)}{=} \alpha \Tr \Big( \Sigma^{-\frac{1}{2}}\E
  \phi\phi^\top \Sigma^{-\frac{1}{2}} \Big) \\
& \stackrel{(ii)}{=} \alpha \E \| \phi \|^2_{\Sigma^{-1}},
\end{align*}
where step (i) follows since the trace is equal to the sum
of the eigenvalues, and step (ii) follows from cyclic properties
of the trace operator, and some algebra.

\medskip

\myparagraph{Proof of the lower bound~\eqref{EqnInfoLower}} In order
to prove the lower bound, we again begin with
equation~\eqref{eqn:detratio}. Notice that the eigenvalues are all
non-negative since the matrix $\Sigma^{-\frac{1}{2}} \E
[\phi \phi^\top] \Sigma^{-\frac{1}{2}}$ is positive semidefinite. Thus,
we can ignore the higher-order terms in the product so as to obtain
the lower bound
\begin{subequations}
\begin{align}
\label{EqnInterLower}
\frac{\det(\Sigma + \alpha \E \phi\phi^\top)}{\det \Sigma} & \geq 1 +
\alpha \sum_{j=1}^d \lambda_j \left( \Sigma^{-\frac{1}{2}}\E
\phi\phi^\top \Sigma^{-\frac{1}{2}}\right) \; = \; 1 + \alpha \E \|
\phi \|^2_{\Sigma^{-1}},
\end{align} 
where the final equality follows by the same sequence of calculations
as in step (ii) above.

In order to complete the proof, observe that $f(x) = \log(1 + x)$ is a
concave function.  Thus, for any $a \geq e - 1$ and $x \in [0, a]$, we
can set $\lambda = \tfrac{x}{a} \in [0,1]$, and write
\begin{align}
\label{EqnLogLower}
\log(1 + x) \; = \; f(x) = f \big(\lambda a + (1-\lambda) 0 \big)
\stackrel{(iii)}{\geq} \lambda f(a) + (1-\lambda) f(0) \; = \; \lambda
f(a) \stackrel{(iv)}{\geq}
\lambda = \tfrac{x}{a},
\end{align}
\end{subequations}
where step (iii) follows from Jensen's inequality; and step (iv) is
valid for any $a \geq e - 1$.

Finally, taking logarithms in inequality~\eqref{EqnInterLower} and
applying the lower bound~\eqref{EqnLogLower} yields
\begin{align*}
\log \frac{\det(\Sigma + \alpha \E \phi\phi^\top)}{\det \Sigma} \geq
\log \left( 1+\alpha \E \|\phi \|^2_{\Sigma^{-1}} \right) \geq
\frac{\alpha}{L} \E \|\phi \|^2_{\Sigma^{-1}},
\end{align*}
as claimed.
\end{proof}

\subsection{Proportional estimates under the triggering condition}

\newcommand{\ShatEmp}{\ensuremath{\widehat{S}_\numobs}}
Suppose that the triggering condition holds, so that we have the lower
bound
\begin{align}
\label{EqnMyTrigger}
\sumi Z_\iindex \geq 32\csn{\frac{\delta}{2 \numobs^2}} +
8\cn{\frac{\delta}{2 \numobs^2}} \defeq
\ControllerTriggeringValue{\delta}.
\end{align}
The following lemma shows that under this condition, the sample
average $\ShatEmp \defeq \tfrac{1}{\numobs} \sumi Z_\iindex$ is close
to the expectation $\E[Z]$.
\begin{lemma}[Proportional Estimates]
\label{lem:ProportionalEstimate}
Under the triggering condition~\eqref{EqnMyTrigger}, for any $\delta
\in (0,1)$, we have the sandwich result
\begin{align}
\label{EqnSandwich}
\tfrac{1}{2} \ShatEmp \leq \E[Z] \leq \tfrac{3}{2} \ShatEmp \qquad
\mbox{with prob. at least $1 - \delta$.}
\end{align}
\end{lemma}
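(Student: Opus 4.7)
The plan is to apply Bernstein's inequality to the i.i.d.\ bounded sum $\sum_\iindex Z_\iindex = \numobs \ShatEmp$, leveraging the self-bounding property $\mathrm{Var}(Z) \leq \E Z$ (which is valid since the $Z_\iindex$ lie in $[0,1]$), and then convert the resulting concentration statement into the multiplicative sandwich~\eqref{EqnSandwich} by exploiting the strength of the triggering condition~\eqref{EqnMyTrigger} through an AM--GM decoupling step.

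Concretely, I would first invoke Bernstein's inequality at sample size $\numobs$ with failure probability $\delta/(2\numobs^2)$ and take a union bound over all $\numobs \geq 1$ (whose tail sum $\sum_\numobs 1/\numobs^2$ is finite), obtaining on an event of probability at least $1-\delta$ a time-uniform deviation inequality whose $\sqrt{\numobs}$-scale and $1$-scale terms correspond to $\csn{\delta/(2\numobs^2)}$ and $\cn{\delta/(2\numobs^2)}$ respectively. Next, I would apply $\sqrt{ab} \leq \tfrac{1}{4} a + b$ with $a = \numobs \mu$ (writing $\mu \defeq \E Z$) in order to decouple the cross term from $\mu$; this yields a bound of the shape $|\numobs\ShatEmp - \numobs \mu| \leq \tfrac{1}{4} \numobs\mu + C \bigl( \csn{\delta/(2\numobs^2)} + \cn{\delta/(2\numobs^2)} \bigr)$ for some absolute constant $C$.

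Unpacking the two sides of this decoupled inequality produces $\numobs\mu \leq \tfrac{4}{3} \bigl( \numobs\ShatEmp + C'\csn{\delta/(2\numobs^2)} + C''\cn{\delta/(2\numobs^2)} \bigr)$ on the upper side, and the symmetric statement $\numobs\mu \geq \tfrac{4}{5}\bigl( \numobs\ShatEmp - C'\csn{\delta/(2\numobs^2)} - C''\cn{\delta/(2\numobs^2)} \bigr)$ on the lower side. Substituting the triggering hypothesis~\eqref{EqnMyTrigger}, whose constants $32$ and $8$ are calibrated precisely so that the slack terms can be absorbed into $\numobs\ShatEmp/6$-type corrections on both sides, then yields $\tfrac{1}{2}\numobs\ShatEmp \leq \numobs \mu \leq \tfrac{3}{2}\numobs\ShatEmp$, which is exactly the claimed sandwich after dividing by $\numobs$.

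The main subtlety I anticipate is that the concentration statement must hold in a time-uniform sense: the triggering condition is checked round by round, so the index $\numobs$ at which it fires is effectively a random stopping time and Bernstein applied at a single fixed $\numobs$ would be insufficient. The $\delta/(2\numobs^2)$ factor in the hypothesis makes this explicit and addresses it via a simple union bound. Beyond that, the argument reduces to careful bookkeeping of the constants through the AM--GM step, so that the specific thresholds $32$ and $8$ in the triggering rule are just large enough to deliver the tight $\tfrac{1}{2}$--$\tfrac{3}{2}$ sandwich rather than a weaker multiplicative ratio.
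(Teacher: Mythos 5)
Your proposal is correct, but it takes a genuinely different route from the paper. The paper invokes the \emph{empirical} Bernstein inequality of Maurer--Pontil, bounds the empirical variance by (a multiple of) the sample mean $\ShatEmp$ itself, and so obtains a deviation bound whose slack is expressed directly in units of $\ShatEmp$; the triggering condition then immediately forces $|\ShatEmp - \E[Z]| \leq \ShatEmp/2$, and the sandwich falls out in one line. You instead use the standard Bernstein inequality with the population variance, control it via the self-bounding property $\Var(Z) \leq \E[Z]$, and then need the AM--GM decoupling $\sqrt{ab} \leq \tfrac{a}{4} + b$ to pull the unknown mean out of the square-root term, followed by a separate rearrangement on each side ($\tfrac{4}{3}$ and $\tfrac{4}{5}$ factors) to convert the additive bound into the multiplicative one. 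Both arguments share the essential time-uniformity device (the $\delta/(2\numobs^2)$ allocation and union bound over $\numobs$, needed because the trigger fires at a random time), and your constants do check out: the trigger threshold $32\csn{\cdot} + 8\cn{\cdot}$ is roughly $83\log(4/\delta')$, while your decoupled slack is about $\tfrac{8}{3}\log(2/\delta')$, so it is absorbed with ample room, giving $\E[Z] \leq \approx 1.38\,\ShatEmp$ and $\E[Z] \geq \approx 0.77\,\ShatEmp$. Your claim that the constants $32$ and $8$ are ``calibrated precisely'' for your bookkeeping is the one loose point---they were evidently chosen to match the empirical-Bernstein constants $\csn{\cdot}, \cn{\cdot}$ used in the paper---so you should state explicitly that your (smaller) standard-Bernstein slack is dominated by the same threshold. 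The trade-off: your route is more elementary (no empirical Bernstein needed) at the cost of the extra decoupling step and two-sided constant-chasing; the paper's route is shorter because the data-dependent variance proxy makes the trigger directly actionable.
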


In order to prove this claim, we first show that for any fixed
$\numobs$ at which the triggering conditioning holds, the sandwich
bound~\eqref{EqnSandwich} holds with probability at least $1 -
\delta'$, where $\delta' = \tfrac{\delta}{2 \numobs^2}$.  We can then
take a union bound over all $\numobs = 1, 2, \ldots$ to conclude that
for any $\numobs$, sandwich bound~\eqref{EqnSandwich} holds with
probability at least \mbox{$1 - \sum_{\numobs=1}^\infty
  \frac{\delta}{2 \numobs^2} \geq 1- \delta$,} as required.

Thus, for the remainder of the proof, we study the problem for a fixed
sample size $\numobs$.  Our proof is based two auxiliary results.
First, for i.i.d.  random variables $\{Z_i\}_{i=1}^\numobs$ taking
values in $[0,1]$, the sample average $\ShatEmp$ satisfies the
following empirical Bernstein bound: for any $\delta \in (0,1)$,
\begin{align}
\label{EqnEmpBern}  
\Pro \left[ \big|\ShatEmp - \E[Z] \big| \leq \sqrt{\tfrac{\csn{\delta}
      \widehat \Var Z }{n}} + \tfrac{\cn{\delta}}{n-1} \right] & \geq
1-\delta
\end{align}	
where $\csn{\delta} = 2 \log (\tfrac{4}{\delta})$; $\cn{\delta} =
\frac{7}{3} \log (\tfrac{4}{\delta})$, and
\begin{align*}
\widehat \Var Z = \frac{1}{n(n-1)} \sum \limits_{1 \leq i < j \leq
  n}(Z_i -Z_j)^2
\end{align*}
is the empirical variance.  This claim is a consequence of two
applications of the empirical Bernstein bound from the
paper~\cite{MP09}, as applied to the random variables $Z$ and then
$1-Z$, followed by a union bound to obtain the two-sided claim give
here.

Next, we observe that the sample variance can be upper bounded as
\begin{align}
\label{EqnVarBound}
\widehat \Var Z \defeq \frac{1}{n-1} \left(\sumi Z_i^2 - \sumi
Z_i\right) \leq \frac{1}{n-1} \sumi Z_i^2 \leq \frac{1}{n-1} \sumi Z_i
\leq \tfrac{\numobs}{\numobs-1} \; \leq \; 2,
\end{align}	
using the fact that each $Z_i \in [0,1]$.

Now combining the empirical Bernstein bound~\eqref{EqnEmpBern} with
the variance bound~\eqref{EqnVarBound},
we find that
\begin{align}
\label{eqn:Intermediate101}
\big|\ShatEmp - \Exs[Z] \big| & \leq \tfrac{1}{\numobs} \Big \{
\sqrt{2 \ShatEmp \csn{\delta'}} + 2\cn{\delta'} \Big \} \qquad
\mbox{with prob. at least $1 - \delta'$.}
\end{align}
The triggering condition~\eqref{EqnMyTrigger} ensures that $\numobs
\ShatEmp \geq 32\csn{\delta'} + 8\cn{\delta'}$, whence
$\tfrac{1}{\numobs} \csn{\delta} \leq \frac{\ShatEmp}{32}$ and
$\tfrac{1}{\numobs} \cn{\delta} \leq \frac{\ShatEmp}{8}$.
Putting together the pieces, we find that
\begin{align*}
|\ShatEmp - \Exs[Z]| \leq \tfrac{1}{\numobs} \big \{ \sqrt{2 \ShatEmp
  \csn{\delta}} + 2\cn{\delta} \big \} & \leq
\sqrt{\frac{(\ShatEmp)^2}{16}} + \frac{\ShatEmp}{4} \; = \;
\frac{\ShatEmp}{2} 
\end{align*}
\mbox{with probability at least $1- \delta'$.}  Re-arranging shows
that we have the sandwich relation $\tfrac{1}{2} \ShatEmp \leq \Exs[Z]
\leq \tfrac{3}{2} \ShatEmp$ with probability $1 - \delta'$, as
claimed.


\subsection{Non-Isotropic Proportional Estimates of the Empirical Covariance}

An important step in the analysis is ensuring that the empirical
covariance matrices computed by the algorithm are sufficiently close
to their (conditional) expectations.  In this section,
we discuss how to use matrix Chernoff techniques to establish
the requisite bounds.

Let $\{Z_\kindex \}_{\kindex=1}^\numobs$ be a sequence of independent,
symmetric and positive definite random matrices of dimension $\dim$.
$\dim$. Suppose that
\begin{align}
  \label{EqnZass}
0 \leq \lambda_{\min}(Z_\kindex), \qquad \lambda_{\max}(Z_\kindex)
\leq L_Z, \qquad \mbox{for all $\kindex = 1, \ldots, \nEpisodes$.}
\end{align}
The following result provides bounds on the sum $W = \sumk Z_\kindex$.
\begin{proposition}[Concentration of Regularized Covariance]
\label{prop:ConcentrationRegularizedCovariance}
Under the above conditions, for any $\delta \in (0,1)$ and $\lambda
\geq 2 L_Z \frac{\log\frac{2\dim}{\delta}}{\log\frac{36}{35}}$, we
have
\begin{align}
\label{eqn:RegularizedCovarianceSecondStatement}
\frac{1}{2}\left(W + \lambda\Identity \right)\preceq \E W +
\lambda\Identity \preceq \frac{3}{2}\left( W + \lambda\Identity
\right).
\end{align}
with probability at least $1 - \delta$.
\end{proposition}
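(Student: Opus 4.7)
The plan is to reduce the two-sided spectral sandwich to a multiplicative eigenvalue bound on the whitened matrix
\[
U := M^{-1/2}(W + \lambda \Identity) M^{-1/2}, \qquad M := \E W + \lambda \Identity,
\]
and then invoke matrix Chernoff. First, $M \succeq \lambda \Identity$ is invertible, $\E U = \Identity$ by linearity, and conjugating the desired sandwich by $M^{-1/2}$ turns it into the equivalent statement $\tfrac{2}{3}\,\Identity \preceq U \preceq 2\,\Identity$.

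\textbf{Writing $U$ as a sum of bounded PSD matrices.} Define $\tilde Z_k := M^{-1/2} Z_k M^{-1/2}$ and $Y_k := \tilde Z_k + \tfrac{\lambda}{n} M^{-1}$, so that $\sum_{k=1}^{n} Y_k = U$ and $\E \sum_k Y_k = \Identity$. Each $Y_k$ is PSD, and because $M^{-1} \preceq \lambda^{-1}\Identity$ we have $\lambda_{\max}(\tilde Z_k) \leq L_Z/\lambda$ and $\tfrac{\lambda}{n}\lambda_{\max}(M^{-1}) \leq 1/n$, giving the uniform bound $\lambda_{\max}(Y_k) \leq R := L_Z/\lambda + 1/n$. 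The whitening has thus converted the spectral sandwich into concentration of a sum of small independent PSD matrices around its mean $\Identity$, the canonical setting for matrix Chernoff.

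\textbf{Matrix Chernoff.} Tropp's matrix Chernoff inequality gives, for any $\theta \in (0,1]$,
\[
\Pro\bigl(\lambda_{\min}(U) \leq 1-\theta\bigr) \leq d \left(\frac{e^{-\theta}}{(1-\theta)^{1-\theta}}\right)^{\!1/R}, \qquad \Pro\bigl(\lambda_{\max}(U) \geq 1+\theta\bigr) \leq d \left(\frac{e^{\theta}}{(1+\theta)^{1+\theta}}\right)^{\!1/R}.
\]
Taking $\theta = 1/3$ in the lower tail and $\theta = 1$ in the upper tail yields exactly the two events $\{\lambda_{\min}(U) \leq 2/3\}$ and $\{\lambda_{\max}(U) \geq 2\}$. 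A short numerical check shows that the binding (lower-tail) base $e^{-1/3}(3/2)^{2/3}$ is at most $35/36$, so the hypothesis $\lambda \geq 2 L_Z \log(2d/\delta)/\log(36/35)$ drives $1/R$ above $\log(2d/\delta)/\log(36/35)$ and hence each tail probability below $\delta/2$. A union bound then delivers $\tfrac{2}{3}\,\Identity \preceq U \preceq 2\,\Identity$, which is equivalent to the claim.

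\textbf{Main obstacle.} The only conceptual step is the whitening, which turns the two-sided spectral sandwich into a multiplicative eigenvalue statement about a PSD sum with mean $\Identity$; without it matrix Chernoff would not apply directly because $\E W$ may be singular. After whitening, the remainder is a routine matrix Chernoff calculation, the only labor being to match the specific numerical constant $\log(36/35)$ in the hypothesis. A comparable alternative is matrix Bernstein applied to the zero-mean summands $\tilde Z_k - \E \tilde Z_k$, using the variance bound $\|\sum_k \E(\tilde Z_k - \E \tilde Z_k)^2\| \leq L_Z/\lambda$ together with $\|\tilde Z_k - \E \tilde Z_k\| \leq 2L_Z/\lambda$, which likewise yields $\|U - \Identity\| \leq 1/3$ once $\lambda \gtrsim L_Z \log(d/\delta)$.
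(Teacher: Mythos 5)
Your argument follows essentially the same route as the paper's proof: whiten by $M^{-1/2} = (\E W + \lambda \Identity)^{-1/2}$, distribute the regularizer across the $n$ summands so that the whitened sum has mean $\Identity$, bound each summand's top eigenvalue by $L_Z/\lambda + 1/n$, and invoke matrix Chernoff with a union bound. The one step that does not go through as written is the final numerical claim: you assert that the hypothesis $\lambda \geq 2 L_Z \log(2\dim/\delta)/\log(36/35)$ forces $1/R$ above $\log(2\dim/\delta)/\log(36/35)$, but $R = L_Z/\lambda + 1/n$ contains the term $1/n$, which the hypothesis on $\lambda$ does not control. Writing $c := \log(2\dim/\delta)/\log(36/35)$, you need $R \leq 1/c$, which is impossible whenever $n < c$ since $R \geq 1/n$; for instance with $n=1$ and $L_Z$ tiny one has $R \approx 1$ and the tail bound $\dim\,(35/36)^{1/R}$ is vacuous, even though the proposition imposes no lower bound on $n$ and is in fact true in that regime.

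The missing ingredient is a case split, which is exactly how the paper closes this. If $L_Z \leq \lambda/(2n)$, then $W \preceq n L_Z \Identity \preceq \tfrac{\lambda}{2}\Identity$ and likewise $\E W \preceq \tfrac{\lambda}{2}\Identity$, so the sandwich $\tfrac{1}{2}(W+\lambda\Identity) \preceq \E W + \lambda \Identity \preceq \tfrac{3}{2}(W + \lambda\Identity)$ holds deterministically; otherwise $1/n \leq 2L_Z/\lambda$, so $R \leq 3 L_Z/\lambda$ and the Chernoff exponent has the right order (up to the constant multiplying $L_Z$). Alternatively, the matrix Bernstein variant you sketch at the end sidesteps the issue entirely: the zero-mean summands $\tilde Z_k - \E \tilde Z_k$ sum exactly to $U - \Identity$ without splitting the regularizer, so no $1/n$ term ever appears; promoting that to your main argument, or adding the deterministic case, repairs the proof. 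Everything else---the whitening reduction, the equivalence with $\tfrac{2}{3}\Identity \preceq U \preceq 2\Identity$, the asymmetric choice of tails, and the eigenvalue bounds---is correct and mirrors the paper's argument.
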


\begin{proof}
In fact, we
establish a somewhat more general claim: namely, for any $\epsilon >
0$ and $\lambda > 0$, we have
  \begin{align}
\label{eqn:CovarianceConcentration}
\Pro\Big( \frac{1}{1 + \epsilon}\left(W + \lambda\Identity
\right)\preceq \E W + \lambda\Identity \preceq
\frac{1}{1-\epsilon}\left( W + \lambda\Identity \right) \Big) \geq
1-2\dim\left( 1-\frac{\epsilon^2}{4}
\right)^{\frac{\lambda}{L_Z+\frac{\lambda}{K}}}.
  \end{align}
  In order to recover the stated
  claim~\eqref{eqn:RegularizedCovarianceSecondStatement} we fix
  $\epsilon = \frac{1}{3}$. On one hand, if $L_Z \leq
  \frac{1}{2}\frac{\lambda}{K}$, then we have the (deterministic)
  sandwich relations
\begin{align*}
0 \preceq W \preceq \tfrac{1}{2} \lambda \Identity, \quad \mbox{and}
\quad 0 \preceq \E W \preceq \tfrac{1}{2} \lambda \Identity
\end{align*}
so that the bound~\eqref{eqn:RegularizedCovarianceSecondStatement}
holds deterministically.  On the other hand, if $L_Z \geq
\frac{1}{2}\frac{\lambda}{K}$, then the claim follows by choosing
$\lambda \geq 2 L_Z \frac{\log(\frac{2\dim}{\delta})}{\log
  (\frac{36}{35})}$.

In order to prove the bound~\eqref{eqn:CovarianceConcentration}, we
make use of the following matrix Chernoff inequality:
\begin{lemma}[Matrix Chernoff]
\label{lem:Tropp}
Consider the sum $Y = \sum_{\kindex=1}^K X_\kindex$ of a sequence
$\{X_\kindex \}_{\kindex \geq 1}$ of independent, symmetric PSD
matrices whose eigenvalues all lie in the interval $[0, L]$, and
suppose that $\Exs[Y] = I$.  Then we have
\begin{align}
\label{EqnTroppOne}    
(1-\epsilon) \Identity \preceq Y \preceq (1+\epsilon) \Identity
\end{align}
with probability at least $1 - 2 d \left(1-\frac{\epsilon^2}{4}
\right)^{\frac{1}{L}}$ for all $\epsilon \in (0,1)$.
\end{lemma}
This claim follows by applying Theorem 5.1.1 from
Tropp~\cite{tropp2015} twice, for the upper and lower tail
respectively, combined with the inequalities
\begin{align*}
\frac{e^{-\epsilon}}{(1-\epsilon)^{1-\epsilon}} \leq
1-\frac{\epsilon^2}{4}, \quad \mbox{and} \quad
\frac{e^{\epsilon}}{(1+\epsilon)^{1+\epsilon}} \leq
1-\frac{\epsilon^2}{4}, \quad \mbox{valid for any $\epsilon \in
  [0,1)$,}
\end{align*}
along with the fact that $a \leq b$ implies that $a^x \leq b^x$ for
all strictly positive scalars $a, b, x$.

Using~\cref{lem:Tropp}, we can now prove the
bound~\eqref{eqn:CovarianceConcentration}.  We define ``regularized''
versions of $Z_\kindex$ and $W$ via $X'_\kindex \defeq Z_\kindex +
\frac{\lambda}{\Kindex}\Identity$ and $Y' \defeq \sumk X'_\kindex$.
By definition, we have
\begin{align}
\label{eqn:YprimeDef}
Y' = \sumk \left( Z_\kindex + \frac{\lambda}{\Kindex} \Identity
\right) = W + \lambda \Identity \quad \mbox{and} \quad \E Y' = \E W +
\lambda \Identity.
\end{align}
Thus, in order to prove the claim, it suffices to establish a high
probability bound on the event
\begin{align}
\label{eqn:ConcentrationIntermediate}
\Event & \defeq \Big \{ (1-\epsilon) Y' \preceq \E Y' \preceq
(1+\epsilon) Y' \Big \}.
\end{align}
Since $\lambda_{\min}(\E Y') \geq \lambda$, the matrix $\E Y'$ is
strictly positive definite, and the matrix $\left( \E
Y'\right)^{-\frac{1}{2}}$ exists. We use it to define the new matrices
\begin{align}
\label{eqn:IntermediateXYdef}
X_\kindex & \defeq \left( \E Y'\right)^{-\frac{1}{2}} X'_\kindex
\left( \E Y'\right)^{-\frac{1}{2}}, \quad \mbox{and} \\
Y & \defeq \sumk X_\kindex = \left( \E Y'\right)^{-\frac{1}{2}} \left(
\sumk X'_\kindex \right) \left( \E Y'\right)^{-\frac{1}{2}} = \left(
\E Y'\right)^{-\frac{1}{2}} \left(Y' \right) \left( \E
Y'\right)^{-\frac{1}{2}}. \notag
\end{align}
Note that we have $\E[Y] = I$ by construction, so that the matrix
Chernoff bound~\eqref{EqnTroppOne} can be applied.  We observe that
\begin{align*}
\lambda_{\max}(X_\kindex) = \| X_\kindex \|_2 \leq \| \left( \E
Y'\right)^{-\frac{1}{2}} X'_\kindex \left( \E Y'\right)^{-\frac{1}{2}}
\|_2 \leq \| \left( \E Y'\right)^{-\frac{1}{2}} \|_2 \| X'_\kindex
\|_2 \|\left( \E Y'\right)^{-\frac{1}{2}} \|_2 \leq
\frac{1}{\lambda}\left( L_Z+\frac{\lambda}{K} \right) \defeq L.
\end{align*}
Applying the bound~\eqref{EqnTroppOne} yields
\begin{align*}
(1-\epsilon) \Identity \preceq Y \preceq (1+\epsilon) \Identity.
\end{align*}
with the stated probability.  Finally, we can pre- and post-multiply
by $(\E Y')^{\frac{1}{2}}$ and then use \cref{eqn:IntermediateXYdef}
so as to obtain
\begin{align*}
(1-\epsilon) \E Y' \preceq (\E Y')^{\frac{1}{2}} Y (\E
  Y')^{\frac{1}{2}} = Y' \preceq (1+\epsilon) \E Y'.
\end{align*}
Recalling the definition~\eqref{eqn:YprimeDef} of $Y'$, we see that
this sandwich is equivalent to the stated
claim~\eqref{eqn:RegularizedCovarianceSecondStatement}.
\end{proof}

\subsection{Concentration of Log Determinants}

In this section, we prove the following claim:
\begin{lemma}[Concentration of Log Determinants]
\label{lem:LogDet}
Let $\{x_\iindex\}$ be i.i.d. vector random variables from some
distribution such that $\| x_\iindex \|_2 \leq 1$.  If $\lambda
\gtrsim \log(\frac{dn}{\delta}) \geq 1$ and $G_1 \succeq \lambda
\Identity$ then with probability at least $1-\delta$ jointly for all
$n=1,2,\dots$ it holds that
\begin{align*}
\frac{1}{4} \log \frac{\det(G_1+ n\E xx^\top)}{\det G_1} - (8\sqrt{2}
+ 4)\log\frac{8n^2}{\delta} \leq \log \frac{\det\left( G_1 + \sumi
x_ix_i^\top \right)}{\det G_1 } \leq 8 \log \frac{\det(G_{1} + n\E
xx^\top) }{\det G_1 } + 8\log\frac{8n^2}{\delta}.
\end{align*}
\end{lemma}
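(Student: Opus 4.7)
Write $W_n := \sum_{i=1}^n x_i x_i^\top$ and $M_n := G_1 + n\,\Exs[xx^\top]$. I want a two-sided, uniform-in-$n$ deviation bound for $\log\det(G_1 + W_n)/\det G_1$ around its deterministic counterpart $\log\det M_n/\det G_1$.

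The first step is to apply Proposition~\ref{prop:ConcentrationRegularizedCovariance} to the summands $Z_i = x_i x_i^\top$, with its regularizer $\lambda\Identity$ replaced by the matrix $G_1$. The proof of that proposition only uses the regularizer through the lower bound $\lambda_{\min}(\lambda\Identity) \geq \lambda$, and so extends unchanged under the hypothesis $G_1 \succeq \lambda\Identity$. I will invoke it once per $n$ with failure probability $\delta_n := \delta/(8n^2)$ and union-bound, obtaining a single event of probability at least $1-\delta$ on which
\[
\tfrac{1}{2}(G_1 + W_n) \preceq M_n \preceq \tfrac{3}{2}(G_1 + W_n) \qquad \text{for all } n \geq 1.
\]
The required hypothesis $\lambda \gtrsim \log(d/\delta_n) = \log(8dn^2/\delta)$ is implied by the standing assumption $\lambda \gtrsim \log(dn/\delta)$ after absorbing absolute constants.

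Next, I translate this operator sandwich into a log-determinant sandwich. From $G_1 + W_n \preceq 2M_n$ and $\tfrac{2}{3}M_n \preceq G_1 + W_n$, taking $\log\det$ and subtracting $\log\det G_1$ gives a preliminary two-sided bound of the form $\log\det(G_1 + W_n)/\det G_1 = \log\det M_n/\det G_1 \pm O(d)$.

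The main obstacle is converting this $O(d)$ additive slack into an $O(\log(n^2/\delta))$ additive term together with the asymmetric multiplicative constants $(\tfrac{1}{4}, 8)$ in the statement. To close the gap I plan to sharpen the matrix Chernoff step behind Proposition~\ref{prop:ConcentrationRegularizedCovariance} to a matrix Bernstein inequality applied to the preconditioned centered sum $M_n^{-1/2}(W_n - n\,\Exs xx^\top)M_n^{-1/2}$. Each of its summands has operator norm at most $1/\lambda$, and the variance proxy is bounded by $\operatorname{tr}(M_n^{-1}\,n\,\Exs xx^\top)$, which by Lemma~\ref{LemInfoGain} is itself at most $\log\det(M_n)/\det G_1$. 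This should yield an operator-norm deviation of order $\sqrt{\log\det(M_n/G_1)\log(n/\delta)/\lambda} + \log(n/\delta)/\lambda$; plugging this back into the log-determinant identity via $\log\det(I+A) \leq \operatorname{tr}(A)$ together with its reverse counterpart in the small-eigenvalue regime produces the $\sqrt{2}$ prefactor appearing in the $(8\sqrt{2}+4)\log(8n^2/\delta)$ additive term, and the asymmetric multiplicative constants $(\tfrac{1}{4}, 8)$ arise naturally because the upper and lower $\log\det$ inequalities are of different types (trace upper bound versus half-trace lower bound).
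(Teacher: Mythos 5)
Your first two steps are fine (and the extension of Proposition~\ref{prop:ConcentrationRegularizedCovariance} from $\lambda\Identity$ to a general $G_1\succeq\lambda\Identity$ is indeed harmless), but the proposal stops exactly where the difficulty begins, and the sketched fix does not close it. Matrix Bernstein applied to $E_n := M_n^{-1/2}(W_n - n\,\E xx^\top)M_n^{-1/2}$ controls the \emph{operator norm} of $E_n$; translating an operator-norm bound into a bound on $\log\det(\Identity+E_n)$ costs a factor of $d$, which is precisely the $O(d)$ slack you set out to remove. The inequality $\log\det(\Identity+E_n)\le\operatorname{tr}(E_n)$ does give a dimension-free \emph{upper} bound, but then the quantity to be concentrated is the scalar $\operatorname{tr}(E_n)=\sum_i\bigl(\|x_i\|^2_{M_n^{-1}}-\E\|x\|^2_{M_n^{-1}}\bigr)$, i.e.\ you need a scalar Bernstein bound, not a matrix one; and, more importantly, the \emph{lower} bound (the $\tfrac14$ side of the lemma) requires a reverse inequality of the form $\log\det(\Identity+E_n)\ge \operatorname{tr}(E_n)-c\,\operatorname{tr}(E_n^2)$, so you must control $\operatorname{tr}(E_n^2)$ with high probability. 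Nothing in the proposal addresses this term; the natural bounds ($\operatorname{tr}(E_n^2)\le d\,\|E_n\|_{op}^2$, or $\operatorname{tr}(E_n^2)\le\|E_n\|_{op}\operatorname{tr}|E_n|$ with $\operatorname{tr}|E_n|\lesssim\log\frac{\det M_n}{\det G_1}$) either reintroduce $d$ or produce a multiple of the population log-determinant whose constant is not guaranteed to be below one, in which case the lower bound becomes vacuous. The assertion that the constants $(\tfrac14,8,8\sqrt2+4)$ ``arise naturally'' is therefore unsupported. A smaller point: the variance-proxy inequality $\operatorname{tr}(M_n^{-1}\,n\,\E xx^\top)\le\log\frac{\det M_n}{\det G_1}$ is true, but it is not what Lemma~\ref{LemInfoGain} states (that lemma bounds the log-determinant increment \emph{above} by a trace against the inverse of the \emph{base} matrix); you would need a separate argument for it.

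For contrast, the paper avoids the preconditioned-matrix route altogether by working sequentially: it writes the empirical log-determinant ratio as $\sum_i\log\bigl(1+\|x_i\|^2_{G_i^{-1}}\bigr)$ with $G_i = G_1+\sum_{j<i}x_jx_j^\top$ (the elliptic-potential identity of \cite{Abbasi11}), sandwiches it between $\tfrac12\sum_i\|x_i\|^2_{G_i^{-1}}$ and $\sum_i\|x_i\|^2_{G_i^{-1}}$, swaps $G_i$ for $\E G_i = G_1+(i-1)\E xx^\top$ using Proposition~\ref{prop:ConcentrationRegularizedCovariance}, concentrates the resulting scalar sum with a martingale Bernstein inequality, and finally converts its compensator $\sum_i\E\|x_i\|^2_{(\E G_i)^{-1}}$ into the population log-determinant ratio by telescoping Lemma~\ref{LemInfoGain}. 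Every step there is dimension-free and the stated constants fall out of completing the square. If you wish to salvage your non-sequential route, you would at minimum need to supply the scalar Bernstein bound for $\operatorname{tr}(E_n)$ and a genuinely small high-probability bound on $\operatorname{tr}(E_n^2)$ (of order $\lambda^{-1}\log\frac{\det M_n}{\det G_1}$, say), neither of which appears in the proposal.
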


Let us now prove it.  Let $G_\iindex = G_1 + \sum_{\jindex =
1}^{\iindex-1} x_\jindex x_\jindex^\top$.  Using Lemma 11 in
\cite{Abbasi11} we can write
\begin{align*}
\frac{1}{2}\sumi \| x_\iindex \|^2_{G^{-1}_\iindex}  \leq \log \frac{\det\left( G_0  + \sumi x_ix_i^\top \right)}{\det G_0 } \leq \sumi \| x_\iindex \|^2_{G^{-1}_\iindex}.
\end{align*}
Thus, we will now focus on bounding the sums of the quadratic
functions.  \cref{prop:ConcentrationRegularizedCovariance} together
with a double union bound over $n$ ensures that if $\lambda \gtrsim
\log(\frac{dn}{\delta})$ then for all $n$ with probability at least
$1-\delta/2$, we have
\begin{align*}
\sumi \| x_\iindex \|^2_{G^{-1}_\iindex} = \sumi x_\iindex
G^{-1}_\iindex x_\iindex & \leq 2 x_\iindex \Big( G_1 +
\E\sum_{\jindex=1}^{\iindex-1} x_\jindex x_\jindex^\top
\Big)^{-1}x_\iindex \\
& = 2 x_\iindex \Big( \underbrace{G_1 + (\iindex -1)\E x x^\top}_{= \E
  G_{i}} \Big)^{-1}x_\iindex \; = \; 2 \| x_i\|^2_{(\E G_{i})^{-1}}.
\end{align*}
Similarly, under the same event specified by
\cref{prop:ConcentrationRegularizedCovariance}, we have
\begin{align*}
\sumi \| x_\iindex \|^2_{G^{-1}_\iindex} = \sumi x_\iindex
G^{-1}_\iindex x_\iindex & \geq \frac{1}{2} x_\iindex \Big( G_1 +
\E\sum_{\jindex=1}^{\iindex-1} x_\jindex x_\jindex^\top
\Big)^{-1}x_\iindex \\ & = \frac{1}{2} x_\iindex \Big( \underbrace{G_1
  + (\iindex -1)\E x x^\top}_{= \E G_{i}} \Big)^{-1}x_\iindex \; = \;
\frac{1}{2} \| x_i\|^2_{(\E G_{i})^{-1}}.
\end{align*}

Now consider the random variable $X_i = \| x_i\|^2_{(\E G_{i})^{-1}} -
\E \| x_i\|^2_{(\E G_{i})^{-1}}$.  Since $\E G_\iindex \succeq
\Identity$, we have
\begin{align*}
\E X^2_i \leq \E \| x_i\|^4_{(\E G_{i})^{-1}} \leq \E \| x_i\|^2_{(\E G_{i})^{-1}} \leq 1.
\end{align*}
Applying a Bernstein martingale inequality (cf. Theorem 1 from the
paper~\cite{beygelzimer2011contextual}) and combining with the union
bound yields
\begin{align*}
\Bigg| \sumi X_i \Bigg| \leq 2 \sqrt{\left(\sumi \E \| x_i\|^2_{(\E
    G_{i})^{-1}} \right) \log (\tfrac{8 n^2}{\delta})} + 2
\log(\tfrac{8 n^2}{\delta})
\end{align*}
with probability at least $1-\delta/2$.

It remains to bound the sum of the predictable expectations under
square root. Coupling \fullref{LemInfoGain} with
\fullref{LemInfoGain} under the conditions $\alpha = 1, L =
2 > e-1$ we obtain
\begin{align}
\label{eqn:QsumBound}
\log \frac{\det(\E  G_\iindex +  \E x_\iindex x_\iindex ^\top)}{\det \E  G_\iindex} \leq \E \|x_i \|^2_{(\E G_\iindex)^{-1}} \leq 2\log \frac{\det(\E  G_\iindex +  \E x_\iindex x_\iindex ^\top)}{\det \E  G_\iindex }.
\end{align}
Summing over $\iindex \in [n]$, recalling $\E  G_{\iindex+1} = \E  G_\iindex +  \E x_\iindex x_\iindex^\top$ and cancelling the terms in the telescoping sum gives
\begin{align*}
\log \frac{\det \E G_{n+1}}{\det   G_1 } \leq \sumi \E \|x_i \|^2_{(\E G_\iindex)^{-1}} \leq 2\log \frac{\det \E G_{n+1}}{\det   G_1 }.
\end{align*}
We are now ready to show the upper bound. Removing the absolute value, using \cref{eqn:QsumBound} to bound the quadratic sum one obtains with probability $1-\delta$
\begin{align*}
\sumi \| x_\iindex \|^2_{G^{-1}_\iindex}
& \leq 4\log \frac{\det(\E G_{n+1}) }{\det G_1 } + 4 \sqrt{ 2\log \frac{\det(\E G_{n+1}) }{\det G_1 } \log\frac{8n^2}{\delta}} + 4\log\frac{8n^2}{\delta} \\
& \leq 8\log \frac{\det(\E G_{n+1}) }{\det G_1 } +8\log\frac{8n^2}{\delta}.
\end{align*}
The last inequality follows from completing the square and using Cauchy-Schwartz to simplify the statement.

We show the lower bound on the similar fashion. By lifting the
absolute value one obtains with probability at least $1-\delta$
\begin{align*}
\sumi \| x_\iindex \|^2_{G^{-1}_\iindex} & \geq \frac{1}{2}\log
\frac{\det(\E G_{n+1}) }{\det G_1 } - 4 \sqrt{ 2\log \frac{\det(\E
    G_{n+1}) }{\det G_1 } \log\frac{8n^2}{\delta}} -
4\log\frac{8n^2}{\delta} \\
& \geq \frac{1}{4} \log \frac{\det(\E G_{n+1})}{\det G_1} - (8\sqrt{2}
+ 4)\log\frac{8n^2}{\delta}.
\end{align*}


\subsection{Constrained Loss Lemmas}
In this section, let $\E$ denote the expectation operator for a pair
$(X, Y) \sim \mu$, where $x \in\R^d$ and $y \in \R$. Assume that the
second moment matrix $\E[X X^\top]$ is strictly positive
definite. Define the loss function $\L(\theta) =
\E(\smlinprod{X}{\theta} -Y)^2$, along with the unconstrained
minimizer $\theta^\star \defeq \arg \min_{\theta \in \real^d}
\L(\theta)$.  Our first result gives an equivalent expression for the
excess loss $\L(\theta) - \L(\theta^\star)$.
\begin{lemma}[Excess Loss]
\label{lem:ExcessLoss}
The excess loss can be written as
\begin{align}
\L(\theta) - \L(\theta^\star) = \| \theta - \theta^\star \|^2_{\E[X
    X^\top]}.
\end{align}
\end{lemma}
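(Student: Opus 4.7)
The plan is the standard quadratic-decomposition argument for the excess risk of an unconstrained least-squares estimator. The key observation is that $\L$ is a convex quadratic in $\theta$, so the unconstrained minimizer $\theta^\star$ is characterized by the first-order stationarity condition $\nabla \L(\theta^\star) = 0$, which reads $\E[X(\smlinprod{X}{\theta^\star} - Y)] = 0$. This is the orthogonality property I will exploit; the hypothesis that $\E[XX^\top]$ is strictly positive definite ensures both that $\theta^\star$ is uniquely defined and that the Hessian of $\L$ is nonsingular, so stationarity is equivalent to global minimality.

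Given this, I would simply add and subtract $\smlinprod{X}{\theta^\star}$ inside the square defining $\L(\theta)$, writing
\begin{align*}
\smlinprod{X}{\theta} - Y = \smlinprod{X}{\theta - \theta^\star} + \bigl(\smlinprod{X}{\theta^\star} - Y\bigr),
\end{align*}
squaring, and taking expectations. The cross term equals
\begin{align*}
2\,\E\Bigl[ \smlinprod{X}{\theta - \theta^\star}\bigl(\smlinprod{X}{\theta^\star} - Y\bigr)\Bigr] = 2\,\smlinprod{\theta - \theta^\star}{\E[X(\smlinprod{X}{\theta^\star}-Y)]} = 0
\end{align*}
by the stationarity condition. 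Therefore $\L(\theta) = \E\smlinprod{X}{\theta-\theta^\star}^2 + \L(\theta^\star)$.

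The remaining step is to rewrite the first term in matrix form: by linearity of expectation and cyclicity of trace,
\begin{align*}
\E\smlinprod{X}{\theta-\theta^\star}^2 = (\theta-\theta^\star)^\top \E[XX^\top](\theta-\theta^\star) = \|\theta - \theta^\star\|_{\E[XX^\top]}^2,
\end{align*}
which yields the claim upon subtracting $\L(\theta^\star)$ from both sides. There is no genuine obstacle; the only subtlety is invoking the unconstrained stationarity condition rather than a KKT condition, which is why the lemma is stated for the unconstrained minimizer and relies on $\E[XX^\top] \succ 0$ to guarantee its existence and uniqueness.
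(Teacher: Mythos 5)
Your proof is correct and follows essentially the same route as the paper: both arguments rest on the unconstrained stationarity condition $\E[X(\smlinprod{X}{\theta^\star}-Y)]=0$ to annihilate the cross term, with the only difference being that you expand $(\smlinprod{X}{\theta-\theta^\star}+(\smlinprod{X}{\theta^\star}-Y))^2$ directly while the paper factors the difference of squares---an equivalent algebraic manipulation. No gap.
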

\begin{proof}
Since the loss is a strongly convex quadratic function, the minimizer
must satisfy the zero-gradient condition
\begin{align}
\label{eqn:OptimalityConditionUnconstrained}
0 & = \tfrac{1}{2} \nabla_\theta\L(\theta^\star) = \evalat{ \E [X
    (\smlinprod{X}{\theta} - Y)]}{\theta = \theta^\star} \Longrightarrow
  \E [X X^\top] \theta^\star = \E[X Y].
\end{align}
We now use this relation to establish the claim.  We have
\begin{align*}
\L(\theta) - \L(\theta^\star) & = \E( \smlinprod{X}{\theta} - Y)^2 -
\E(\smlinprod{X}{\theta^\star} - Y)^2 \\
& = \E\Big[( \smlinprod{X}{\theta} - Y )-(\smlinprod{X}{\theta^\star}
  - Y )\Big] \Big[( \smlinprod{X}{\theta} - Y ) +
  (\smlinprod{X}{\theta^\star} - Y ) \Big] \\
& = \E\Big[(\theta-\theta^\star)^\top X\Big] \Big[X^\top(\theta -
  \theta^\star) - Y + 2 \smlinprod{X}{\theta^\star} - Y \Big] \\
& = \Big[(\theta-\theta^\star)^\top (\E X X ^\top)
   (\theta-\theta^\star) \Big] + 2\E\Big[ (\theta -\theta^\star)^\top
   \left(X X^\top \theta^\star - X Y\right)\Big] \\
& = \| \theta-\theta^\star \|^2_{\E X X^\top} +
 2(\theta-\theta^\star)^\top \underbrace{\Big[ \E(X X
     ^\top)\theta^\star - \E[ X Y] \Big]}_{= 0 \; \text{by
     \cref{eqn:OptimalityConditionUnconstrained}}}.
\end{align*}
\end{proof}

\begin{lemma}[Excess Risk with Regularization]
  \label[lemma]{lem:Sigma2ERM}  
For a fixed $\lambda > 0$, define
\begin{align*}
  \L(w) & = \tfrac{1}{2} \E_{(X,Y)} \big( \smlinprod{X}{w} - Y
  \big)^2, \quad \mbox{and} \quad w^\star \in \argmin_{\| w \|_2 \leq
    \BallRad} \L(w).
\end{align*}
Then for any scalar $M > 0$, we have
\begin{align}
\| w - w^\star\|^2_{(M \E_{X} [X X^\top] + \lambda I)}\leq 2 M \big(
\L(w) - \L(w^\star) \big) + \lambda \| w - w^\star \|_2^2.
\end{align}
\end{lemma}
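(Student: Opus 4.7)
The plan is to decouple the two ``pieces'' of the weighted norm on the left-hand side. Because $M \E_X[XX^\top] + \lambda I$ is a sum, we have the exact identity
\begin{align*}
\| w - w^\star\|^2_{(M \E_{X} [X X^\top] + \lambda I)} \;=\; M \| w - w^\star\|^2_{\E_X[XX^\top]} \;+\; \lambda \| w - w^\star\|_2^2.
\end{align*}
The second term already appears on the right-hand side, so it suffices to establish the inequality $\|w - w^\star\|^2_{\E_X[XX^\top]} \leq 2(\L(w) - \L(w^\star))$. This is the constrained analogue of the identity proved in \cref{lem:ExcessLoss}.

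To get there, I would redo the expansion used in the proof of \cref{lem:ExcessLoss} without appealing to the unconstrained first-order condition $\E[XX^\top]\theta^\star = \E[XY]$. Concretely, writing $\L(w) = \tfrac{1}{2}\E(\langle X,w\rangle - Y)^2$ and expanding the difference of squares gives
\begin{align*}
\L(w) - \L(w^\star) \;=\; \tfrac{1}{2} \| w - w^\star\|^2_{\E_X[XX^\top]} \;+\; \bigl\langle w - w^\star,\; \E[XX^\top] w^\star - \E[XY] \bigr\rangle.
\end{align*}
The new ingredient is that $\E[XX^\top]w^\star - \E[XY] = \nabla \L(w^\star)$, and since $w^\star$ minimizes the convex function $\L$ over the convex set $\{w : \|w\|_2 \leq \BallRad\}$, the first-order optimality condition yields $\langle w - w^\star, \nabla \L(w^\star)\rangle \geq 0$ for every feasible $w$. (The unconstrained proof obtained equality because in that case the entire gradient vanishes; in the constrained setting we only get an inequality, but one of the right sign.)

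Combining these two observations gives $\tfrac{1}{2}\|w - w^\star\|^2_{\E_X[XX^\top]} \leq \L(w) - \L(w^\star)$, which multiplied by $2M$ and added to $\lambda\|w - w^\star\|_2^2$ yields exactly the claimed inequality. The only mild subtlety -- really the ``main obstacle'', though it is a minor one -- is verifying that the first-order variational inequality is applicable, which requires $w$ itself to be feasible; this matches the use case in \cref{sec:Convergence2PopulationMinimizer}, where the lemma is applied to the constrained empirical minimizer $\widehat\theta$ with $\|\widehat\theta\|_2 \leq \BallRad$.
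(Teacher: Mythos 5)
Your proposal is correct and follows essentially the same route as the paper's own proof: expand the excess risk via the difference of squares, invoke the constrained first-order optimality (variational) inequality $\smlinprod{w - w^\star}{\nabla \L(w^\star)} \geq 0$ at the constrained minimizer, and then absorb the $\lambda$-term, which is just the paper's rearrangement written in a slightly different order. Your remark that the argument implicitly requires $w$ to be feasible matches the paper's phrasing ("for any feasible $w$") and its use case for the constrained empirical minimizer.
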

\begin{proof}
We adopt the shorthand $\E$ for $\E_{(X,Y)}$.  We can write (two
times) the excess risk as
\begin{align}
2\Big[\L(w) - \L(w^\star) \Big] & = \E \left( \smlinprod{X}{w} -
y\right)^2 - \E \left( \smlinprod{X}{w}^\star - y\right)^2 \nonumber \\
& = \E \Big[ \left( \smlinprod{X}{w} - y\right) - \left(
  \smlinprod{X}{w}^\star - Y \right) \Big] \Big[ \smlinprod{X}{w} - Y
  + \smlinprod{X}{w}^\star - Y \Big] \nonumber \\
& = \E \Big[ X^\top \left( w - w^\star \right) \Big] \Big[
  \smlinprod{X}{w} - Y + \smlinprod{X}{w}^\star - y \Big] \nonumber \\
& = \E \Big[ X^\top \left( w - w^\star \right) \Big]\Big[ X^\top (w -
  w^\star) + \smlinprod{X}{w}^\star - y + \smlinprod{X}{w^\star} - Y
  \Big] \nonumber \\
\label{EqnInter}
& = \left( w - w^\star \right)^\top \E \left( X X^\top\right) \left( w
- w^\star \right) + 2\E \Big[ x^\top\left( w - w^\star \right)
  \Big]\Big[ \smlinprod{X}{w^\star} - Y \Big].
\end{align}
The optimality condition for $\wstar$ reads
\begin{align*}
\E \Big[ \big( \smlinprod{X}{\wstar} - Y \big) X^\top \Big] (w -
w^\star) \geq 0 \qquad \mbox{for any feasible $w$.}
\end{align*}
Applying this inequality to equation~\eqref{EqnInter} yields
\begin{align*}
2\Big[\L(w) - \L(w^\star) \Big] & \geq \left( w - w^\star \right)^\top
\E \left( X X^\top\right)\left( w - w^\star \right) \\
& = \left( w - w^\star \right)^\top \Big[ \E X X^\top + \frac{
    \lambda}{M} I \Big]\left( w - w^\star \right) - \frac{ \lambda}{M}
\| w - w^\star \|_2^2,
\end{align*}
as claimed.
\end{proof}




\end{document}